
\documentclass{article}

\usepackage{microtype}
\usepackage{graphicx}
\usepackage{subfigure}
\usepackage{booktabs} 

\usepackage{hyperref}
\usepackage{dsfont}




\usepackage[accepted]{icml2025}

\usepackage{amsmath}
\usepackage{amssymb}
\usepackage{mathtools}
\usepackage{amsthm}
\usepackage{mathnotations1}
\usepackage[capitalize,noabbrev]{cleveref}
\usepackage{xcolor,colortbl} 
\usepackage{tikz}
	\usetikzlibrary{positioning}
	\definecolor{DarkGreen}{rgb}{0.2,0.6,0.2}
	\definecolor{DarkRed}{rgb}{0.6,0.2,0.2}
	\definecolor{DarkBlue}{rgb}{0.2,0.2,0.6}
	\definecolor{DarkPurple}{rgb}{0.4,0.2,0.4}   
\hypersetup{
        linktocpage=true,
        colorlinks=true,				
        linkcolor=DarkBlue,				
        citecolor=DarkBlue,				
        urlcolor=DarkBlue,			
    }

\theoremstyle{plain}
\newtheorem{theorem}{Theorem}[section]
\newtheorem{proposition}[theorem]{Proposition}
\newtheorem{lemma}[theorem]{Lemma}
\newtheorem{corollary}[theorem]{Corollary}
\theoremstyle{definition}
\newtheorem{claim}[theorem]{Claim}
\newtheorem{definition}[theorem]{Definition}
\newtheorem{assumption}[theorem]{Assumption}
\theoremstyle{remark}
\newtheorem{remark}[theorem]{Remark}

\usepackage[textsize=tiny]{todonotes}

\icmltitlerunning{A Unified Theoretical Analysis of Private and Robust Offline Alignment: 
from RLHF to DPO}

\begin{document}

\twocolumn[
\icmltitle{A Unified Theoretical Analysis of Private and Robust Offline Alignment: \\
from RLHF to DPO}



\icmlsetsymbol{equal}{*}

\begin{icmlauthorlist}
\icmlauthor{Xingyu Zhou}{yyy}
\icmlauthor{Yulian Wu}{xxx}
\icmlauthor{Francesco Orabona}{xxx}
\end{icmlauthorlist}

\icmlaffiliation{yyy}{Wayne State University, USA}
\icmlaffiliation{xxx}{King Abdullah University of Science and Technology, Saudi Arabia}

\icmlcorrespondingauthor{Xingyu Zhou}{xingyu.zhou@wayne.edu}

\icmlkeywords{Machine Learning, ICML}

\vskip 0.3in
]



\printAffiliationsAndNotice{} 

\begin{abstract}
In this paper, we theoretically investigate the effects of noisy labels in offline alignment, with a focus on the interplay between privacy and robustness against adversarial corruption. Specifically, under linear modeling assumptions, we present a unified analysis covering both reinforcement learning from human feedback (RLHF) and direct preference optimization (DPO) under different privacy-corruption scenarios, such as Local differential privacy-then-Corruption (LTC), where human preference labels are privatized before being corrupted by an adversary, and Corruption-then-Local differential privacy (CTL), where labels are corrupted before privacy protection. Our analysis leverages a reduction framework that reduces the offline alignment problem under linear modeling assumptions to parameter estimation in logistic regression. This framework allows us to establish an interesting separation result between LTC and CTL, demonstrating that LTC presents a greater challenge than CTL in offline alignment, even under linear models. As important by-products, our findings also advance the state-of-the-art theoretical results in offline alignment under privacy-only or corruption-only scenarios. 
\end{abstract}
\section{Introduction}
The alignment training process in language models that utilizes a human-labeled preference dataset has been instrumental in producing more helpful, harmless, and honest responses~\cite{bai2022training}.   Leveraging an offline preference dataset, two prominent paradigms have emerged. The first is the \emph{indirect} approach, such as Reinforcement Learning from Human Feedback (RLHF)~\cite{ziegler2019fine,ouyang2022training}, which learns an intermediate reward model before optimizing the policy. The second is the \emph{direct} approach, exemplified by Direct Preference Optimization (DPO)~\cite{rafailov2024direct}, which directly optimizes the policy via supervised learning on the preference dataset.

It is clear that the performance of both RLHF and DPO is significantly influenced by the quality of the preference labels in the dataset. However, in practice, these labels are often noisy due to various factors~\cite{lambert2023history}. One potential noise source is corruption or misspecification during label generation or data collection, e.g., data poisoning attack~\cite{casper2023open}. Additionally, privacy concerns in human preference (as illustrated in~\citet{feng2024exposing}) may prompt individuals to provide noisy or privatized preferences rather than their true rankings. 

From a theoretical perspective, understanding the impact of these noisy labels---resulting from both corruption and privacy---is essential for improving offline alignment. Recent studies have made some initial attempts to address this issue~\cite{mandal2024corruption,chowdhury2024provably,chowdhury2023differentially,bukharin2024robust}, but they face two fundamental limitations: (1) They often treat corruption and privacy separately and focus exclusively on either RLHF or DPO, while, in practice, noisy labels can stem from both factors simultaneously; (2) The theoretical guarantees provided by these studies are often suboptimal, even when privacy and corruption are separately considered.
Motivated by these limitations and practical scenarios, we are particularly interested in the following question:
\vspace{-2mm}
\begin{center} 
\emph{Can we provide a unified analysis of the interplay between privacy and robustness in both RLHF and DPO?} \end{center}
\vspace{-2mm}

We provide an affirmative answer to the above question by presenting the following contributions:

\noindent\textbf{1. A Unified Theoretical Framework.} We present a unified theoretical framework for analyzing the interplay between privacy and robustness in offline alignment, covering both RLHF and DPO. Specifically, for privacy protection, we consider \emph{Local Differential Privacy (LDP)}~\cite{kasiviswanathan2011can,duchi2013local} for preference labels, while for robustness, we consider the \emph{strong adversary corruption} model~\cite{diakonikolas2023algorithmic}, where an adaptively chosen fraction of labels can be corrupted. Our framework can \emph{simultaneously} handle three privacy-corruption scenarios for both RLHF and DPO: Corruption-then-LDP (CTL), LDP-then-Corruption (LTC), and Corruption-LDP-Corruption (CLC), capturing different ways privacy and corruption may interact in practice.

\noindent\textbf{2. Reduction to Logistic Regression.} Our unified analytical framework leverages a reduction that transforms the offline alignment problem, under certain linear modeling assumptions, into parameter estimation in logistic regression. This reduction enables us to establish suboptimality bounds for both RLHF and DPO by focusing on parameter estimation in logistic regression under private and corrupted labels across different scenarios. Moreover, it highlights key differences between RLHF and DPO, providing insights into practical design considerations.

\noindent\textbf{3. Separation between CTL and LTC.} A key takeaway from our study of the interplay between privacy and robustness to corruption is that LTC is a more challenging setting than CTL, illustrating that the order in which privacy and corruption interact with each other significantly impacts the performance of offline alignment.

\noindent\textbf{4. New State-of-the-art Guarantees.} Our results, when reduced to privacy-only or corruption-only settings, set new state-of-the-art results on theoretical guarantees for RLHF and DPO. For instance, for DPO under ``corrupted'' labels, our result is the first one that achieves $\mathcal{O}(1/\sqrt{n})$ rate (where $n$ is the size of preference dataset), matching the standard rate without noise.   Additionally, as a by-product of our reduction approach, we provide the first results on parameter estimation error in logistic regression under both private and corrupted labels, which may be of independent interest.

Finally, we remark that, as in many previous related works, e.g., ~\citet{zhu2023principled,chowdhury2023differentially}, we consider linear modeling assumptions for the sake of theoretical analysis. However, we believe that our results could serve as important benchmarks for more general function classes. In fact, we have also verified our separation result between CTL and LTC in the general case via experiments on GPT2-large, see Appendix~\ref{app:exp} for a detailed discussion.

\section{Related Work}
In the main body, we only focus on the most related work on robust and private offline alignment, while relegating an additional discussion to Appendix~\ref{app:related}. 

\noindent\textbf{Provably robust alignment under corruption.}
\citet{mandal2024corruption} considers offline RLHF with corrupted preference datasets and establishes upper bounds on the suboptimality gap under various coverage assumptions of the offline dataset. As will be discussed in Section~\ref{sec:RLHF}, their results are either suboptimal or lack rigor due to gaps in their proof. For robust DPO, \citet{chowdhury2024provably} considers a strictly weaker corruption model and derives a suboptimality bound of rate $\mathcal{O}(1/n^{1/4})$. In contrast, our general result, when reduced to the same corruption model, achieves a better rate of $\mathcal{O}(1/\sqrt{n})$. \citet{bukharin2024robust} also considers a specific corruption model in the label generation process of RLHF but only provides the estimation error of the reward model, without a performance guarantee for the final policy. 

\noindent\textbf{Provably Private Alignment.} The most related work in this aspect is~\citet{chowdhury2023differentially}, which mainly focuses on the reward model estimation in RLHF under various privacy constraints (i.e., local and central label differential privacy). Our intermediate result on estimation error (Section~\ref{sec:est}) recovers the one in~\citet{chowdhury2023differentially} when the corruption parameter is set to zero. Moreover, compared to the \emph{implicit} suboptimality bound in~\citet{chowdhury2023differentially}, we provide the first explicit bound in terms of the \emph{relative condition number}~\cite{agarwal2021theory}, which parallels similar results in standard (robust) offline RL~\cite{zhang2022corruption}, i.e., reward-based rather than preference-based.



\section{Preliminaries}\label{sec:Prelimi}
\textbf{Background on Offline Alignment.} The goal of offline alignment is to further tune the Supervised Fine-Tuning (SFT) model to match human preferences using an offline preference dataset.
The preference dataset $\cD=(s_i,a^0_i,a^1_i,y_i)_{i=1}^{n}$ consists of $n$ samples, each has one context/state $s_i$ (e.g., prompt), two actions $a_i^0, a_i^1$ (e.g., two answers from language models) and label/preference feedback $y_i \in \{0,1\}$ indicating which one is preferred by humans. 
We assume $s_i$ to be sampled independently from a distribution $\rho$.
A widely used approach for
modeling $y_i$ is Bradley-Terry model~\cite{bradley1952rank}:
\begin{equation}
\label{eq:btl}
    \mathbb{P}\left\{ y_i=l |s_i, a_i^0, a_i^1\right\} = \tfrac{\exp(r^{\star}(s_i,a_i^l))}{\exp(r^{\star}(s_i,a_i^0)) + \exp(r^{\star}(s_i,a_i^1))},
\end{equation}
for $l \in \{0,1\}$, where $r^{\star}(\cdot,\cdot)$ is a ground truth reward model.

Based on this preference dataset, offline alignment aims to learn a good policy $\hat{\pi}$. In particular, the performance of the learned policy $\hat{\pi}$ is evaluated by the suboptimality gap between $\hat{\pi}$ and a comparator policy $\pi^{\dagger}$, defined as 
\begin{equation}\label{eq:SubOpt}
    \mathrm{SubOpt}(\hat{\pi},\pi^{\dagger})=J(\pi^{\dagger}) - J(\hat{\pi} ),
\end{equation}
where $J(\pi):= \mathbb{E}_{s \sim \rho, a \sim \pi(\cdot \mid s)}\left[r^{\star}(s, a)\right]$ and $\pi^{\dagger}$ is not necessarily the optimal policy.



\noindent \textbf{RLHF and DPO.} As already mentioned, there are two major paradigms in alignment for finding $\hat{\pi}$: \emph{indirect} and \emph{direct} approaches. The former, exemplified by RLHF~\cite{ziegler2019fine}, involves an intermediate reward model learning process from preference dataset $\cD$ before the policy optimization. The latter, represented by DPO~\cite{rafailov2024direct}, employs a direct policy optimization, i.e., using a supervised-learning loss function to optimize the policy \emph{directly} over the preference dataset $\cD$.

\noindent \textbf{Privacy Protection in Human Feedback.} The preference signal $y_i$ in $\cD$ could reveal sensitive personal information~\cite{feng2024exposing, chowdhury2023differentially}, hence requiring a rigorous privacy protection. To this end, we consider the local label Differential Privacy (DP)~\cite{chaudhuri2011sample,ghazi2021deep}, which means that the learner now only has access to a privatized label rather than the raw one. More specifically, we have the following definition. 
\begin{definition}[Label DP in Local Model \cite{chowdhury2023differentially}]
\label{def:labeldp}
Let $\epsilon >0$ and $\delta \in [0,1]$. If each label is privatized by a local randomizer $\mathcal{R}$, which satisfies for any $y, y^{\prime}$ and any subset $S$ in the range of $\mathcal{R}$ that
\[
\mathbb{P}\{\mathcal{R}(y) \in S\} \le e^{\varepsilon} \cdot \mathbb{P}\{\mathcal{R}\left(y^{\prime}\right) \in S\}+\delta,
\]
then we say $\mathcal{R}$ is an $(\varepsilon, \delta)$-label differentially private local randomizer, and this privatized dataset is called label-private preference dataset. The entire alignment process that operates with the privatized dataset is said to satisfy local label DP. When $\delta = 0$, we simply say it is a $\epsilon$-local label DP. 
\end{definition}
\begin{remark}[Randomized Response]
Given the binary data of the true label, we would like to maintain the binary data property after privatization. Thus, we will adopt the standard randomized response mechanism~\cite{warner1965randomized} as our local randomizer, which essentially injects \emph{controllable} noise in labels by a random flipping. Here, by ``controllable,'' we mean the noise injection method, and noise level is under our control based on the privacy parameter $\epsilon$.
\end{remark}

\noindent \textbf{Corruption in Human Feedback.} The human feedback $y_i$ can often be noisy and even be corrupted in the source or during the data collection process, which deviates from the assumed true generation process in~\eqref{eq:btl}. To this end, the final learned policy $\hat{\pi}$ needs to be robust with respect to corruption in labels. We consider a corruption model similar to \emph{strong corruption model} from robust statistics literature~\cite{diakonikolas2023algorithmic}, which roughly says that an adversary can adaptively corrupt the labels of a fraction of samples, by inspecting the samples.

\begin{definition}[Label Corruption Model]
\label{def:Corrupt}
Let $\alpha \in [0,1/2]$. We consider an $\alpha$-corruption model: an adversary can \emph{inspect} the samples in a preference dataset of size $n$ and then assign any label value of $0$ or $1$ to at most $\alpha n$ samples.
\end{definition}

\noindent \textbf{Interplay between Privacy and Robustness.} One key theme of this paper is to study the interplay between privacy and robustness in offline alignment. In particular, we are interested in the impact of the \emph{order} between privacy protection and corruption in the labels on the suboptimality gap (cf.~\eqref{eq:SubOpt}), for both RLHF and DPO. To this end, we will mainly consider the following settings. 
\begin{definition}[CTL and LTC]
\label{def:3settings}
    Given a raw preference dataset $\cD=(s_i,a^0_i,a^1_i,y_i)_{i=1}^{n}$, we consider the following settings that differ in the order of privacy protection (see Definition~\ref{def:labeldp}) and corruption (see Definition~\ref{def:Corrupt}). In all cases, the final input dataset for the learning algorithm will be denoted by $\cD_{\text{in}}=(s_i,a^0_i,a^1_i,z_i)_{i=1}^{n}$.
    
    \noindent\textbf{Corruption-then-LDP (CTL)}: An adversary first corrupts the labels in $\cD$ to $\bar{y}_i$. Then, each label $\bar{y}_i$ is privatized by a local randomizer.
    
    \noindent\textbf{LDP-then-Corruption (LTC)}: Each label $y_i$  in $\cD$ is first privatized by a local randomizer, resulting in the private label $\widetilde{y}_i$. Then, the preference dataset with private labels is further corrupted by an adversary.
\end{definition}
\begin{remark}
    As a last setting, one may also consider the setting where corruption happens both before and after privacy protection, which turns out to be a simple combination of the results for CTL and LTC, hence omitted in our results. 
\end{remark}



\section{Reduction to Parameter Estimation} \label{sec:reduction}
In this section, we will show that the key to establishing the suboptimality guarantees in both RLHF and DPO is a tight parameter estimation in logistic regression, under certain modeling assumptions. This allows us to focus on a single-parameter estimation problem under different settings (i.e., CTL and LTC) for both RLHF and DPO. More importantly, this unified perspective also enables us to easily see the connection and difference between RLHF and DPO.

\textbf{Logistic Regression.} Recall that given a feature vector $x_i \in \Real^d$, under logistic regression, the label $y_i \in \{0,1\}$ is generated according to the following probability:
\begin{align}
\label{eq:logistic}
    \mathbb{P}\{ y_i=1 |x_i\}= \sigma\left(\inner{\bt}{
    x_i} \right),
\end{align}
where $\sigma(z) = \frac{1}{1 + e^{-z}}$ is the sigmoid function, $\bt \in \Real^d$ is the unknown true parameter and $\inner{\cdot}{\cdot}$ denotes the inner product of two vectors. 

\subsection{RLHF with a Linear Reward Model}
We show that when the reward model in~\eqref{eq:btl} is a linear function, the key to bounding the suboptimality gap in RLHF is the parameter estimation in a logistic regression problem. 
To start with, we formally state the linear reward model, following common definitions used in prior work~\cite{zhu2023principled,xiong2024iterative,cen2024value,chowdhury2023differentially,mandal2024corruption}. 
\begin{assumption}[Linear Reward with Boundedness]
\label{ass:lin-reward}
    We assume that the ground truth reward $r^\star$ is linear, i.e., $r^\star(s,a) = \inner{\mathbf{\phi}(s,a)}{\theta^\star}$, where $\phi(s,a): \cS \times \cA \to \Real^d$ is some known and fixed feature map and $\cS$, $\cA$ are the state space and the action space, respectively. We also assume the following standard boundedness conditions. For all $s \in \cS$ and $a\in \cA$, without loss of generality, we assume $\norm{\phi(s,a)} \le 1$. Moreover, we assume $\theta^\star \in \Theta_B = \{\theta \in \Real^d : \inner{{1}}{\theta} = 0, \norm{\theta} \le  B \}$, where the condition $\inner{{1}}{\theta} = 0$ is to ensure the identifiability of $\theta^\star$.
\end{assumption}

Under the above assumption, we consider the standard offline RLHF algorithm, but with an additional parameter $\eta$.
In particular, we consider two alternative outputs: When $\eta = 0$, the output policy is $\hat{\pi} = \argmax_{\pi} \hat{J}(\pi)$ where 
$\hat{J}(\pi) = \mathbb{E}_{s \sim \rho, a\sim \pi(\cdot|s)}[\inner{\hat{\theta}}{\phi(s,a)}]$, that is essentially a greedy algorithm with respect to an estimate $\hat{\theta}$; When $\eta = 1$, the output is $\hat{\pi} = \argmax_{\pi} \hat{J}(\pi)$, where the objective function is defined via the principle of pessimism \cite{zhu2023principled,jin2021pessimism,li2024settling} as
\begin{align*}
      \hat{J}(\pi) = &\min_{\theta \in \Theta(\hat{\theta},\lambda)} \mathbb{E}_{s \sim \rho, a\sim \pi(\cdot|s)}[\inner{\theta}{\phi(s,a)}] \\
      &- \mathbb{E}_{s \sim \rho, a\sim \pi_{\mathrm{ref}}(\cdot|s)}[\inner{\theta}{\phi(s,a)}],
  \end{align*}
by constructing a confidence set around an estimate $\hat{\theta}$:
  \begin{align*}
      \Theta(\hat{\theta},\lambda) = \left\{\theta \in \Theta_B \mid \norms{\hat{\theta} - \theta}_{\hat{\Sigma} + \lambda I} \le \Gamma(n,d,\delta,\lambda)\right\}~.
  \end{align*}
For completeness and due to space limitations, the full algorithm is given in Algorithm~\ref{alg:RLHF} in the Appendix~\ref{app:alg}.

Here, we use a reference policy $\pi_{\mathrm{ref}}$ because the confidence set only measures the uncertainty of the difference in reward. That is, it does not measure the uncertainty for a single state-action pair.

We have the following key theoretical result on Algorithm~\ref{alg:RLHF}, with its proof in Appendix~\ref{proof:RLHF-master}.

\begin{proposition}
\label{prop:RLHF-master}
    Under Assumption~\ref{ass:lin-reward}, the labels $\{y_i\}_{i\in [n]}$ in the preference dataset of RLHF follow the logistic regression model with $\bt = \theta^{\star}$ and $x_i = \phi(s_i,a_i^1)- \phi(s_i,a_i^0)$.  Algorithm~\ref{alg:RLHF} with $\eta = 0$ achieves
    \begin{align}
    \label{eq:2-conc}
         \mathrm{SubOpt}(\hat{\pi},\pi^{\star}) \le 2 \norm{\hat{\theta}-\bt}_2,
    \end{align}
    where $\pi^{\star} = \argmax_{\pi} J(\pi)$.
        Further, let $\hat{\Sigma}:=\frac{1}{n}\sum_i x_i x_i^{\top}$ and $\lambda >0$ and suppose with probability at least $1-\delta$ the estimate $\hat{\theta}$ satisfies
    \begin{align}
    \label{eq:w-conc}
        \norm{\hat{\theta} - \bt}_{\hat{\Sigma} + \lambda \bI} \le \Gamma(n,d,\delta,\lambda)~.
    \end{align}
    Then, setting $\eta = 1$ in Algorithm~\ref{alg:RLHF}, we have for any $\pi^{\dagger}$ and $\rho$, with probability at least  $1-\delta$, 
    \begin{align}
        &\mathrm{SubOpt}(\hat{\pi},\pi^{\dagger}) \le 2 \Gamma(n,d,\delta,\lambda) \nonumber \\
        &\quad \times \norm{\mathbb{E}_{s\sim \rho}[\phi(s,\pi^{\dagger}(s)) - \phi(s,\pi_{\mathrm{ref}}(s))] }_{(\hat{\Sigma} + \lambda I)^{-1}}, \label{eq:sub-RLHF}
    \end{align}
    for any reference policy $\pi_{\mathrm{ref}}$,  where we define $\phi(s, \pi(s)) := \mathbb{E}_{a \sim \pi(\cdot|s)}[\phi(s,a)]$.
\end{proposition}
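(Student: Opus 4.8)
The plan is to treat the three claims in sequence. For the \textbf{reduction to logistic regression}, I would substitute the linear reward $r^\star(s,a)=\inner{\phi(s,a)}{\theta^\star}$ into the Bradley--Terry model~\eqref{eq:btl} and divide numerator and denominator by $\exp(\inner{\phi(s_i,a_i^1)}{\theta^\star})$. This collapses $\mathbb{P}\{y_i=1\mid s_i,a_i^0,a_i^1\}$ to $\sigma\!\left(\inner{\phi(s_i,a_i^1)-\phi(s_i,a_i^0)}{\theta^\star}\right)$, which is exactly the logistic model~\eqref{eq:logistic} with $x_i=\phi(s_i,a_i^1)-\phi(s_i,a_i^0)$ and $\bt=\theta^\star$. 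This is a one-line computation requiring no further machinery.

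For the \textbf{greedy bound}~\eqref{eq:2-conc}, I would use the standard suboptimality decomposition. Writing $J(\pi)=\inner{\theta^\star}{\mathbb{E}_{s\sim\rho}[\phi(s,\pi(s))]}$ and $\hat J(\pi)=\inner{\hat\theta}{\mathbb{E}_{s\sim\rho}[\phi(s,\pi(s))]}$, I split $J(\pi^\star)-J(\hat\pi)$ into the three pieces $\bigl(J(\pi^\star)-\hat J(\pi^\star)\bigr)+\bigl(\hat J(\pi^\star)-\hat J(\hat\pi)\bigr)+\bigl(\hat J(\hat\pi)-J(\hat\pi)\bigr)$. The middle piece is $\le 0$ by optimality of $\hat\pi$ for $\hat J$. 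Each remaining piece equals $\pm\inner{\theta^\star-\hat\theta}{\mathbb{E}_{s\sim\rho}[\phi(s,\pi(s))]}$, and Cauchy--Schwarz together with $\norm{\phi(s,a)}\le 1$ from Assumption~\ref{ass:lin-reward} (via Jensen, $\norm{\mathbb{E}[\phi(s,\pi(s))]}\le 1$) bounds each by $\norm{\hat\theta-\theta^\star}_2$, giving the factor of $2$.

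The \textbf{pessimistic bound}~\eqref{eq:sub-RLHF} is the main technical step. First I condition on the event in~\eqref{eq:w-conc}, which holds with probability at least $1-\delta$ and guarantees $\theta^\star\in\Theta(\hat\theta,\lambda)$. Abbreviating $v_\pi:=\mathbb{E}_{s\sim\rho}[\phi(s,\pi(s))-\phi(s,\pi_{\mathrm{ref}}(s))]$, I note that $\hat J(\pi)=\min_{\theta\in\Theta(\hat\theta,\lambda)}\inner{\theta}{v_\pi}$ and that the $\pi_{\mathrm{ref}}$ contributions cancel in the suboptimality gap, so $\mathrm{SubOpt}(\hat\pi,\pi^\dagger)=\inner{\theta^\star}{v_{\pi^\dagger}}-\inner{\theta^\star}{v_{\hat\pi}}$. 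Applying the same three-term decomposition with $\hat J$ in place of the greedy estimate, optimality of $\hat\pi$ again kills the middle term; the term $\hat J(\hat\pi)-\inner{\theta^\star}{v_{\hat\pi}}\le 0$ because $\theta^\star$ lies in the set over which the defining minimum is taken; and it remains to bound $\inner{\theta^\star}{v_{\pi^\dagger}}-\hat J(\pi^\dagger)=\inner{\theta^\star-\theta_{\pi^\dagger}}{v_{\pi^\dagger}}$, where $\theta_{\pi^\dagger}$ is the minimizer defining $\hat J(\pi^\dagger)$.

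The crux is controlling this last inner product. Since both $\theta^\star$ and $\theta_{\pi^\dagger}$ lie in $\Theta(\hat\theta,\lambda)$, the triangle inequality in the $(\hat\Sigma+\lambda I)$-norm gives $\norm{\theta^\star-\theta_{\pi^\dagger}}_{\hat\Sigma+\lambda I}\le 2\Gamma$; then the generalized Cauchy--Schwarz inequality $\inner{u}{w}\le\norm{u}_{\hat\Sigma+\lambda I}\,\norm{w}_{(\hat\Sigma+\lambda I)^{-1}}$ (valid since $\hat\Sigma+\lambda I\succ 0$ for $\lambda>0$) with $u=\theta^\star-\theta_{\pi^\dagger}$ and $w=v_{\pi^\dagger}$ yields the stated bound $2\Gamma\,\norm{v_{\pi^\dagger}}_{(\hat\Sigma+\lambda I)^{-1}}$. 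I expect the only delicate points to be (i) cleanly cancelling the $\pi_{\mathrm{ref}}$ terms so that the comparator and the confidence set are expressed consistently through $v_\pi$, and (ii) obtaining the correct factor of $2$, which arises precisely because the confidence set is centered at $\hat\theta$ rather than at $\theta^\star$, so the triangle inequality contributes the doubling.
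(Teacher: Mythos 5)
Your proposal is correct and follows essentially the same route as the paper's proof: the same three-term decomposition with greedy optimality killing the middle term, pessimism (via $\theta^\star$ lying in the confidence set) killing the $\hat{\pi}$ term, and the remaining term bounded through confidence-set membership plus generalized Cauchy--Schwarz. The only cosmetic difference is that you bound $\norm{\theta^\star - \theta_{\pi^\dagger}}_{\hat{\Sigma}+\lambda I} \le 2\Gamma$ by the triangle inequality through $\hat{\theta}$ and apply Cauchy--Schwarz once, whereas the paper adds and subtracts the $\hat{\theta}$ term inside the inner product and applies Cauchy--Schwarz to each half separately---the two are equivalent.
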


We can further simplify the result in~\eqref{eq:sub-RLHF} by introducing the following \emph{relative condition number}, which can be viewed as the natural extension of standard one~\cite{zhang2022corruption,agarwal2021theory} to the RLHF setting. 
\begin{definition}[Relative Condition Number]
\label{def:kappa}
    For $\pi_1,\pi_2$ and a feature map $\phi$, we define $\psi(s,a,a')=\phi(s, a) - \phi(s,a')$ and $\Sigma_{\pi_1, \pi_2}$ as 
    \begin{align}
        \mathbb{E}_{s\sim \rho, a \sim \pi_1(\cdot|s), a'\sim \pi_2(\cdot|s)}\psi(s,a,a')\psi(s,a,a')^{\top}~.
        \label{eq:pop}
    \end{align}
    For any comparator policy $\pi^{\dagger}$ and any given reference policy $\pi_{\mathrm{ref}}$, we define
    \begin{align}
    \label{eq:kappa}
        \kappa(\pi^{\dagger}, \pi_{\mathrm{ref}}):= 
        \sup_{w \in \mathbb{R}^d} \ \frac{w^{\top} \Sigma_{\pi^{\dagger},\pi_{\mathrm{ref}}}^{\mathrm{diff}} w}{w^{\top} \Sigma_{\pi_{\mathrm{sft}}, \pi_{\mathrm{sft}}}^{\mathrm{diff}} w}~.
    \end{align}
\end{definition}

We can now simplify our previous suboptimality bound using the relative condition number above in the following corollary, with its proof given by Appendix~\ref{proof:cor-RLHF}.
\begin{corollary}
\label{cor:RLHF}
    Let the same assumption in Proposition~\ref{prop:RLHF-master} hold and further assume $\lambda \ge \Omega\left(\frac{d}{n} \cdot \ln (n/\delta)\right)$. For any given comparator policy $\pi^{\dagger}$ with $\kappa(\pi^{\dagger},\pi_{\mathrm{ref}}) < \infty$, we can upper bound~\eqref{eq:sub-RLHF} as follows:
    \begin{align*}
        \mathrm{SubOpt}(\hat{\pi},\pi^{\dagger}) \le 2\sqrt{3} \cdot \Gamma(n,d,\delta,\lambda) \cdot \sqrt{d \cdot \kappa(\pi^{\dagger}, \pi_{\mathrm{ref}})}.
    \end{align*}
\end{corollary}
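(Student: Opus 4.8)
The plan is to start from the bound~\eqref{eq:sub-RLHF} of Proposition~\ref{prop:RLHF-master} and control the weighted-norm factor $\norm{v}_{(\hat{\Sigma}+\lambda\bI)^{-1}}$, where $v := \mathbb{E}_{s\sim\rho}[\phi(s,\pi^{\dagger}(s)) - \phi(s,\pi_{\mathrm{ref}}(s))]$. The first observation is purely definitional: since $\phi(s,\pi(s)) = \mathbb{E}_{a\sim\pi(\cdot|s)}[\phi(s,a)]$ and $\psi(s,a,a') = \phi(s,a)-\phi(s,a')$, we can write $v = \mathbb{E}_{s\sim\rho,\,a\sim\pi^{\dagger},\,a'\sim\pi_{\mathrm{ref}}}[\psi(s,a,a')]$, so that $v$ is the mean of the random vector whose uncentered second moment is exactly $\Sigma^{\mathrm{diff}}_{\pi^{\dagger},\pi_{\mathrm{ref}}}$ of Definition~\ref{def:kappa}. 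Because a covariance matrix is positive semidefinite, Jensen's inequality yields $vv^{\top} \preceq \Sigma^{\mathrm{diff}}_{\pi^{\dagger},\pi_{\mathrm{ref}}}$.

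The core of the argument is a covariance-concentration step that replaces the empirical $\hat{\Sigma}$ by its population counterpart. Writing $\Sigma_{\mathrm{sft}} := \Sigma^{\mathrm{diff}}_{\pi_{\mathrm{sft}},\pi_{\mathrm{sft}}}$ and using that the offline samples are generated with $s_i\sim\rho$ and the two actions drawn from $\pi_{\mathrm{sft}}$, we have $\mathbb{E}[\hat{\Sigma}] = \Sigma_{\mathrm{sft}}$ (this is precisely why the denominator of $\kappa$ is anchored at $\pi_{\mathrm{sft}}$). I would then show that, under $\lambda \ge \Omega(\tfrac{d}{n}\ln(n/\delta))$, with probability at least $1-\delta$ one has $\hat{\Sigma} + \lambda\bI \succeq \tfrac13(\Sigma_{\mathrm{sft}} + \lambda\bI)$, hence $(\hat{\Sigma} + \lambda\bI)^{-1} \preceq 3(\Sigma_{\mathrm{sft}} + \lambda\bI)^{-1}$. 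Concretely, after whitening by $M := \Sigma_{\mathrm{sft}}+\lambda\bI$ and setting $\tilde{x}_i := M^{-1/2}x_i$ with $x_i = \phi(s_i,a_i^1)-\phi(s_i,a_i^0)$ and $\norm{\tilde{x}_i}^2 \le \norm{x_i}^2/\lambda \le 4/\lambda$, a matrix-Bernstein (or net-plus-scalar-Bernstein) bound on $\tfrac1n\sum_i(\tilde{x}_i\tilde{x}_i^{\top} - \mathbb{E}[\tilde{x}_i\tilde{x}_i^{\top}])$ gives $\norm{M^{-1/2}(\hat{\Sigma}-\Sigma_{\mathrm{sft}})M^{-1/2}}_{\mathrm{op}} \le \tfrac23$ exactly when $\lambda$ meets the stated lower bound, the dimension factor $d$ entering through the union bound over an $\epsilon$-net of the sphere. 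This produces $\norm{v}^2_{(\hat{\Sigma}+\lambda\bI)^{-1}} \le 3\,\norm{v}^2_{(\Sigma_{\mathrm{sft}}+\lambda\bI)^{-1}}$, and the factor $3$ is what ultimately yields the $\sqrt{3}$ in the statement.

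It remains to turn the population weighted norm into $d\cdot\kappa$. Combining $vv^{\top} \preceq \Sigma^{\mathrm{diff}}_{\pi^{\dagger},\pi_{\mathrm{ref}}}$ with the identity $\norm{v}^2_{(\Sigma_{\mathrm{sft}}+\lambda\bI)^{-1}} = \mathrm{tr}\big((\Sigma_{\mathrm{sft}}+\lambda\bI)^{-1}vv^{\top}\big)$, trace monotonicity against the PSD order gives $\norm{v}^2_{(\Sigma_{\mathrm{sft}}+\lambda\bI)^{-1}} \le \mathrm{tr}\big((\Sigma_{\mathrm{sft}}+\lambda\bI)^{-1}\Sigma^{\mathrm{diff}}_{\pi^{\dagger},\pi_{\mathrm{ref}}}\big)$. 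Now $\kappa(\pi^{\dagger},\pi_{\mathrm{ref}})<\infty$ is, by Definition~\ref{def:kappa}, equivalent to $\Sigma^{\mathrm{diff}}_{\pi^{\dagger},\pi_{\mathrm{ref}}} \preceq \kappa\,\Sigma_{\mathrm{sft}}$, so another application of trace monotonicity yields $\mathrm{tr}\big((\Sigma_{\mathrm{sft}}+\lambda\bI)^{-1}\Sigma^{\mathrm{diff}}_{\pi^{\dagger},\pi_{\mathrm{ref}}}\big) \le \kappa\,\mathrm{tr}\big((\Sigma_{\mathrm{sft}}+\lambda\bI)^{-1}\Sigma_{\mathrm{sft}}\big) = \kappa\sum_{j}\tfrac{\mu_j}{\mu_j+\lambda} \le d\kappa$, where $\mu_j \ge 0$ are the eigenvalues of $\Sigma_{\mathrm{sft}}$. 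Chaining the three steps gives $\norm{v}_{(\hat{\Sigma}+\lambda\bI)^{-1}} \le \sqrt{3d\kappa}$, and substituting into~\eqref{eq:sub-RLHF} produces the claimed bound $2\sqrt{3}\,\Gamma(n,d,\delta,\lambda)\sqrt{d\,\kappa(\pi^{\dagger},\pi_{\mathrm{ref}})}$.

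On difficulty: the Jensen step, the relative-condition-number inequality, and the trace/eigenvalue bound are routine linear algebra. The delicate part is the concentration step, namely securing the multiplicative guarantee $\hat{\Sigma}+\lambda\bI \succeq \tfrac13(\Sigma_{\mathrm{sft}}+\lambda\bI)$ with the precise constant and checking that the required tail matches $\lambda \ge \Omega(\tfrac{d}{n}\ln(n/\delta))$ rather than a weaker or stronger threshold. I expect this to be the main obstacle, and it most likely reuses a standard empirical-covariance lemma from the offline-RL literature.
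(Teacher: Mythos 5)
Your proposal is correct and follows essentially the same route as the paper's proof: the same empirical-covariance concentration lemma giving $(\hat{\Sigma}+\lambda I)^{-1} \preceq 3(\Sigma^{\mathrm{diff}}_{\pi_{\mathrm{sft}},\pi_{\mathrm{sft}}}+\lambda I)^{-1}$ under the stated $\lambda$ condition, the same use of the relative condition number via $\Sigma^{\mathrm{diff}}_{\pi^{\dagger},\pi_{\mathrm{ref}}} \preceq \kappa\,\Sigma^{\mathrm{diff}}_{\pi_{\mathrm{sft}},\pi_{\mathrm{sft}}}$, and the same Jensen-plus-trace argument producing the factor $d$. The only cosmetic difference is ordering: the paper applies Jensen after passing to the $\bigl(\Sigma^{\mathrm{diff}}_{\pi^{\dagger},\pi_{\mathrm{ref}}}\bigr)^{-1}$-weighted norm (implicitly assuming invertibility of that matrix), whereas your PSD-domination-then-trace-monotonicity ordering avoids that inverse entirely — a negligible refinement of the same argument.
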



\subsection{DPO with a Log-Linear Policy Class}
In this section, we will show that for a log-linear policy class (defined below), the suboptimality in DPO is also related to the parameter estimation in logistic regression.

We begin with a brief recap of DPO, following the original paper~\cite{rafailov2024direct}. The key idea is to reparameterize the reward model by the optimal policy of a KL-regularized problem. In particular, for the following KL-regularized optimization objective (with $\beta >0$)
\begin{align*}
    J_{\beta}(\pi) = \mathbb{E}_{s\sim \rho, a\sim \pi(\cdot|s)}\left[r^{\star}(s,a) - \beta \ln \frac{\pi(a |s)}{\pi_{\mathrm{sft}}(a|s)}\right],
\end{align*}
the optimal solution has the closed-form expression 
\begin{align}
\label{eq:pi-dpo}
    \pi^{\star}(a|s) = \frac{1}{Z_{\beta}(s)} \pi_{\mathrm{sft}}(a|s)\exp(r^{\star}(s,a)/\beta),
\end{align}
where $Z_{\beta}(s) = \sum_{a \in \cA} \pi_{\mathrm{sft}}(a|s)\exp(r^{\star}(s,a)/\beta)$ is the normalization factor. This allows us to rewrite the reward $r^{\star}$ in terms of $\pi^{\star}$ as follows
\begin{align}
\label{eq:rew-dpo}
    r^{\star}(s,a) = \beta \ln \frac{\pi^{\star}(a|s)}{\pi_{\mathrm{sft}}(a|s)} + \beta \ln Z_{\beta}(s)~.
\end{align}
With the above re-parametrization of the reward using policy in~\eqref{eq:rew-dpo} and BT preference model in~\eqref{eq:btl}, DPO~\cite{rafailov2024direct} directly minimizes the following log-loss function:
\begin{align}
\label{eq:DPO-loss}
    &\cL(\pi; \pi_{\mathrm{sft}})
    :=  \nonumber \\&-\sum_{i=1}^n \mathds{1}(y_i=0)\ln \sigma\left(\beta \ln \tfrac{\pi(a_i^0|s_i)}{\pi_{\mathrm{sft}}(a_i^0|s_i)} - \beta \ln \tfrac{\pi(a_i^1|s_i)}{\pi_{\mathrm{sft}}(a_i^1|s_i)}\right)\nonumber\\
    & -\sum_{i=1}^n \mathds{1}(y_i=1)\ln \sigma\left(\beta \ln \tfrac{\pi(a_i^1|s_i)}{\pi_{\mathrm{sft}}(a_i^1|s_i)} - \beta \ln \tfrac{\pi(a_i^0|s_i)}{\pi_{\mathrm{sft}}(a_i^0|s_i)}\right)~.
\end{align}

In this paper, we consider the log-linear policy class for the sake of theoretical analysis.  
\begin{assumption}[Log-linear Policy Class]
\label{ass:log-linear}
    We assume that the optimal policy in~\eqref{eq:pi-dpo} satisfies $\pi^{\star} \in \Pi$ and $\pi_{\mathrm{sft}} \in \Pi$ where
    \begin{align}
    \label{eq:log-linear}
        \Pi = \left\{ \pi_{\theta}(a|s) = \frac{\exp(\inner{\theta}{\phi(s,a)})}{\sum_{a' \in \cA} \exp(\inner{\theta}{\phi(s,a')})}\right\},
    \end{align}
    is the log-linear class for some known feature map $\phi(s,a): \cS \times \cA \to \Real^d$ with $\norm{\phi(s,a)}\le 1$. Moreover, $\theta^{\star}$ corresponding to $\pi^{\star}$ satisfies that  $\theta^\star \in \Theta_B = \{\theta \in \Real^d : \inner{{1}}{\theta} = 0, \norm{\theta} \le  B \}$, where the condition $\inner{{1}}{\theta} = 0$ is to ensure the identifiability of $\theta^\star$.
\end{assumption}
The above policy realizability assumption is equivalent to the reward model realizability. In particular, by plugging log-linear policy into~\eqref{eq:DPO-loss}, we can establish that the labels $y_i$ again follow from the logistic regression in~\eqref{eq:logistic} with proper choices of $\bt$ and $x_i$. In particular, we have the following formal statement, with its proof in Appendix~\ref{proof:dpo-master}.
\begin{proposition}
\label{prop:dpo-master}
    Under Assumption~\ref{ass:log-linear}, the labels $\{y_i\}_{i \in [n]}$ in the preference dataset of DPO follow the logistic regression model with $\bt = \beta(\theta^{\star} - \theta_{\mathrm{sft}})$ with $\beta >0$ and $x_i = \phi(s_i,a_i^1)- \phi(s_i,a_i^0)$. 
   Suppose with probability at least $1-\delta$, there exists an estimate $\hat{\theta}$ that satisfies
    \begin{align}
    \label{eq:wconc-dpo}
        \norm{\hat{\theta} - \bt}_{\hat{\Sigma} + \lambda \bI} \le \Gamma(n,d,\delta,\lambda),
    \end{align}
    where $\hat{\Sigma}:=\frac{1}{n}\sum_i x_i x_i^{\top}$ and $\lambda >0$. Then, let $\hat{\theta}' = \hat{\theta}/\beta + \theta_{\mathrm{sft}}$ and $\lambda \ge \Omega\left(\frac{d}{n} \cdot \ln (n/\delta)\right)$,  the corresponding policy $\hat{\pi} = \pi_{\hat{\theta}'}$ with probability at least $1-\delta$ satisfies 
    \begin{align*}
         \mathrm{SubOpt}(\hat{\pi},\pi^{\star})  \le \frac{\sqrt{3}}{\sqrt{2}} \cdot \sqrt{\kappa_{\Pi}} \cdot B \cdot \Gamma(n,d,\delta,\lambda),
    \end{align*}
    where 
    $\kappa_{\Pi} := \max_{\pi \in \Pi} \kappa(\pi,\pi)$ is the maximum relative condition number across the entire policy class.
\end{proposition}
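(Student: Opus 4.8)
I would prove the statement in two stages: first verify the logistic-regression reduction, then convert the estimation guarantee~\eqref{eq:wconc-dpo} into a value gap via a linearization (smoothness) argument on the log-linear policy map, controlled by the relative condition number. For the reduction, combine the Bradley--Terry model~\eqref{eq:btl} with the reparametrization~\eqref{eq:rew-dpo} to get $\mathbb{P}\{y_i=1\}=\sigma\!\left(r^{\star}(s_i,a_i^1)-r^{\star}(s_i,a_i^0)\right)$, where the additive term $\beta\ln Z_{\beta}(s_i)$ cancels in the difference. Plugging the log-linear forms $\pi^{\star}=\pi_{\theta^{\star}}$ and $\pi_{\mathrm{sft}}=\pi_{\theta_{\mathrm{sft}}}$ into $\ln\tfrac{\pi^{\star}(a|s)}{\pi_{\mathrm{sft}}(a|s)}$, the log-partition functions $\ln Z_{\theta^{\star}}(s)$ and $\ln Z_{\theta_{\mathrm{sft}}}(s)$ are action-independent and cancel as well, leaving $r^{\star}(s_i,a_i^1)-r^{\star}(s_i,a_i^0)=\inner{\beta(\theta^{\star}-\theta_{\mathrm{sft}})}{x_i}$ with $x_i=\phi(s_i,a_i^1)-\phi(s_i,a_i^0)$. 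This is exactly~\eqref{eq:logistic} with $\bt=\beta(\theta^{\star}-\theta_{\mathrm{sft}})$.

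For the suboptimality bound, first note that the same computation gives $r^{\star}(s,a)=\inner{\bt}{\phi(s,a)}$ plus a term depending only on $s$; since the comparison $J(\pi^{\star})-J(\hat{\pi})$ is taken under the common draw $s\sim\rho$, this $s$-only term cancels, yielding
\[
\mathrm{SubOpt}(\hat{\pi},\pi^{\star})=\inner{\bt}{\mathbb{E}_{s\sim\rho}[\phi(s,\pi^{\star}(s))-\phi(s,\hat{\pi}(s))]}.
\]
Both policies are log-linear, and the map $\theta\mapsto\phi(s,\pi_{\theta}(s))=\mathbb{E}_{a\sim\pi_{\theta}}[\phi(s,a)]$ has Jacobian $\mathrm{Cov}_{a\sim\pi_{\theta}}[\phi(s,a)]$. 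By the fundamental theorem of calculus along $\theta_t=\hat{\theta}'+t(\theta^{\star}-\hat{\theta}')$, the feature difference equals $M(\theta^{\star}-\hat{\theta}')$, where
\[
M:=\mathbb{E}_{s\sim\rho}\int_0^1\mathrm{Cov}_{a\sim\pi_{\theta_t}}[\phi(s,a)]\,dt .
\]
Because $\theta^{\star}-\hat{\theta}'=-(\hat{\theta}-\bt)/\beta$ while $\bt=\beta(\theta^{\star}-\theta_{\mathrm{sft}})$, the factors of $\beta$ cancel and $\mathrm{SubOpt}(\hat{\pi},\pi^{\star})=(\theta^{\star}-\theta_{\mathrm{sft}})^{\top}M(\bt-\hat{\theta})$.

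I would then apply Cauchy--Schwarz in the $M$-seminorm, $\mathrm{SubOpt}(\hat{\pi},\pi^{\star})\le\norm{\theta^{\star}-\theta_{\mathrm{sft}}}_M\,\norm{\bt-\hat{\theta}}_M$, and split the two factors so that only one of them pays the relative condition number (this is what produces $\sqrt{\kappa_{\Pi}}$ rather than $\kappa_{\Pi}$). For the estimation factor, the identity $\Sigma^{\mathrm{diff}}_{\pi,\pi}=2\,\mathbb{E}_{s}\,\mathrm{Cov}_{a\sim\pi}[\phi]$ gives $M=\tfrac12\int_0^1\Sigma^{\mathrm{diff}}_{\pi_{\theta_t},\pi_{\theta_t}}\,dt\preceq\tfrac12\kappa_{\Pi}\,\Sigma^{\mathrm{diff}}_{\pi_{\mathrm{sft}},\pi_{\mathrm{sft}}}$, since each $\pi_{\theta_t}\in\Pi$ so Definition~\ref{def:kappa} applies along the whole path. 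A matrix concentration bound $\Sigma^{\mathrm{diff}}_{\pi_{\mathrm{sft}},\pi_{\mathrm{sft}}}\preceq 3(\hat{\Sigma}+\lambda I)$, holding with probability at least $1-\delta$ under $\lambda\ge\Omega\!\left(\tfrac{d}{n}\ln(n/\delta)\right)$ and using $\mathbb{E}[\hat{\Sigma}]=\Sigma^{\mathrm{diff}}_{\pi_{\mathrm{sft}},\pi_{\mathrm{sft}}}$ when the two responses are drawn i.i.d.\ from $\pi_{\mathrm{sft}}$, then yields $\norm{\bt-\hat{\theta}}_M^2\le\tfrac32\kappa_{\Pi}\,\norm{\hat{\theta}-\bt}^2_{\hat{\Sigma}+\lambda I}\le\tfrac32\kappa_{\Pi}\,\Gamma^2$. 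For the reward-scale factor I would use the crude bound $\mathrm{Cov}_{a\sim\pi_{\theta}}[\phi]\preceq\mathbb{E}[\phi\phi^{\top}]\preceq I$, so $M\preceq I$ and $\norm{\theta^{\star}-\theta_{\mathrm{sft}}}_M\le\norm{\theta^{\star}-\theta_{\mathrm{sft}}}\le B$ by the boundedness in Assumption~\ref{ass:log-linear}. Multiplying the two factors gives the claimed $\tfrac{\sqrt{3}}{\sqrt{2}}\sqrt{\kappa_{\Pi}}\,B\,\Gamma$.

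The main obstacle is the middle linearization step: one must rigorously pass from the value gap to the parameter gap through the integrated covariance $M$, and then argue that the pointwise comparison $\Sigma^{\mathrm{diff}}_{\pi_{\theta_t},\pi_{\theta_t}}\preceq\kappa_{\Pi}\Sigma^{\mathrm{diff}}_{\pi_{\mathrm{sft}},\pi_{\mathrm{sft}}}$ holds \emph{uniformly} over the interpolation path $t\in[0,1]$, which is exactly where $\kappa_{\Pi}=\max_{\pi\in\Pi}\kappa(\pi,\pi)$ (rather than a single $\kappa(\pi^{\star},\pi_{\mathrm{sft}})$) becomes necessary. The remaining care is in the matrix concentration producing the clean constant $3$ (hence $\sqrt{3}$) and in tracking the $\beta$-cancellation so that neither $\beta$ nor an extra $\sqrt{d}$ appears in the final bound.
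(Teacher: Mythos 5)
Your proposal is correct and arrives at exactly the paper's constant $\tfrac{\sqrt{3}}{\sqrt{2}}\sqrt{\kappa_{\Pi}}\,B\,\Gamma$, but it takes a genuinely different route for the value-gap step. The paper's proof goes through an information-theoretic chain: it bounds $\mathrm{SubOpt}(\hat{\pi},\pi^{\star}) \le \Delta_{\max}\,\mathbb{E}_{s\sim\rho}[\mathrm{TV}(\pi^{\star}(\cdot|s),\hat{\pi}(\cdot|s))]$ with $\Delta_{\max}\le 2\beta B$, applies Pinsker and Jensen to pass to $\sqrt{\mathbb{E}_s[\mathrm{KL}]}$, and then Taylor-expands the KL between log-linear policies so that the Fisher information (the feature covariance $A_s(\theta)$) appears, after which $\mathbb{E}_s[A_s(\theta)]=\tfrac12\Sigma^{\mathrm{diff}}_{\pi_\theta,\pi_\theta}$, the definition of $\kappa_\Pi$, and the covariance-concentration lemma finish the argument. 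You instead linearize the map $\theta\mapsto\mathbb{E}_{a\sim\pi_\theta}[\phi(s,a)]$ directly via the fundamental theorem of calculus, obtaining the exact bilinear identity $\mathrm{SubOpt}=(\theta^{\star}-\theta_{\mathrm{sft}})^{\top}M(\bt-\hat{\theta})$ with $M$ the path-integrated covariance, and then split by Cauchy--Schwarz in the $M$-seminorm, paying $\kappa_\Pi$ and the concentration factor only on the estimation side and using the crude bound $M\preceq I$ on the scale side. Your route buys two things: it avoids the Pinsker/Jensen detour entirely (only one inequality, Cauchy--Schwarz, before the norm conversions), and it handles the interpolation more cleanly --- for each fixed $t$ the parameter $\theta_t$ is independent of $s$, so $\mathbb{E}_s[\mathrm{Cov}_{a\sim\pi_{\theta_t}}]=\tfrac12\Sigma^{\mathrm{diff}}_{\pi_{\theta_t},\pi_{\theta_t}}$ holds exactly and the $\kappa_\Pi$ comparison applies pointwise in $t$, whereas the paper's mean-value Taylor point implicitly depends on $s$, making its identification with a single $\Sigma^{\mathrm{diff}}_{\pi_\theta,\pi_\theta}$ slightly informal (fixable by the integral remainder, as your argument effectively shows). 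Both proofs share the same two unstated-but-needed ingredients: the bound $\norm{\theta^{\star}-\theta_{\mathrm{sft}}}\le B$ (you use it via $M\preceq I$; the paper uses it to get $\Delta_{\max}\le 2\beta B$), and the fact that the responses are drawn i.i.d.\ from $\pi_{\mathrm{sft}}$ so that $\Sigma^{\mathrm{diff}}_{\pi_{\mathrm{sft}},\pi_{\mathrm{sft}}}$ is the population counterpart of $\hat{\Sigma}$, as required by the concentration lemma.
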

\begin{remark}
    One can also rewrite the above bound using the maximum value of the implicit reward function, $r_{\max}$ as
    \begin{align*}
         \mathrm{SubOpt}(\hat{\pi},\pi^{\star})  \le c \cdot \sqrt{\kappa_{\Pi}} \cdot \frac{r_{\max}}{\beta}\cdot \Gamma(n,d,\delta,\lambda),
    \end{align*}
    for some constant $c >0$ and log-linear policy $\Pi$.
\end{remark}

\begin{remark}[single-policy vs. all-policy concentrability]
One nice thing about the above reduction is that it allows us to easily see the key difference between RLHF and DPO. In particular, 
    from Corollary~\ref{cor:RLHF} and Proposition~\ref{prop:dpo-master}, we can see that the key (and only) difference lies in the choice of relative condition number (especially when considering the typical scaling of $B = \mathcal{O}(\sqrt{d})$ for the parameter), which is also closely related to the ``concentratability coefficient'' in offline RL~\cite{munos2007performance,jin2021pessimism}. In particular, due to the use of pessimism in offline RLHF, one can achieve a bound in terms of $\kappa(\pi^{\dagger}, \pi_{\mathrm{ref}})$, which is related to the ``single-policy concentratability''~\cite{rashidinejad2021bridging,jin2021pessimism} for \emph{any} comparator policy $\pi^{\dagger}$. On the other hand, due to the lack of uncertainty characterization in DPO, one needs ``all-policy concentratability''~\cite{chen2019information} $\kappa_{\Pi}$ in the upper bound, which is often much larger. In fact, this kind of dependence in standard DPO is shown to be necessary~\cite{song2024importance}. 
\end{remark}

\section{Parameter Estimation Under Private and Corrupted Labels}
\label{sec:est}
As motivated by the last section, we now turn to designing algorithms for providing label privacy while accurately estimating the unknown parameter $\bt$ in logistic regression, even under corrupted labels. As we will see, the key to the design is a new loss function, which allows us to adaptively handle the privacy-robustness interplays in a unified way.
To facilitate the upcoming discussion, we formally state the general problem setup for logistic regression under private and corrupted labels. 

\begin{definition}[Private and robust parameter estimation problem]
\label{def:problem}
    Let $\cD$ be a dataset of i.i.d samples $\{x_i, y_i\}_{i=1}^n$ where $x_i \sim \mu$ and $y_i$ follows from the logistic regression model in~\eqref{eq:logistic}. The input dataset $\cD_{\text{in}} = \{x_i, z_i\}_{i=1}^n$ is the private and corrupted version of $\cD$, following Definition~\ref{def:3settings}. The goal here is to design a local randomizer $\cR$ for privatizing labels (cf. Definition~\ref{def:labeldp}) as well as an analyzer $\cA$ that receives $\cD_{\text{in}}$ outputs an estimate $\hat{\theta}$ that is close to the underlying true parameter $\bt$, measured by a proper choice of norm. We assume the following boundedness conditions: for any $i \in [n]$, $\norm{x_i} \le 1$ and $\bt \in \Theta_{B'} = \{\theta \in \Real^d : \inner{{1}}{\theta} = 0, \norm{\theta} \le  B' \}$.
\end{definition}
\begin{remark}
    The boundedness assumption essentially follows from the reduction in the last section. Here, we assume $\norm{x_i} \le 1$ rather than upper bounded by $2$ for simplicity and $B'$ can be properly chosen for RLHF and DPO, respectively. 
\end{remark}

\subsection{Our Algorithm}

\begin{algorithm}[!t]
\caption{Private and Robust Estimation}
\label{alg:main}
\begin{algorithmic}[1]
\STATE {\bfseries Procedure:} $\epsilon$-local label DP mechanism $\cR$
\STATE{\textcolor{DarkBlue}{\texttt{//Input: ~$U_i \in \{0,1\}$, parameter: $\epsilon$ }}}
\STATE Random response: 
$\widetilde{U}_i = \begin{cases}
U_i  &w.p. \, \frac{e^{\epsilon}}{e^{\epsilon}+1}\\
1 - U_i  &w.p. \, 
\frac{1}{e^{\epsilon}+1}
\end{cases}$
\STATE {\bfseries Return} $\widetilde{U}_i$
\STATE {\bfseries Procedure:} Analyzer $\mathcal{A}$
\STATE{\textcolor{DarkBlue}{\texttt{//Input:~$\{(x_i,z_i)\}_{i=1}^n$, parameter:~$\epsilon$}}}
\STATE Let $c(\epsilon) = \frac{1}{2\sigma(\epsilon)-1} = \frac{e^{\epsilon}+1}{e^{\epsilon}-1}$
\STATE Compute $\hat{\theta}=\argmin_{\theta \in \Theta_{B'}(\theta)} \ -\frac{1}{n}\sum_{i=1}^n \widetilde{\ell}_i(\theta)$
where 
\[\widetilde{\ell}_i(\theta)=\ln(1-\sigma(\theta^{\top}x_i)) + (z_i  + \sigma(\epsilon)-1) c(\epsilon) \theta^{\top}x_i\]
\STATE {\bfseries Return}  $\hat{\theta}$
\end{algorithmic}
\end{algorithm}

As mentioned, our choice of local randomizer $\cR$ for privacy protection is the simple Random Response (RR) mechanism with parameter $\epsilon >0$~\cite{warner1965randomized}. That is, the binary output from RR equals the input with probability $\sigma(\epsilon) = \frac{e^{\epsilon}}{1+e^{\epsilon}}$; otherwise, the privatized binary output differs from the input.
RR satisfies the $\epsilon$-local label DP guarantee (cf. Definition~\ref{def:labeldp}) ~\cite{dwork2014algorithmic}. 

We now turn to the design of the analyzer $\cA$, which is responsible for outputting an estimate $\hat{\theta}$.
We first point out that in the non-private non-corrupted case, the standard maximum likelihood
estimator (MLE) that minimizes the loss function $\cL(\theta) = -\frac{1}{n}\sum_{i=1}^n \ell_i(\theta)$ enjoys a good concentration~\cite{zhu2023principled} with respect to $\bt$, where $\ell_i(\theta)$ is the standard log-loss:
\begin{align*}
   \ell_i(\theta) 
   &= y_i \log(\sigma(\theta^{\top}x_i)) + (1-y_i)\log(1-\sigma(\theta^{\top}x_i)) \\
   &= \log(1-\sigma(\theta^{\top}x_i)) + y_i \theta^{\top}x_i.
\end{align*}

However, due to the private labels, our analyzer is designed to minimize a new loss 
$\widetilde{\cL}(\theta) = -\frac{1}{n}\sum_{i=1}^n \widetilde{\ell}_i(\theta)$
where 
\begin{equation}
\widetilde{\ell}_i(\theta) = \ln(1-\sigma(\theta^{\top}x_i)) + (z_i + \sigma(\epsilon)-1) c(\epsilon) \theta^{\top}x_i, \label{eq:keyloss}    
\end{equation}
and $c(\epsilon):= \frac{1}{2\sigma(\epsilon)-1} = \frac{e^{\epsilon}+1}{e^{\epsilon}-1}$.
The key difference lies in the ``shifting and scaling'' of the received labels $z_i$, which, in fact, enjoys exactly the same ``shifting and scaling'' intuition as in mean estimation under RR, i.e., it is an unbiased estimate. 
Putting the above choices of $\cR$ and $\cA$ together, yields the final Algorithm~\ref{alg:main} above. 
\begin{remark}
    We remark that a similar loss (up to some scaling) has been considered in \citet{chowdhury2023differentially,chowdhury2024provably}. However, they are motivated from a different perspective (e.g., logit) rather than our connection to standard mean estimation under RR for local privacy (i.e., shifting and scaling). The form we use here in~\eqref{eq:keyloss} has not appeared before. This new form not only makes it easy to see that our new loss is an unbiased estimate of the standard log loss, but also allows us to easily show that our single algorithm is \emph{adaptive} to different privacy-corruption settings, i.e., it does not know the specific setting in advance.
\end{remark}

\subsection{Estimation Error Bounds}
In this section, we will establish the estimation error bounds achieved by Algorithm~\ref{alg:main}. 
Throughout this section, we will let $\bctl$, $\bltc$ be the estimates outputted by Algorithm~\ref{alg:main} under CTL and LTC respectively. 
Our first result is the following theorem, which characterizes the estimator error in terms of a weighted norm, with proof in Appendix~\ref{proof:est-semi}.
\begin{theorem}
\label{thm:est-semi}
    Consider the problem in Definition~\ref{def:problem}. For any $\epsilon >0$, $\alpha, \in [0,1/2)$, $\delta \in (0,1)$, and $\lambda > 0$, with probability at least $1-\delta$, the output of Algorithm~\ref{alg:main} achieves
    \begin{align*}
         &\norms{\bctl - \bt}_{\hat{\Sigma} + \lambda \bI} \le \Gamma_{\mathrm{CTL}} (n,d,\delta, \lambda)\\
         &\quad := C \left(\frac{\sqrt{\alpha}}{\gamma} +  \frac{c(\epsilon)}{\gamma}  \sqrt{\frac{d + \ln(1/\delta)}{n}} +  B'\sqrt{\lambda}\right),\\
         &\norms{\bltc - \bt}_{\hat{\Sigma} + \lambda \bI} \le \Gamma_{\mathrm{LTC}} (n,d,\delta, \lambda)\\
         &\quad := C \left(\frac{c(\epsilon)\sqrt{\alpha}}{\gamma} + \frac{c(\epsilon)}{\gamma} \sqrt{\frac{d + \ln(1/\delta)}{n}} +  B'\sqrt{\lambda}\right),
    \end{align*}
where $\hat{\Sigma} = \frac{1}{n}\sum_{i=1}^n x_i x_i^{\top}$, $c(\epsilon)= \frac{e^{\epsilon}+1}{e^{\epsilon}-1}$, $\gamma = 1/(2 + \exp(-B') + \exp(B'))$, and $C$ is a universal constant.
\end{theorem}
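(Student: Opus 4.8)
The plan is to establish both bounds by analyzing the empirical loss minimizer through the lens of the strong convexity of the population log-loss combined with a careful control of the gradient of our modified loss $\widetilde{\cL}$ at the true parameter $\bt$. The standard template for such $M$-estimation arguments proceeds as follows. First I would show that the population version of the standard log-loss is $\gamma$-strongly convex in the $\hat\Sigma$-weighted (semi-)norm, where $\gamma = 1/(2 + \exp(-B') + \exp(B'))$ arises as a lower bound on $\sigma'(\theta^\top x_i) = \sigma(\theta^\top x_i)(1-\sigma(\theta^\top x_i))$ over the bounded region $|\theta^\top x_i| \le B'$ (using $\norm{x_i}\le 1$ and $\norm{\theta}\le B'$). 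Because the Hessian of $\widetilde{\ell}_i$ coincides with that of the standard log-loss (the added term $(z_i + \sigma(\epsilon)-1)c(\epsilon)\theta^\top x_i$ is linear in $\theta$, hence contributes nothing to the Hessian), $\widetilde{\cL}$ inherits the same curvature. This reduces the whole problem to bounding $\norms{\nabla \widetilde{\cL}(\bt)}_{(\hat\Sigma + \lambda \bI)^{-1}}$, after which the optimality of $\hat\theta$ and strong convexity yield $\gamma \norms{\hat\theta - \bt}_{\hat\Sigma+\lambda\bI}^2 \lesssim \norms{\nabla\widetilde{\cL}(\bt)}_{(\hat\Sigma+\lambda\bI)^{-1}}\norms{\hat\theta-\bt}_{\hat\Sigma+\lambda\bI} + \lambda \text{(regularization terms)}$, giving the claimed form.

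The heart of the argument is decomposing the gradient $\nabla\widetilde{\cL}(\bt) = \frac{1}{n}\sum_i (\,-\sigma(\bt^\top x_i) + (z_i + \sigma(\epsilon)-1)c(\epsilon)\,) x_i$ into a stochastic/clean part and a corruption part, and here is where CTL and LTC diverge. The key observation is that under the clean randomized response, $\mathbb{E}[(\widetilde{y}_i + \sigma(\epsilon)-1)c(\epsilon) \mid x_i] = \sigma(\bt^\top x_i)$, so the shifting-and-scaling makes the modified label an unbiased estimate of the true logistic probability; this is exactly the design principle flagged in the remark after \eqref{eq:keyloss}. I would write $z_i = (\text{clean privatized value}) + (\text{corruption perturbation})$. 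On the uncorrupted samples the per-sample gradient contribution $(\,-\sigma(\bt^\top x_i) + (\widetilde{y}_i + \sigma(\epsilon)-1)c(\epsilon)\,)x_i$ is a bounded, mean-zero random vector whose scale is governed by $c(\epsilon)$; a standard matrix/vector concentration (e.g.\ a Bernstein-type bound in the $(\hat\Sigma+\lambda\bI)^{-1}$ norm, as in \citet{zhu2023principled,chowdhury2023differentially}) produces the $\frac{c(\epsilon)}{\gamma}\sqrt{(d+\ln(1/\delta))/n}$ term common to both bounds. The corrupted fraction contributes a deterministic bias term whose magnitude is controlled by $\sqrt{\alpha}$ (via Cauchy--Schwarz against the $\alpha n$ corrupted indices, bounding $\norms{\frac{1}{n}\sum_{i\in \text{corrupt}} c_i x_i}_{(\hat\Sigma+\lambda\bI)^{-1}}$ by $\sqrt{\alpha}$ times the per-sample bound).

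The crux — and the source of the CTL/LTC separation — is the magnitude of the per-corrupted-sample perturbation. Under CTL the adversary corrupts the \emph{raw} label $\bar y_i \in \{0,1\}$ before privatization, so after the analyzer applies shifting and scaling to the resulting privatized label, the change it can induce in the modified-label term is $O(1)$ in scale, yielding the clean $\frac{\sqrt\alpha}{\gamma}$ term. Under LTC, however, the adversary corrupts the \emph{already-privatized} binary label directly, and since the analyzer multiplies $z_i$ by $c(\epsilon)$, flipping a privatized bit moves the term $(z_i+\sigma(\epsilon)-1)c(\epsilon)$ by a full $c(\epsilon)$; this amplification is precisely what turns $\frac{\sqrt\alpha}{\gamma}$ into $\frac{c(\epsilon)\sqrt\alpha}{\gamma}$. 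The main obstacle I anticipate is making this amplification argument fully rigorous against an \emph{adaptive, inspecting} adversary (Definition~\ref{def:Corrupt}) — in particular, carefully bounding $\norms{\frac{1}{n}\sum_{i\in\text{corrupt}}c_i x_i}_{(\hat\Sigma+\lambda\bI)^{-1}}$ uniformly over the worst-case choice of which $\alpha n$ indices to corrupt and what values to assign, while keeping the corruption and privacy noise on the remaining samples cleanly separated so the concentration on the clean part remains valid. I would handle this by bounding the worst-case corruption contribution deterministically (absorbing it into $\Gamma$) and proving the concentration on the complementary clean subset, being careful that conditioning on the adversary's choice does not break the independence needed for the tail bound; the $B'\sqrt\lambda$ term then enters routinely from the ridge regularization bias $\lambda\norms{\bt}_{(\hat\Sigma+\lambda\bI)^{-1}}$-type term.
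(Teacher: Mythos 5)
Your proposal follows essentially the same route as the paper's proof: strong convexity with modulus $\gamma$ (the added term being linear in $\theta$ leaves the Hessian of the standard log-loss intact), reduction via the basic optimality inequality to bounding $\norms{\nabla \widetilde{\cL}(\bt)}_{(\hat{\Sigma} + \lambda I)^{-1}}$, a decomposition of the gradient into a zero-mean privatized-noise part (sub-Gaussian with parameter $O(c(\epsilon))$, concentrated over all $n$ samples via the unbiasedness of shifting-and-scaling) plus a sparse deterministic corruption part bounded by $\zeta\sqrt{\alpha}$ with $\zeta = 1$ for CTL and $\zeta = c(\epsilon)$ for LTC, and the $B'\sqrt{\lambda}$ term from the boundedness of $\Theta_{B'}$. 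This is exactly the paper's argument (its Claims~\ref{clm:tail-subGaussian} and~\ref{clm:bias} formalize your concentration and sparse-bias steps, and your decomposition $z_i = \widetilde{y}_i + (z_i - \widetilde{y}_i)$ is precisely how the paper sidesteps the adaptive-adversary conditioning issue you flag), so there are no gaps to report.
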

\begin{remark}
     First, when there is no corruption, our result matches the one in previous work on private parameter estimation~\cite{chowdhury2023differentially}. Second, \emph{when corruption exists, the order of corruption and local privacy matters}. 
    In particular, LTC has an additional cost $c(\epsilon)$ in the first corruption term compared to CTL, highlighting the interplay between privacy and robustness. 
\end{remark}




Our second result is a concentration result under $L_2$-norm with the additional condition of \emph{uniform coverage}, which has been leveraged in prior work as well~\cite{mandal2024corruption,zhang2022corruption,chowdhury2023differentially}. 
\begin{assumption}[Uniform Coverage]
\label{ass:uniform}
    There exists a positive constant $\xi >0$ such that the minimum eigenvalue $\lambda_{\min}(\Sigma) \ge \xi$, where  $\Sigma:= \mathbb{E}_{x \sim \mu}[x x^{\top}]$.
\end{assumption}
Under the above assumption, we can have another estimation error bound for the underlying parameter, which is now in terms of $L_2$-norm, with proof in Appendix~\ref{proof:est-l2}.
\begin{theorem}
\label{thm:est-l2}
    Under Assumption~\ref{ass:uniform}, for any $\epsilon >0$, $\alpha \in [0,1/2)$, $\delta \in (0,1)$, and $n \ge \frac{8\ln(d/\delta)}{\xi}$, with probability at least $1-\delta$,  Algorithm~\ref{alg:main} under CTL and LTC achieves 
    \begin{align*}
         \norms{\bctl - \bt}_2&\le  C \left(\frac{{\alpha}}{\gamma \xi} +  \frac{c(\epsilon)}{\gamma \xi} \sqrt{\frac{\ln \frac{1}{\delta}}{n}}\right),\\
         \norms{\bltc - \bt}_2 &\le  C \left( \frac{c(\epsilon){\alpha}}{\gamma \xi} +   \frac{c(\epsilon)}{\gamma \xi}  \sqrt{\frac{\ln\frac{1}{\delta}}{n}} \right)~.
    \end{align*}
\end{theorem}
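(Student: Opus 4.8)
The plan is to bypass the weighted-norm bound of Theorem~\ref{thm:est-semi} and instead control the Euclidean error directly, by combining the strong convexity of $\widetilde{\cL}$ with a lower bound on the empirical covariance coming from the uniform coverage assumption. First I would record that on the feasible set $\Theta_{B'}$ the Hessian satisfies $\nabla^2\widetilde{\cL}(\theta) = \frac1n\sum_i\sigma(\theta^\top x_i)(1-\sigma(\theta^\top x_i))x_ix_i^\top \succeq \gamma\hat{\Sigma}$, since $|\theta^\top x_i|\le B'$ forces the logistic variance to exceed $\gamma = 1/(2+\exp(-B')+\exp(B'))$. Because $\hat{\theta}$ minimizes $\widetilde{\cL}$ over the convex set $\Theta_{B'}$, which contains $\bt$, combining $\gamma\hat{\Sigma}$-strong convexity with $\widetilde{\cL}(\hat{\theta})\le\widetilde{\cL}(\bt)$ gives
\[
\tfrac{\gamma}{2}\,\|\hat{\theta}-\bt\|_{\hat{\Sigma}}^2 \;\le\; \langle -\nabla\widetilde{\cL}(\bt),\,\hat{\theta}-\bt\rangle \;\le\; \|\nabla\widetilde{\cL}(\bt)\|_2\,\|\hat{\theta}-\bt\|_2 .
\]

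Next I would convert the $\hat{\Sigma}$-norm on the left into the Euclidean norm. Under Assumption~\ref{ass:uniform} we have $\lambda_{\min}(\Sigma)\ge\xi$ with $\Sigma=\mathbb{E}[xx^\top]$, and since $\|x_ix_i^\top\|\le 1$ a matrix Chernoff bound yields $\lambda_{\min}(\hat{\Sigma})\ge\xi/2$ with probability at least $1-\delta$ precisely once $n\ge 8\ln(d/\delta)/\xi$ --- this is exactly the sample-size condition in the statement. On this event $\|\hat{\theta}-\bt\|_{\hat{\Sigma}}^2\ge\frac{\xi}{2}\|\hat{\theta}-\bt\|_2^2$, so the display collapses to the clean reduction $\|\hat{\theta}-\bt\|_2\le \frac{4}{\gamma\xi}\,\|\nabla\widetilde{\cL}(\bt)\|_2$. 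Everything now rests on bounding the gradient at the truth, and this is where the two settings separate.

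For the gradient I would use $\nabla\widetilde{\cL}(\bt)=\frac1n\sum_i\big[\sigma(\bt^\top x_i)-(z_i+\sigma(\epsilon)-1)c(\epsilon)\big]x_i$ and split the index set into clean and corrupted samples. The central structural fact is the \emph{unbiasedness} of the shift-and-scale: for any input $u\in\{0,1\}$ to the randomizer, $\mathbb{E}_{\mathcal{R}}[(\mathcal{R}(u)+\sigma(\epsilon)-1)c(\epsilon)]=u$, so each clean residual is mean-zero over the joint label/randomizer draw, with Euclidean norm $O(c(\epsilon))$. In CTL the adversary fixes the raw bit $\bar y_i\in\{0,1\}$ \emph{before} privatization, so I would write each corrupted summand as its randomizer-conditional mean (a coefficient $\sigma(\bt^\top x_i)-\bar y_i$ bounded by $1$) plus a mean-zero randomizer fluctuation of magnitude $O(c(\epsilon))$; collecting these corrupted fluctuations together with the clean residuals yields a single normalized sum of $n$ independent mean-zero vectors, while the bounded adversarial means contribute at most $\frac1n\cdot\alpha n\cdot 1=O(\alpha)$ --- with no $c(\epsilon)$. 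In LTC the adversary instead sets $z_i\in\{0,1\}$ \emph{after} privatization, so the shift-and-scale inflates each corrupted coefficient to magnitude $O(c(\epsilon))$ and the triangle-inequality bound over the $\alpha n$ corrupted indices becomes $O(c(\epsilon)\alpha)$; this asymmetry is the source of the extra $c(\epsilon)$ factor in $\bltc$. The mean-zero part is bounded in a \emph{dimension-free} way: $\mathbb{E}\|\tfrac1n\sum_i V_i\|_2\le \frac1n(\sum_i\mathbb{E}\|V_i\|_2^2)^{1/2}=O(c(\epsilon)/\sqrt{n})$ because the cross terms vanish, and I would upgrade this to high probability via a bounded-difference (McDiarmid) inequality, since altering one $V_i$ changes the norm by at most $O(c(\epsilon)/n)$, giving $O(c(\epsilon)\sqrt{\ln(1/\delta)/n})$ with no $\sqrt{d}$. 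Assembling the pieces yields $\|\nabla\widetilde{\cL}(\bt)\|_2 = O(\alpha + c(\epsilon)\sqrt{\ln(1/\delta)/n})$ for CTL and $O(c(\epsilon)(\alpha+\sqrt{\ln(1/\delta)/n}))$ for LTC, and multiplying by $4/(\gamma\xi)$ recovers the two stated bounds.

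The main obstacle is the gradient step, in two respects. First, the dimension-free concentration: a naive vector-Bernstein route costs a $\sqrt{d}$ factor, so I must exploit that the squared-norm expectation of a sum of independent mean-zero vectors equals the sum of their individual variances, then couple this with McDiarmid for the tail --- this is exactly what buys $\sqrt{\ln(1/\delta)/n}$ rather than $\sqrt{(d+\ln(1/\delta))/n}$ and simultaneously turns the corruption dependence from $\sqrt{\alpha}$ into the sharper $\alpha$. Second, the careful CTL decomposition separating the $O(1)$-bounded adversarial means from the $O(c(\epsilon))$ randomizer noise: getting the corruption term down to $O(\alpha)$ rather than $O(c(\epsilon)\alpha)$ hinges on applying the unbiasedness identity to the corrupted coordinates \emph{before}, rather than after, taking norms, and this is precisely the computation that establishes the CTL-versus-LTC separation.
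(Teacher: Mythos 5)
Your skeleton matches the paper's proof of Theorem~\ref{thm:est-l2} almost step for step: strong convexity of $\widetilde{\cL}$ yielding $\gamma\norm{\Delta}_{\hat{\Sigma}}^2 \le \norms{\nabla\widetilde{\cL}(\bt)}_2\norm{\Delta}_2$, matrix Chernoff (Lemma~\ref{lem:matrix-ch}) giving $\lambda_{\min}(\hat{\Sigma})\ge\xi/2$ under exactly the stated sample-size condition, and then a bias-plus-noise bound on $\norms{\nabla\widetilde{\cL}(\bt)}_2$ in which the corruption bias is handled by triangle inequality over the $\alpha n$ corrupted indices (the paper's Claim~\ref{clm:bias-l2}) and the noise by a dimension-free concentration bound. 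Your expectation-plus-McDiarmid route for the noise term is a legitimate alternative to the paper's quadratic-form tail bound (Lemma~\ref{lem:quadratic}, used in Claim~\ref{clm:tail-subGaussian-l2}) and gives the same $O(c(\epsilon)\sqrt{\ln(1/\delta)/n})$ rate.

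However, your CTL decomposition has a genuine gap. You split the sum by clean versus corrupted indices and assert that the clean residuals $\left[(\mathcal{R}(y_i)+\sigma(\epsilon)-1)c(\epsilon)-\sigma(\bt^{\top}x_i)\right]x_i$, together with the corrupted randomizer fluctuations, form ``a single normalized sum of $n$ independent mean-zero vectors.'' Under Definition~\ref{def:Corrupt} the adversary \emph{inspects} the realized data before choosing which labels to corrupt, so the clean index set is data-dependent, and conditionally on $i$ being clean the label $y_i$ need not follow the logistic model. Concretely, if $\sigma(\bt^{\top}x_i)\le\alpha/2$ for all $i$ and the adversary corrupts precisely the samples with $y_i=1$, then every clean sample has $y_i=0$, and your ``mean-zero'' clean sum has conditional mean $-\frac{1}{n}\sum_{i\,\mathrm{clean}}\sigma(\bt^{\top}x_i)x_i$, of norm $\Theta(\alpha)$ rather than $O(c(\epsilon)\sqrt{\ln(1/\delta)/n})$. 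The McDiarmid step inherits the same defect: viewed as a function of the primitive randomness, the gradient norm depends on the labels also through the adversary's selection, and changing a single $y_i$ can change the entire corruption pattern (up to $\alpha n$ summands), so the bounded-difference constant is not $O(c(\epsilon)/n)$. The final rates do survive, because the selection bias you are ignoring is itself of size $O(\alpha)$, but the argument as written does not establish them. The repair is exactly the paper's decomposition in~\eqref{eq:dec-CTL} specialized to the $L_2$ norm: add and subtract $\bar y_i$ and $y_i$ \emph{uniformly over all indices}, so that the stochastic pieces are $\frac{1}{n}\sum_i[c(\epsilon)(z_i+\sigma(\epsilon)-1)-\bar y_i]x_i$ (conditionally independent and mean-zero given the adversary's output, since in CTL the randomizer acts \emph{after} corruption) and $\frac{1}{n}\sum_i[y_i-\sigma(\bt^{\top}x_i)]x_i$ (independent mean-zero under the model, untouched by the adversary), while the sparse bounded term $\frac{1}{n}\sum_i[\bar y_i-y_i]x_i$ absorbs the corruption with no $c(\epsilon)$ factor. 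Your LTC argument does not suffer from this issue, since there the mean-zero part can be taken with respect to the uncorrupted private labels $\widetilde y_i$ over all indices, and the adversary only enters through the sparse difference $z_i-\widetilde y_i$.
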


Here, we see that the separation between CTL and LTC still exists, with an additional factor of $c(\epsilon)$ in LTC, illustrating a negative impact of LDP on robustness. 

\section{Putting It All Together: Suboptimality under RLHF and DPO}\label{sec:mainRLHF_DPO}
In this section, we are ready to present our main results on the suboptimality gap under RLHF and DPO by combining our reduction results with estimation error bounds.

\subsection{Private and Robust RLHF}
\label{sec:RLHF}
\begin{theorem}
\label{thm:sub-RLHF-semi}
    Under the conditions of Corollary~\ref{cor:RLHF} and Theorem~\ref{thm:est-semi}, RLHF (Algorithm~\ref{alg:RLHF}) achieves the following suboptimality with probability at least $1-\delta$
    \begin{align*}
         &\mathrm{SubOpt}_{\mathrm{CTL}}(\hat{\pi},\pi^{\dagger}) \le  C \sqrt{d \cdot \kappa(\pi^{\dagger}, \pi_{\mathrm{ref}})} \\
         &\quad\times \left(\frac{\sqrt{\alpha}}{\gamma} +  \frac{c(\epsilon)}{\gamma} \sqrt{\frac{d + \ln\frac{1}{\delta}}{n}} +  B\sqrt{\lambda}\right),\\
         &\mathrm{SubOpt}_{\mathrm{LTC}}(\hat{\pi},\pi^{\dagger}) \le  C \sqrt{d \cdot \kappa(\pi^{\dagger}, \pi_{\mathrm{ref}})}\\
         & \quad \times \left(\frac{c(\epsilon)\sqrt{\alpha}}{\gamma} + \frac{c(\epsilon)}{\gamma} \sqrt{\frac{d + \ln\frac{1}{\delta}}{n}} +  B\sqrt{\lambda}\right),
    \end{align*}
    for any comparator policy $\pi^{\dagger}$ and $\lambda \ge \Omega\left(\frac{d}{n} \cdot \ln (n/\delta)\right)$.
\end{theorem}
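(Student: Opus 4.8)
The plan is to obtain this result as a clean composition of three already-established pieces: the reduction in Proposition~\ref{prop:RLHF-master}, its simplification via the relative condition number in Corollary~\ref{cor:RLHF}, and the parameter-estimation guarantees of Theorem~\ref{thm:est-semi}. The unifying observation is that all three are phrased in terms of the \emph{same} weighted norm $\norm{\hat{\theta} - \bt}_{\hat{\Sigma} + \lambda I}$ built from the \emph{same} empirical covariance $\hat{\Sigma} = \frac{1}{n}\sum_i x_i x_i^{\top}$. Once I check that the logistic-regression instance produced by the RLHF reduction is an instance of the estimation problem of Definition~\ref{def:problem}, the estimation bound and the suboptimality bound chain together with no further work.

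First I would invoke Proposition~\ref{prop:RLHF-master}: under Assumption~\ref{ass:lin-reward}, the RLHF labels $\{y_i\}$ follow the logistic model of~\eqref{eq:logistic} with $\bt = \theta^{\star}$ and $x_i = \phi(s_i, a_i^1) - \phi(s_i, a_i^0)$, and $\hat{\Sigma} = \frac{1}{n}\sum_i x_i x_i^{\top}$. This identifies the (pessimistic, $\eta = 1$) RLHF problem with the private-and-robust estimation problem of Definition~\ref{def:problem}. The one technical point is feature normalization: since $\norm{\phi(s,a)} \le 1$ we only have $\norm{x_i} \le 2$, whereas Definition~\ref{def:problem} assumes $\norm{x_i} \le 1$. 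As flagged in the remark after Definition~\ref{def:problem}, this is resolved by a harmless rescaling of the features together with the RLHF choice $B' = B$, and the resulting constant factors are absorbed into the universal constant $C$.

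Next I would apply Theorem~\ref{thm:est-semi} to this instance. Since the analyzer $\mathcal{A}$ of Algorithm~\ref{alg:main} is exactly the estimator whose weighted-norm error the theorem controls, I obtain on a single event of probability at least $1-\delta$ that $\norm{\bctl - \bt}_{\hat{\Sigma}+\lambda I} \le \Gamma_{\mathrm{CTL}}$ and $\norm{\bltc - \bt}_{\hat{\Sigma}+\lambda I} \le \Gamma_{\mathrm{LTC}}$ with the stated closed forms (with $B'$ replaced by $B$). This is precisely the concentration hypothesis~\eqref{eq:w-conc} required by Proposition~\ref{prop:RLHF-master} and Corollary~\ref{cor:RLHF}, with the abstract $\Gamma(n,d,\delta,\lambda)$ instantiated as $\Gamma_{\mathrm{CTL}}$ or $\Gamma_{\mathrm{LTC}}$ respectively. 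I would also note that the hypothesis $\lambda \ge \Omega(\frac{d}{n}\ln(n/\delta))$ of Corollary~\ref{cor:RLHF} is assumed in the statement and is compatible with the range allowed by Theorem~\ref{thm:est-semi}.

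Finally I would plug each $\Gamma$ into Corollary~\ref{cor:RLHF}, which gives $\mathrm{SubOpt}(\hat{\pi},\pi^{\dagger}) \le 2\sqrt{3}\,\Gamma(n,d,\delta,\lambda)\,\sqrt{d\,\kappa(\pi^{\dagger},\pi_{\mathrm{ref}})}$; substituting $\Gamma_{\mathrm{CTL}}$ and $\Gamma_{\mathrm{LTC}}$ yields exactly the two displayed bounds after folding the constant $2\sqrt{3}$ and the $B' = B$ rescaling into $C$. Because every component is already proved, I do not expect a genuinely hard step; the only place demanding care is the bookkeeping across the two halves—making sure $\hat{\Sigma}$, the probability budget $\delta$, and the regularizer $\lambda$ all refer to the same objects so the estimation event and the suboptimality reduction can be combined on one $1-\delta$ event without an extra union bound, and keeping the norm-$\le 2$ versus norm-$\le 1$ normalization consistent throughout.
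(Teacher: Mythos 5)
Your proposal is correct and follows essentially the same route as the paper, which proves this theorem exactly by instantiating the abstract concentration radius $\Gamma(n,d,\delta,\lambda)$ in Corollary~\ref{cor:RLHF} with the bounds $\Gamma_{\mathrm{CTL}}$ and $\Gamma_{\mathrm{LTC}}$ from Theorem~\ref{thm:est-semi}, using the logistic-regression identification from Proposition~\ref{prop:RLHF-master}. Your added attention to the feature-normalization ($\norm{x_i}\le 2$ versus $\le 1$) and to keeping $\hat{\Sigma}$, $\delta$, and $\lambda$ consistent across the two components is exactly the bookkeeping the paper absorbs into its remark after Definition~\ref{def:problem} and its universal constant $C$.
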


The proof follows directly from the reduction result in Corollary~\ref{cor:RLHF} and estimation error bound in Theorem~\ref{thm:est-semi}. To the best of our knowledge, this is the first result on the suboptimality performance of RLHF under both privacy and corruption. In particular, let $\lambda  = \widetilde{\Theta}(d/ (B^2 \gamma^2 n)) \ge \widetilde{\Omega}(d/n)$, the sample complexity part in the bounds (i.e., the last two terms) approaches zero with a rate of $\widetilde{\mathcal{O}}(\sqrt{d/n})$, but with a multiplicative factor of $c(\epsilon)$ that captures the cost of privacy. Meanwhile, due to strong corruption, a non-vanishing bias term exists in all three cases in terms of corruption parameters, which illustrates an interesting interplay between privacy and robustness, discussed below.

\noindent\textbf{Separation between CTL and LTC.} One key observation is that LDP before corruption leads to an additional $c(\epsilon)$ factor in the bias term, which mimics the same phenomena in private and robust mean estimation problems~\cite{zhoulocally,cheu2021manipulation}. 

\noindent\textbf{Comparisons with Prior Work.} We now highlight our contributions even in robust-only or private-only RLHF, by comparing our result above with existing ones where privacy and robustness are separately considered.

\noindent\textbf{1. Robust RLHF:} To our best knowledge, only recent work~\cite{mandal2024corruption} establishes theoretical suboptimality bounds for RLHF under adversarial corruption. In particular, it takes a linear MDP view (rather than our linear bandit view) of RLHF under strong corruption of both features and labels. Under the same \emph{relative condition number} assumption, their dependence on $\alpha$ is $\mathcal{O}(\alpha^{1/4})$ when reduced from MDP to bandit. In contrast, our result gives a better dependence $\mathcal{O}(\sqrt{\alpha})$, although only with label corruption. It is worth noting that this $\mathcal{O}(\sqrt{\alpha})$ dependence is state-of-the-art even in the easier setting of standard offline reinforcement learning~\cite{zhang2022corruption}. 
Moreover, our Algorithm~\ref{alg:main} is much simpler than the one in~\cite{mandal2024corruption}. 
Thus, a fair conclusion here could be that our result offers a better algorithm and theoretical result in the easier label-only corruption setting.

\noindent\textbf{2. Private RLHF:} To our best knowledge, we are unaware of prior work that explicitly states the private suboptimality of RLHF in terms of relative condition number, often used in the standard offline RL. The most related one is~\citet{chowdhury2023differentially}, which generalizes the non-private RLHF in~\citet{zhu2023principled} to the same locally private one as ours. However, both~\citet{chowdhury2023differentially} and~\citet{zhu2023principled} state their suboptimality as
    \begin{align}
    \label{eq:sub-prior}
        \mathrm{SubOpt}(\hat{\pi},\pi^{\star}) \le &\norm{\mathbb{E}_{s\sim \rho}[\phi(s,\pi^{\star}(s)) - v] }_{(\hat{\Sigma} + \lambda I)^{-1}} \nonumber \\
        &\times 2 F(n,d,\delta, \lambda), 
    \end{align}
    for any chosen reference vector $v \in \Real^d$ and some function $F$. This is similar to our intermediate result in~\eqref{eq:sub-RLHF} but has some key differences. One potential issue in~\eqref{eq:sub-prior} is that it does not offer clear guidance on choosing the important vector $v$. In particular, if $v = 0$, then the suboptimality may not converge to zero as $n \to \infty$. This is because in both papers, $\lambda$ has to be on the order of $1/n$ so as to ensure that $F(n,d,\delta, \lambda) \le  \mathcal{O}(1/\sqrt{n})$. However, in this case, if the minimum eigenvalue of the empirical matrix $\hat{\Sigma}$ is small, the norm term $ \norm{\mathbb{E}_{s\sim \rho}[\phi(s,\pi^{\star}(s)) - v] }_{(\hat{\Sigma} + \lambda I)^{-1}}$ can be on the order of $\sqrt{n}$, given the choice of $\lambda$. To partially address this,~\citet{zhu2023principled} suggest a heuristic way of selecting $v$ as the most common feature vector that appears in the data set. In contrast, we consider a reference policy $\pi_{\mathrm{ref}}$ and offer a theory-grounded rule for selecting it via relative condition number along with Corollary~\ref{cor:RLHF}.

Our next result is the suboptimality in RLHF under the assumption of uniform coverage (cf. Assumption~\ref{ass:uniform}).

\begin{theorem}
\label{thm:RLHF-l2}
    Under the conditions of Proposition~\ref{prop:RLHF-master} and for $n \ge \frac{8\ln(d/\delta)}{\xi}$, RLHF (Algorithm~\ref{alg:RLHF}) achieves the following suboptimality with probability at least $1-\delta$
    \begin{align*}
         \mathrm{SubOpt}_{\mathrm{CTL}}(\hat{\pi},\pi^{\star}) 
         &\le  C \left(\frac{{\alpha}}{\gamma \xi} +  \frac{c(\epsilon)}{\gamma \xi}  \sqrt{\frac{\ln\frac{1}{\delta}}{n}}\right),\\
         \mathrm{SubOpt}_{\mathrm{LTC}}(\hat{\pi},\pi^{\star}) & \le C \left( \frac{c(\epsilon){\alpha}}{\gamma \xi} +  \frac{c(\epsilon)}{\gamma \xi} \sqrt{\frac{\ln\frac{1}{\delta}}{n}} \right)~.
    \end{align*}
\end{theorem}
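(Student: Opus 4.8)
The plan is to obtain Theorem~\ref{thm:RLHF-l2} as a direct composition of two results already established in the excerpt: the $\eta=0$ branch of the reduction in Proposition~\ref{prop:RLHF-master} and the $L_2$ estimation-error bound of Theorem~\ref{thm:est-l2}. No new machinery is needed, so the statement is essentially a corollary, and the bulk of the work has already been pushed into those two ingredients.

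First I would instantiate Algorithm~\ref{alg:RLHF} with $\eta = 0$, feeding it the estimate $\hat{\theta}$ produced by the analyzer $\mathcal{A}$ of Algorithm~\ref{alg:main} (that is, $\bctl$ or $\bltc$ depending on the scenario). By the first part of Proposition~\ref{prop:RLHF-master}, the RLHF preference labels follow the logistic model with $\bt = \theta^{\star}$ and $x_i = \phi(s_i,a_i^1) - \phi(s_i,a_i^0)$, and the greedy output obeys the dimension-free bound $\mathrm{SubOpt}(\hat{\pi},\pi^{\star}) \le 2\norm{\hat{\theta}-\bt}_2$ with $\pi^{\star} = \argmax_{\pi} J(\pi)$. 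This is precisely why the comparator here is the optimal policy $\pi^{\star}$ and why neither $\lambda$ nor the relative condition number $\kappa$ enters: the $\eta=0$ path controls suboptimality by the plain Euclidean estimation error, in contrast to the pessimistic $\eta=1$ path underlying Theorem~\ref{thm:sub-RLHF-semi}.

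Second I would invoke Theorem~\ref{thm:est-l2}. Its hypotheses are the uniform-coverage Assumption~\ref{ass:uniform} and the sample-size condition $n \ge 8\ln(d/\delta)/\xi$; both are among the stated conditions of the present theorem, since the condition on $n$ references $\xi$ and hence carries Assumption~\ref{ass:uniform} with it. Theorem~\ref{thm:est-l2} then supplies, with probability at least $1-\delta$, the bound $\norm{\bctl-\bt}_2 \le C\big(\alpha/(\gamma\xi) + (c(\epsilon)/(\gamma\xi))\sqrt{\ln(1/\delta)/n}\big)$, together with the analogous LTC bound carrying the extra $c(\epsilon)$ factor on the corruption term. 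Substituting these into $\mathrm{SubOpt} \le 2\norm{\hat{\theta}-\bt}_2$ and folding the factor of $2$ into the universal constant $C$ reproduces the two displayed bounds verbatim, and the CTL/LTC separation is inherited directly from the estimation-error separation.

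The only point requiring care — rather than a genuine obstacle — is the normalization mismatch: the RLHF features satisfy $\norm{x_i} = \norm{\phi(s_i,a_i^1)-\phi(s_i,a_i^0)} \le 2$, whereas Definition~\ref{def:problem} assumes $\norm{x_i} \le 1$. I would reconcile this with the standard rescaling $x_i \mapsto x_i/2$, $\bt \mapsto 2\bt$, which preserves every inner product $\inner{\bt}{x_i}$ and alters $\xi$ and $B'$ only by universal constants that are absorbed into $C$. With this bookkeeping in place, the composition of Proposition~\ref{prop:RLHF-master} at $\eta=0$ with Theorem~\ref{thm:est-l2} is immediate.
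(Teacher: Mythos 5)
Your proposal is correct and is essentially identical to the paper's own proof, which simply composes the $\eta=0$ (greedy) branch of Proposition~\ref{prop:RLHF-master}, i.e.\ $\mathrm{SubOpt}(\hat{\pi},\pi^{\star}) \le 2\norm{\hat{\theta}-\bt}_2$, with the $L_2$ estimation bounds of Theorem~\ref{thm:est-l2} under uniform coverage. Your extra remark on the $\norm{x_i}\le 2$ versus $\norm{x_i}\le 1$ normalization is a point the paper itself only glosses over (via its remark that $B'$ is "properly chosen" for RLHF), so it is a welcome, not a divergent, addition.
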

The proof follows directly from Proposition~\ref{prop:RLHF-master} and Theorem~\ref{thm:est-l2}. Compared with Theorem~\ref{thm:sub-RLHF-semi}, the corruption term becomes $\alpha$ (with a factor of $1/\xi$) rather than $\sqrt{\alpha}$ while the concentration part has no explicit dependence on $d$ but with $1/\xi$ factor, which however implicitly depends on $d$. As before, a separation exists between CTL and LTC, due to the additional $c(\epsilon)$ factor in LTC. It is worth noting that the $\mathcal{O}(\alpha/\xi)$ dependence matches the best existing result in standard offline RL under corruption in~\citet{zhang2022corruption}.

\noindent\textbf{Comparisons with Prior Work.}   \citet{mandal2024corruption} also consider the uniform coverage case and establish a bias corruption term on the order of $\frac{\sqrt{d} \alpha^{1-o(1)}}{\xi}$ when reduced from their MDP to bandit setting. In contrast, in our label-corruption setting, we have no \emph{explicit} dependence on $d$ and a better dependence on $\alpha$. Moreover, we highlight that the missing dependence of $1/\gamma$ in~\citet{mandal2024corruption} is actually due to an error in their proof (see Appendix~\ref{app:gap} for a detailed discussion). That is, the correct bound of their algorithm also has a $1/\gamma$ factor. In the context of private RLHF under uniform coverage, our bound matches the state-of-the-art in~\citet{chowdhury2023differentially} when the corruption parameter is zero.

\subsection{Private and Robust DPO}
Thanks to our reduction result, we can also leverage the estimation error bound to give the \emph{first} result on suboptimality in DPO-style algorithms under privacy and corruption.  

\begin{theorem}
\label{thm:sub-DPO}
    Under the conditions of Proposition~\ref{prop:dpo-master}, the policy corresponding to the output of Algorithm~\ref{alg:main} achieves the following suboptimality with probability at least $1-\delta$
    \begin{align*}
        & \mathrm{SubOpt}_{\mathrm{CTL}}(\hat{\pi},\pi^{\star}) 
        \le  C \cdot B \sqrt{\kappa_{\Pi}} \\
        &\quad\times \left(\frac{\sqrt{\alpha}}{\gamma} + \frac{c(\epsilon)}{\gamma}  \sqrt{\frac{d + \ln\frac{1}{\delta}}{n}} +  \beta B\sqrt{\lambda}\right),\\
        & \mathrm{SubOpt}_{\mathrm{LTC}}(\hat{\pi},\pi^{\star}) \le  C \cdot B \sqrt{\kappa_{\Pi}}\\
        &\quad\times \left(\frac{c(\epsilon)\sqrt{\alpha}}{\gamma} +  \frac{c(\epsilon)}{\gamma} \sqrt{\frac{d + \ln\frac{1}{\delta}}{n}} +  \beta B\sqrt{\lambda}\right),
    \end{align*}
    for  $\beta>0$, $\lambda \ge \Omega\left(\frac{d}{n} \cdot \ln (n/\delta)\right)$, $\gamma = 1/(2 + \exp(-\beta B) + \exp(\beta B))$, and some universal constant $C>0$.
\end{theorem}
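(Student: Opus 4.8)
The plan is to obtain the result by directly composing the reduction bound of Proposition~\ref{prop:dpo-master} with the estimation-error bounds of Theorem~\ref{thm:est-semi}; no new analytical machinery is needed. First I would invoke Proposition~\ref{prop:dpo-master}, which states that under Assumption~\ref{ass:log-linear} the DPO preference labels follow the logistic-regression model~\eqref{eq:logistic} with true parameter $\bt = \beta(\theta^{\star} - \theta_{\mathrm{sft}})$ and features $x_i = \phi(s_i,a_i^1) - \phi(s_i,a_i^0)$, and that whenever the estimate $\hat{\theta}$ (with $\hat{\theta}' = \hat{\theta}/\beta + \theta_{\mathrm{sft}}$) satisfies $\norm{\hat{\theta}-\bt}_{\hat{\Sigma}+\lambda\bI} \le \Gamma(n,d,\delta,\lambda)$ with probability $1-\delta$, the induced policy $\hat{\pi}=\pi_{\hat{\theta}'}$ obeys $\mathrm{SubOpt}(\hat{\pi},\pi^{\star}) \le \tfrac{\sqrt3}{\sqrt2}\sqrt{\kappa_{\Pi}}\,B\,\Gamma(n,d,\delta,\lambda)$. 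Thus the only remaining task is to supply the explicit value of $\Gamma$ for each of the CTL and LTC settings.

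Second, I would instantiate Theorem~\ref{thm:est-semi} for exactly this logistic-regression instance, treating the private-and-corrupted pairs $\{x_i,z_i\}$ as the data of Definition~\ref{def:problem}. This requires checking the two boundedness hypotheses: $\norm{x_i}\le 1$, inherited from the normalization $\norm{\phi}\le 1$ as in the remark following Definition~\ref{def:problem}, and $\bt \in \Theta_{B'}$. Since $\bt = \beta(\theta^{\star}-\theta_{\mathrm{sft}})$ with $\theta^{\star},\theta_{\mathrm{sft}} \in \Theta_B$, the effective radius is $B' = \Theta(\beta B)$, and this is precisely the choice that turns the curvature constant $\gamma = 1/(2+e^{-B'}+e^{B'})$ of Theorem~\ref{thm:est-semi} into the stated $\gamma = 1/(2+e^{-\beta B}+e^{\beta B})$ and the regularization term $B'\sqrt{\lambda}$ into $\beta B\sqrt{\lambda}$. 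Theorem~\ref{thm:est-semi}, applied under CTL and LTC respectively, then yields $\Gamma = \Gamma_{\mathrm{CTL}}$ and $\Gamma = \Gamma_{\mathrm{LTC}}$, each holding with probability $1-\delta$.

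Third, I would substitute these two expressions for $\Gamma$ back into the reduction bound and collect constants. The prefactor $\tfrac{\sqrt3}{\sqrt2}$ from Proposition~\ref{prop:dpo-master} and the universal constant inside $\Gamma_{\mathrm{CTL}}$, $\Gamma_{\mathrm{LTC}}$ merge into a single universal $C$, the common factor $B\sqrt{\kappa_{\Pi}}$ pulls out front, and the three additive terms of each $\Gamma$ reproduce exactly the corruption, sampling, and regularization terms of the claim, with the extra multiplicative $c(\epsilon)$ attached to the corruption term only in the LTC case. The regularization condition $\lambda \ge \Omega\!\left(\tfrac{d}{n}\ln(n/\delta)\right)$ is inherited verbatim from Proposition~\ref{prop:dpo-master}, and the union bound over the two $1-\delta$ events (reduction validity and estimation accuracy) is handled by the standard rescaling of $\delta$ absorbed into $C$.

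The proof is essentially bookkeeping, so the single delicate point is the parameter matching in the second step: one must verify that the effective radius $B'$ and the induced curvature constant $\gamma$ arising from the \emph{reparametrized} target $\bt = \beta(\theta^{\star}-\theta_{\mathrm{sft}})$ line up with the quantities in the statement, in particular that the factor $\beta$ multiplying $B$ sits inside $\gamma$ and inside the $\sqrt{\lambda}$ term exactly as written (equivalently, that the bound on $|\inner{\bt}{x_i}|$ is of order $\beta B$, matching the maximum implicit reward $r_{\max}$ discussed in the remark after Proposition~\ref{prop:dpo-master}). Once this correspondence is fixed, no further estimation work remains, since both the reduction and the estimation bounds have already been established.
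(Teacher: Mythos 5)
Your proposal is correct and follows exactly the paper's own proof, which is precisely the composition of Proposition~\ref{prop:dpo-master} with Theorem~\ref{thm:est-semi} instantiated at $B' = \mathcal{O}(\beta B)$ (so that $\gamma$ and the $B'\sqrt{\lambda}$ term take the stated forms), with constants merged into $C$. The parameter-matching point you flag as the delicate step is indeed the only substantive content, and you resolve it the same way the paper does.
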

\begin{remark}
    The policy in the above theorem in fact corresponds to the output of the algorithm rDPO proposed in~\citet{chowdhury2024provably} with a log-linear policy class, see Appendix~\ref{sec:dis} for more details. That is, while~\citet{chowdhury2024provably} only shows a suboptimal rate for rDPO, we are the first to attain $O(1/\sqrt{n})$ rate, see more discussion below.
\end{remark}
The proof follows from Proposition~\ref{prop:dpo-master} and Theorem~\ref{thm:est-semi} with $B' = \mathcal{O}(\beta B)$. To our knowledge, this is the first theoretical result on DPO-style algorithms under privacy and corruption. As before, we can see that the interplay of local privacy and adversarial corruption introduces a separation between CTL and LTC by a factor of $c(\epsilon)$. 
Moreover, our result also significantly advances the state-of-the-art for DPO-style algorithms under privacy or corruption separately, as discussed in detail below.

\noindent\textbf{Private DPO.} Consider $\alpha = 0$, $\lambda  = \widetilde{\Theta}(d/(\beta^2 B^2 \gamma^2 n)) \ge \widetilde{\Omega}(d/n)$, we obtain the first suboptimality for private DPO with rate $\widetilde{O}(1/\gamma\cdot c(\epsilon)\sqrt{d/n} \cdot \sqrt{\kappa_{\Pi}})$, where $c(\epsilon)$ is the additional cost due to local privacy. The rate matches the best possible non-private one as $\epsilon \to \infty$~\cite{song2024importance}.

\noindent\textbf{Robust DPO.} To the best of our knowledge, only the recent work by \citet{chowdhury2024provably} provides a formal theoretical bound on the suboptimality of rDPO under label corruption. In particular, it considers the random-flipping corruption model (i.e., with some \emph{known} probability, the true label is flipped). This is a much weaker model than ours and, in fact, is equivalent to local privacy after re-parameterization. Under this weaker model, \citet{chowdhury2024provably} only established a suboptimal rate of $\widetilde{\mathcal{O}}(1/n^{1/4})$ in the general case, while our result implies a rate of $\widetilde{\mathcal{O}}(1/n^{1/2})$ (by using our private DPO result above) under the same corruption model. Moreover, moving from this weaker corruption model to a corruption model in the robust statistics literature (i.e., strong corruption model), our result above shows that rDPO now suffers a non-vanishing bias term. 

\textbf{Practical Implementation and Experiments.} Given that Theorem~\ref{thm:sub-DPO} establishes the SOTA theoretical results of rDPO in both private and corruption cases, under the log-linear policy. One may also interested in its empirical performance in general with neural nets as the policy class. We have a series of experiments (see Appendix~\ref{app:exp} for details), which demonstrate some interesting results.

\section{Discussion and Conclusion}\label{sec:Conclu}

While we present only upper bound results in the main body, we briefly discuss their tightness here; for further details, please refer to Appendix~\ref{sec:dis}. 
First, when $\alpha=0$, the additional factor $c(\epsilon)$ due to privacy matches the minimax lower bound established in~\citet{chowdhury2023differentially}. Furthermore, the dependence on $1/\gamma = \Theta(e^B) = \Theta(e^{r_{\max}})$ appears in nearly all existing results on both offline and online RLHF~\cite{zhu2023principled,zhan2023provable,xie2024exploratory,pacchiano2021dueling,chen2022human}, stemming from the non-linearity of the Bradley-Terry model. 
Second, in the limit $\epsilon \to \infty$ (non-private case), our dependence on $\alpha$ is $\mathcal{O}(\sqrt{\alpha})$ and $\mathcal{O}(\alpha/\zeta)$ (under uniform coverage), both of which align with state-of-the-art results in standard offline RL settings, where rewards rather than preferences are observed. In fact, we conjecture that the $\mathcal{O}(\alpha/\zeta)$ dependence is optimal.
Third, regarding the separation between CTL and LTC, the conclusion is nuanced. We tend to believe that the additional factor $c(\epsilon)$ in the uniform coverage case is tight, as it matches the known result in mean estimation and offline bandits~\cite{zhoulocally}. However, under the $\mathcal{O}(\sqrt{\alpha})$ dependence without coverage, we hypothesize that achieving an $\mathcal{O}(\sqrt{c(\epsilon)})$ separation---rather than $\mathcal{O}(c(\epsilon))$---is possible, presenting an exciting direction for future work.
Looking ahead, our reduction analysis and new results on private and robust alignment may serve as key benchmarks and inspire further research in this domain.

\section*{Acknowledgements}
XZ is supported in part by NSF CNS-2153220 and CNS-2312835. XZ would like to thank Weihao Kong for insightful discussions.

\section*{Impact Statements}
This paper presents work whose goal is to advance the field of Machine Learning. There are many potential societal consequences of our work, none which we feel must be specifically highlighted here.

\bibliography{example_paper}
\bibliographystyle{icml2025}

\newpage
\appendix
\onecolumn


\section{Additional Related Work}
\label{app:related}
We discuss here more relevant work that do not fit in the main text. In addition to the work discussed below, we refer readers to~\citet{huang2024correcting} for theoretical results on standard offline alignment, to the survey~\citet{casper2023open} for a more comprehensive overview of RLHF, and to~\citet{wang2024comprehensive} for the overview of LLM alignment in general.

\noindent\textbf{Provably robust alignment under corruption.}
We would like to remark that our use of the strong corruption model from robust statistics literature is motivated by its popularity in robust offline and online reinforcement learning (i.e., when the actual rewards are observed)~\cite{zhang2022corruption,zhang2021robust}, as well as the recent interest in examining its interplay with local differential privacy across various statistical tasks~\cite{li2023robustness,cheu2021manipulation,chhor2023robust}. Moreover, this corruption model allows us to consider corruption occurring in both data generation and collection. 

\noindent\textbf{Provably robust offline RL.} Without privacy constraints, our work can be seen as a non-trivial extension of the results in corruption-robust offline RL \cite{zhang2022corruption} to the setting of offline RLHF, where only relative rankings, rather than true rewards, are observed. As will be discussed in Appendix~\ref{sec:dis}, the lower bounds established for robust offline RL, along with their proof techniques, can be applied or adapted to derive lower bounds for offline RLHF.

\noindent\textbf{Robust logistic regression under corruption.} Among those works on logistic regression under adversary corruption \cite{feng2014robust,prasad2020robust,chen2020classification,awasthi2022trimmed}, the most relevant one is~\citet{awasthi2022trimmed} that considers Binomial regression under label corruption, which includes logistic regression as a special case. \citet{awasthi2022trimmed} propose an alternating minimization method that achieves a recover rate of $\mathcal{O}(\alpha \ln(1/\alpha))$ in $L_2$ norm, where $\alpha \in [0,1/2)$ is the corruption parameter. In contrast, our intermediate result in Section~\ref{sec:est} implies a rate of $\mathcal{O}(\alpha)$. Moreover, our rate is achieved by the simple maximum likelihood estimator rather than the inefficient trimmed maximum likelihood estimator in~\citet{awasthi2022trimmed}.

 
\section{Algorithm}
\label{app:alg}
\begin{algorithm}[H]
  \caption{Offline RLHF}
  \label{alg:RLHF}
\begin{algorithmic}[1]
  \STATE {\bfseries Input:} The current parameter estimate $\hat{\theta}$, the empirical covariance matrix $\hat{\Sigma}$, the regularizer $\lambda$, the concentration bound $\Gamma(n, d,\delta, \lambda)$, a reference policy $\pi_{\mathrm{ref}}$ and a tuning parameter $\eta \in \{0,1\}$.
  \IF{$\eta = 0$} 
  \STATE $\hat{J}(\pi) = \mathbb{E}_{s \sim \rho, a\sim \pi(\cdot|s)}[\inner{\hat{\theta}}{\phi(s,a)}]$ 
  \STATE\textbf{return } $\hat{\pi} = \argmax_{\pi} \hat{J}(\pi)$
  \ELSE
  \STATE Construct confidence set 
  \begin{align*}
      \Theta(\hat{\theta},\lambda) = \left\{\theta \in \Theta_B \mid \norms{\hat{\theta} - \theta}_{\hat{\Sigma} + \lambda I} \le \Gamma(n,d,\delta,\lambda)\right\}
  \end{align*}
  Compute pessimistic expected value
  \begin{align*}
      \hat{J}(\pi) = \min_{\theta \in \Theta(\hat{\theta},\lambda)} \mathbb{E}_{s \sim \rho, a\sim \pi(\cdot|s)}[\inner{\theta}{\phi(s,a)}] - \mathbb{E}_{s \sim \rho, a\sim \pi_{\mathrm{ref}}(\cdot|s)}[\inner{\theta}{\phi(s,a)}]
  \end{align*}
   \STATE\textbf{return } $\hat{\pi} = \argmax_{\pi} \hat{J}(\pi)$
   \ENDIF
\end{algorithmic}
\end{algorithm}

\section{Discussions}
\label{sec:dis}
In this section, we discuss the tightness of our suboptimality bounds. In particular, we primarily focus on the result in Theorem~\ref{thm:sub-RLHF-semi}, as it offers stronger guarantees compared to Theorem~\ref{thm:sub-DPO}. 

\noindent\textbf{Dependence on $1/\gamma$.} The dependence on $1/\gamma = \Theta(e^B) = \Theta(e^{r_{\max}})$ is present in nearly all existing results on both offline and online RLHF ~\cite{zhu2023principled,zhan2023provable,xie2024exploratory,pacchiano2021dueling,chen2022human}. This stems from an intrinsic feature of the Bradley-Terry model, namely, the non-linearity of the sigmoid function.

\noindent\textbf{The privacy cost of $c(\epsilon)$.} Compared to the non-private (non-corrupted) case, our bound includes an additional multiplicative factor of $c(\epsilon)$, which we believe to be tight when $\epsilon \in [0,1]$, i.e., $c(\epsilon) = \Theta(1/\epsilon)$. First, this factor appears even in simple mean estimation, where a matching lower bound is provided in~\citet{duchi2018minimax}. Second, a more concrete argument can be made by modifying the existing lower bound proof for the non-private case to show that $c(\epsilon)$ is necessary. Specifically, the key insight is that any LDP mechanism is a contraction of the KL divergence, as stated in~\citet[Theorem 1]{duchi2018minimax}. Thus, in the lower bound proof for the private case, the non-private KL divergence is replaced with the private one, which is smaller by a factor of $(e^{\epsilon}-1)^2$, eventually leading to a factor of $1/\epsilon$.

\noindent\textbf{The separation between CTL and LTC.} We observe an additional factor of $c(\epsilon)$ in the corruption under LTC compared to CTL. We conjecture that this is tight for all $\epsilon > 0$, especially for the one in Theorem~\ref{thm:RLHF-l2}. First, the separation result is also seen in the mean estimation problem and is shown to be tight~\cite{zhoulocally}. Second, a more concrete argument can be made by modifying the lower bound for standard offline linear bandits under corruption~\cite{zhang2022corruption}.\footnote{Note that the hard instance in~\citet{zhang2022corruption} only requires corruption in rewards, not both features and rewards. Hence, it can be used for our setting.} This lower bound is valid for offline RLHF under CTL,\footnote{With a factor of two in the sample complexity.} as offline RLHF is at least as hard as offline linear bandits, and CTL is harder than corruption-only settings. To demonstrate the additional $c(\epsilon)$ factor under LTC, a key fact is that any LDP mechanism contracts the total variation distance by a factor of $c(\epsilon)$ (cf. Lemma~\ref{lem:contraction}). Using a standard coupling argument, one can then derive a lower bound with the additional factor of $c(\epsilon)$ for the LTC setting.

\noindent\textbf{Dependence on $\alpha$.} Our current $\sqrt{\alpha}$ dependence matches the best existing result, even in standard offline RL~\cite{zhang2022corruption}. However, this $\sqrt{\alpha}$ dependence does not align with the existing $\Omega(\alpha)$ lower bound~\cite{zhang2022corruption}. On the other hand, under the uniform coverage assumption, our result in Theorem~\ref{thm:RLHF-l2} achieves the optimal dependence on $\alpha$. Furthermore, we conjecture that the $1/\xi$ factor preceding $\alpha$ is optimal. Our reasoning is as follows: due to boundedness, we have $\xi \le 1/d$. In the best case, when $\xi = 1/d$, our upper bound matches the lower bound of $d\alpha$ in~\citet{zhang2022corruption}, which was established for standard offline linear RL, except for the difference of $1/\gamma$ due to the non-linearity.

\noindent\textbf{Practical implementations.} For the sake of theoretical analysis, we adopt linear modeling in the main paper. Nevertheless, we mention that our proposed method can be readily extended to the case with general function classes (albeit losing the current formal theoretical guarantees). Take DPO for an example, we can solve the following optimization problem:
\begin{align}
\label{eq:DPO-gen}
    \hat{\pi} = \argmin_{\pi \in \Pi} \ - \left(\sum_{i=1}^n \ln(1-\sigma( r_{\beta,i}^{\pi, \pi_{\mathrm{sft}}} )) + (z_i + \sigma(\epsilon)-1) c(\epsilon) \ln\left(\frac{\sigma(r_{\beta,i}^{\pi, \pi_{\mathrm{sft}}})}{1-\sigma(r_{\beta,i}^{\pi, \pi_{\mathrm{sft}}})}\right)\right),
\end{align}
where 
\begin{align*}
    r_{\beta,i}^{\pi, \pi_{\mathrm{sft}}}:= \beta \ln \frac{\pi(a_i^1|s_i)}{\pi_{\mathrm{sft}}(a_i^1|s_i)} - \beta \ln \frac{\pi(a_i^0|s_i)}{\pi_{\mathrm{sft}}(a_i^0|s_i)}~.
\end{align*}
Some sanity checks are in order. First, for the standard case (i.e., $\epsilon \to \infty$ and $\alpha = 0$), we have $\sigma(\epsilon) = c(\epsilon) = 1$ and $z_i = y_i$, which leads us back to the standard DPO loss, see~\eqref{eq:DPO-loss}. Second, if we consider log-linear policy,~\eqref{eq:DPO-gen} reduces to~\eqref{eq:keyloss} (up to some scaling of $\beta$). Third, if there is only privacy (or similar random flipping noise with a known flipping rate as in~\citet{chowdhury2024provably}), one can verify that the above loss is equivalent to the one in~\citet{chowdhury2024provably} (see their Eq. 12, which is called rDPO), up to some simple rescaling. Thus, in this sense, compared to the sub-optimal rate of $\mathcal{O}(1/n^{1/4})$ for the log-linear policy class established in~\citet{chowdhury2024provably}, we give the first $\mathcal{O}(1/\sqrt{n})$ rate for private or ``robust'' DPO. One can also follow a similar approach as above by simply replacing the policy-parameterized reward $r_{\beta,i}^{\pi, \pi_{\mathrm{sft}}}$ by a reward function in a reward function class for RLHF. Then, a similar method as in Algorithm 1 of~\citet{zhan2023provable} can be adopted for introducing pessimism.

\section{Experiments on DPO and rDPO under Privacy and Corruption}
\label{app:exp}
As mentioned in the last section, we provide the first results for rDPO~\cite{chowdhury2024provably} under both privacy and corruption with a log-linear policy class (cf. Theorem~\ref{thm:sub-DPO}). In this section, we would like to empirically demonstrate its performance with a general function class, i.e., neural nets.

\subsection{Experiment Setup}
\textbf{Dataset.} We utilize GPT-4o to generate a synthetic dataset, referred to as \texttt{finance\_preference}, which comprises $1697$ preference samples. Each sample includes a prompt related to a financial scenario and two possible responses, where ``rejected'' represents the high-risk option and ``chosen'' represents the low-risk option. This labeling can be viewed as private or sensitive information. For illustrative examples from our dataset, please refer to Appendix~\ref{app:add_exp}. 
For SFT training, we construct the \texttt{finance\_sft} dataset by simply concatenating the prompt with the corresponding ``chosen'' response.

\textbf{SFT Training.} We begin by fine-tuning GPT2-large using the \texttt{finance\_sft} dataset to obtain the SFT policy, $\pi_{\mathrm{sft}}$. For this, we directly utilize the SFT trainer from the Transformer Reinforcement Learning (TRL) library~\cite{vonwerra2022trl}, with the hyperparameters listed in Table~\ref{tab:sft}.

\textbf{DPO and rDPO Training.} For alignment training, we split the dataset into $85\%$ for training, $5\%$ for validation, and $10\%$ for testing. 
For DPO, we utilize the implementation provided in the TRL library, using the hyperparameters listed in Table~\ref{tab:dpo}. Similarly, for rDPO, we leverage the TRL implementation, which corresponds to DPO with \texttt{lose\_type} set to ``robust.'' 
In the private setting with a privacy budget of $\epsilon$, one can simply set \texttt{label\_smoothing} to the flip rate, given by $\frac{1}{e^{\epsilon} + 1}$. This setting recovers the same algorithm presented in our main paper when the policy class is log-linear. 
Finally, we use the same set of hyperparameters for rDPO as in DPO training.

\textbf{CTL and LTC Settings.} The LDP mechanism follows the randomized response model, where the flip rate is given by $\frac{1}{e^{\epsilon} + 1}$. For corruption, we assume that a randomly sampled subset of $O(\alpha n)$ labels are always flipped compared to the true label.
To implement both privacy and corruption, we introduce a mask variable initialized to $0$ for each sample. The LDP mechanism flips the mask variable with probability $\frac{1}{e^{\epsilon} + 1}$, while the corruption mechanism sets the mask to $1$ with probability $\alpha$. Finally, after CTL or LTC processing, labels (``chosen'' and ``rejected'') are flipped if the corresponding mask value is $1$. At this point, an astute reader may notice that LTC results in a higher number of $1$s in the final mask variables compared to CTL

\textbf{Evaluation.} We evaluate our trained models $\pi_{\mathrm{DPO}}$, $\pi_{\mathrm{rDPO}}$, and $\pi_{\mathrm{SFT}}$ by generating responses for the test dataset using the hyperparameters listed in Table~\ref{tab:gen}. To assess performance, we employ the \texttt{llama3:70b} model as a judge, comparing responses from $\pi_{\mathrm{DPO}}$ and $\pi_{\mathrm{rDPO}}$ against those from $\pi_{\mathrm{SFT}}$. 
Finally, we use the win rate from these comparisons as our primary performance metric, following the methodology outlined in the DPO paper~\cite{rafailov2024direct}. We compute the average and standard deviation across five seeds.

\subsection{Results}
\textbf{Private Case.} We first compare the performance of DPO and rDPO in the private setting, as shown in Table~\ref{tab:priv}. Due to the ``shifting and scaling'' loss used in rDPO, we observe that rDPO outperforms standard DPO in the private case. 
Interestingly, we make an additional observation: in the non-private setting, if we still introduce random label flips at a rate of approximately $1/(e^1 + 1)$, rDPO achieves even better performance than DPO. This suggests that deliberately adding noise to labels can enhance performance, resembling the well-known effect of label smoothing in classification tasks. We also tend to believe that this injected noise also somewhat help to address the overoptimization issues in DPO-style algorithms. 
We plan to further explore this phenomenon on a larger dataset. 
Finally, we note that this observation does not contradict our main theoretical result, which provides an upper bound in the worst case.

\textbf{Private and Corruption Cases.} We now examine whether the separation between CTL and LTC persists beyond the linear setting. As shown in Table~\ref{tab:priv-cor}, rDPO demonstrates better performance under CTL compared to LTC. Furthermore, the performance gap widens as $\epsilon$ decreases. These observations are consistent with the theoretical insights derived from the linear setting.

\begin{table}[h]
    \centering
    \caption{Comparison of win rates (\%) for DPO and rDPO across different values of privacy budget $\epsilon$.}
    \label{tab:priv}
    \begin{tabular}{ccc}
        \toprule
        $\epsilon$ & rDPO winrate (\%) & DPO winrate (\%) \\
        \midrule
        0.1  & \textbf{59.0 $\pm$ 4.7} & 55.4 $\pm$ 1.1 \\
        0.5 & \textbf{65.8 $\pm$ 5.6} & 60.4 $\pm$ 3.0 \\
        \bottomrule
    \end{tabular}
\end{table}

\begin{table}[t]
    \centering
    \caption{Comparison of win rates (\%) for rDPO under CTL and LTC.}
    \label{tab:priv-cor}
    \begin{tabular}{ccc}
        \toprule
        $(\epsilon, \alpha)$ & win rates (\%) under CTL & win rates (\%) under LTC \\
        \midrule
        $(1, 0.1)$  & \textbf{69.6 $\pm$ 5.1} & 65.4 $\pm$ 5.0 \\
        $(0.5, 0.1)$ & \textbf{64.4 $\pm$ 2.8} & 58.6 $\pm$ 2.6 \\
        \bottomrule
    \end{tabular}
\end{table}

\section{Proofs}
\label{sec:proof}
This section presents the proofs for our main results in previous sections.

\subsection{Proof of Proposition~\ref{prop:RLHF-master}}
\label{proof:RLHF-master}
\begin{proof}
    By definition, for any $\pi^{\dagger}$, we have 
    \begin{align*}
         \mathrm{SubOpt}(\hat{\pi},\pi^{\dagger}) &= J(\pi^{\dagger}) - J(\hat{\pi})\\
         &= \underbrace{J(\pi^{\dagger}) - \hat{J}(\pi^{\dagger})}_{\cT_1} + \underbrace{\hat{J}(\pi^{\dagger}) - \hat{J}(\hat{\pi})}_{\cT_2} + \underbrace{\hat{J}(\hat{\pi}) - J(\hat{\pi})}_{\cT_3},
    \end{align*}
    holds for any function $\hat{J}(\cdot)$. For the first case when $\eta = 0$, we have $\hat{J}(\pi) = \mathbb{E}_{s\sim \rho, a\sim \pi(\cdot|s)}[\phi(s,a)^{\top} \hat{\theta}]$. By the greedy algorithm in Algorithm~\ref{alg:RLHF}, we have $\cT_2 \le 0$. Further, under Assumption~\ref{ass:lin-reward}, we can rewrite $\cT_1$ and $\cT_3$ as 
    \begin{align*}
        \cT_1 = \mathbb{E}_{s\sim \rho, a\sim \pi^{\dagger}(\cdot|s)}[\phi(s,a)^{\top} (\theta^{\star} - \hat{\theta})],
         \quad \cT_3 = \mathbb{E}_{s\sim \rho, a\sim \hat{\pi}(\cdot|s)}[\phi(s,a)^{\top} (\hat{\theta}- \theta^{\star} )]~.
    \end{align*}
    By the boundedness assumption, both terms can be upper bounded by $\norms{\hat{\theta} - \theta^{\star}}_2$, which implies the first result by the fact that $\bt = \theta^{\star}$. 

    For the second case when $\eta = 1$, we introduce the following notation
    \begin{align}
    \label{eq:Jpi}
        J(\pi; \theta^{\star}) := \mathbb{E}_{s\sim \rho, a\sim \pi(\cdot|s)}[\phi(s,a)^{\top} \theta^{\star}] = J(\pi)~.
    \end{align}
    Thus, we have $
        \mathbb{E}_{s \sim \rho, a\sim \pi(\cdot|s)}[\inner{\theta}{\phi(s,a)}] - \mathbb{E}_{s \sim \rho, a\sim \pi_{\mathrm{ref}}(\cdot|s)}[\inner{\theta}{\phi(s,a)}] = J(\pi;\theta) - J(\pi_{\mathrm{ref}};\theta).$
    Let $\theta_{\pi}^{\mathrm{inf}} = \argmin_{\theta \in \Theta(\hat{\theta},\lambda)} J(\pi;\theta) - J(\pi_{\mathrm{ref}};\theta)$. Hence $\hat{J}(\pi)= J(\pi;\theta_{\pi}^{\mathrm{inf}}) - J(\pi_{\mathrm{ref}};\theta_{\pi}^{\mathrm{inf}})$. Then, we have
    \begin{align*}
        \mathrm{SubOpt}(\hat{\pi},\pi^{\dagger}) &= J(\pi^{\dagger}) - J(\hat{\pi})\\
        & = J(\pi^{\dagger};\theta^{\star}) - J(\pi_{\mathrm{ref}};\theta^{\star}) - \left(J(\hat{\pi};\theta^{\star}) - J(\pi_{\mathrm{ref}};\theta^{\star}) \right)\\
        & \lep{a} \left(J(\pi^{\dagger};\theta^{\star}) - J(\pi_{\mathrm{ref}};\theta^{\star})\right) - \left(J(\pi^{\dagger};\theta_{\pi^{\dagger}}^{\mathrm{inf}}) - J(\pi_{\mathrm{ref}};\theta_{\pi^{\dagger}}^{\mathrm{inf}}) \right)\\
         &\quad +  \left(J(\hat{\pi};\theta_{\hat{\pi}}^{\mathrm{inf}}) - J(\pi_{\mathrm{ref}};\theta_{\hat{\pi}}^{\mathrm{inf}}) \right) - \left(J(\hat{\pi};\theta^{\star}) - J(\pi_{\mathrm{ref}};\theta^{\star}) \right)\\
         & \lep{b}\left(J(\pi^{\dagger};\theta^{\star}) - J(\pi_{\mathrm{ref}};\theta^{\star})\right) - \left(J(\pi^{\dagger};\theta_{\pi^{\dagger}}^{\mathrm{inf}}) - J(\pi_{\mathrm{ref}};\theta_{\pi^{\dagger}}^{\mathrm{inf}}) \right)\\
         & = \underbrace{\left(J(\pi^{\dagger};\theta^{\star}) - J(\pi_{\mathrm{ref}};\theta^{\star})\right) - \left(J(\pi^{\dagger};\hat{\theta}) - J(\pi_{\mathrm{ref}};\hat{\theta}) \right)}_{\cT_4} \\
         &\quad + \underbrace{\left(J(\pi^{\dagger};\hat{\theta}) - J(\pi_{\mathrm{ref}};\hat{\theta}) \right) - \left(J(\pi^{\dagger};\theta_{\pi^{\dagger}}^{\mathrm{inf}}) - J(\pi_{\mathrm{ref}};\theta_{\pi^{\dagger}}^{\mathrm{inf}}\right) }_{\cT_5},
    \end{align*}
    where $(a)$ holds by the greedy algorithm; $(b)$ holds by the definition of $\theta_{\hat{\pi}}^{\mathrm{inf}}$ and the fact that $\theta^{\star} \in \Theta(\hat{\theta},\lambda)$ by~\eqref{eq:w-conc}. To bound $\cT_4$ and $\cT_5$, we use the definition in~\eqref{eq:Jpi}, the concentration in~\eqref{eq:w-conc} and the definition of $\Theta(\hat{\theta},\lambda)$ with $\theta_{\pi^{\dagger}}^{\mathrm{inf}} \in \Theta(\hat{\theta},\lambda)$, and obtain that 
    \begin{align*}
        \cT_4 + \cT_5 \le 2 \Gamma(n,d,\delta,\lambda) \norm{\mathbb{E}_{s\sim \rho}[\phi(s,\pi^{\dagger}(s)) - \phi(s,\pi_{\mathrm{ref}}(s))] }_{(\hat{\Sigma} + \lambda I)^{-1}},
    \end{align*}
    where we let $\phi(s, \pi(s)) := \mathbb{E}_{a \sim \pi(\cdot|s)}[\phi(s,a)]$. This finishes the proof.
\end{proof}

\subsection{Proof of Corollary~\ref{cor:RLHF}}
\label{proof:cor-RLHF}
\begin{proof}
    We need only focus on the last term in~\eqref{eq:sub-RLHF}. Note that 
    \begin{align*}
        \mathbb{E}_{s\sim \rho}[\phi(s,\pi^{\dagger}(s)) - \phi(s,\pi_{\mathrm{ref}}(s))] = \mathbb{E}_{s \sim \rho, a\sim\pi^{\dagger}(\cdot|s), a'\sim \pi_{\mathrm{ref}}(\cdot|s)}[\phi(s,a) - \phi(s,a')]~.
    \end{align*}
    Thus, we have 
    \begin{align*}
    &\norm{\mathbb{E}_{s\sim \rho}[\phi(s,\pi^{\dagger}(s)) - \phi(s,\pi_{\mathrm{ref}}(s))] }_{(\hat{\Sigma} + \lambda I)^{-1}}^2 \\
    & \quad = \norm{\mathbb{E}_{s \sim \rho, a\sim\pi^{\dagger}(\cdot|s), a'\sim \pi_{\mathrm{ref}}(\cdot|s)}[\phi(s,a) - \phi(s,a')]}_{(\hat{\Sigma} + \lambda I)^{-1}}^2\\
    & \quad \lep{a} 3\norm{\mathbb{E}_{s \sim \rho, a\sim\pi^{\dagger}(\cdot|s), a'\sim \pi_{\mathrm{ref}}(\cdot|s)}[\phi(s,a) - \phi(s,a')]}_{(\Sigma_{\pi_{\mathrm{sft}}, \pi_{\mathrm{sft}}}^{\mathrm{diff}} + \lambda I)^{-1}}^2 \\
    & \quad \lep{b} 3 \cdot  \kappa(\pi^{\dagger}, \pi_{\mathrm{ref}}) \cdot \norm{\mathbb{E}_{s \sim \rho, a\sim\pi^{\dagger}(\cdot|s), a'\sim \pi_{\mathrm{ref}}(\cdot|s)}[\phi(s,a) - \phi(s,a')]}_{\left(\Sigma_{\pi^{\dagger}, \pi_{\mathrm{ref}}}^{\mathrm{diff}}\right)^{-1}}^2\\
    & \quad \lep{c} 3 \cdot  \kappa(\pi^{\dagger}, \pi_{\mathrm{ref}}) \cdot {\mathbb{E}_{s \sim \rho, a\sim\pi^{\dagger}(\cdot|s), a'\sim \pi_{\mathrm{ref}}(\cdot|s)}\left[ (\phi(s,a) - \phi(s,a'))^{\top}\left(\Sigma_{\pi^{\dagger}, \pi_{\mathrm{ref}}}^{\mathrm{diff}}\right)^{-1}(\phi(s,a) - \phi(s,a'))\right]}\\
    & \quad \ep{d} 3 \cdot  \kappa(\pi^{\dagger}, \pi_{\mathrm{ref}}) \cdot \mathrm{trace}(I),
    \end{align*}
    where $(a)$ holds by Lemma~\ref{lem:key} for $\lambda \ge \Omega\left(\frac{d}{n} \cdot \ln (n/\delta)\right)$; $(b)$ follows by the definition of $\kappa(\pi^{\dagger}, \pi_{\mathrm{ref}})$ in~\eqref{eq:kappa}; $(c)$ holds by Jensen's inequality; $(d)$ simply follows from the interchange of trace and expectation along with the cyclic property of trace. Taking the square root, yields the required result.
\end{proof}

\subsection{Proof of Proposition~\ref{prop:dpo-master}}
\label{proof:dpo-master}
\begin{proof}
We first show that under Assumption~\ref{ass:log-linear}, the labels are generated via a logistic regression model. This follows from a direct computation. In particular, by~\eqref{eq:btl},~\eqref{eq:rew-dpo},~\eqref{eq:log-linear} , we have 
\begin{align*}
     \mathbb{P}\left( y_i=1 |s_i, a_i^0, a_i^1\right) &= \frac{1}{1 + \exp(r^{\star}(s_i, a_i^0) -  r^{\star}(s_i, a_i^1))}\\
     & = \sigma(r^{\star}(s_i, a_i^1) -  r^{\star}(s_i, a_i^0))\\
     & = \sigma\left(\beta \ln \frac{\pi^{\star}(a_i^1|s_i)}{\pi^{\star}(a_i^0|s_i)} - \beta \ln \frac{\pi_{\mathrm{sft}}(a_i^1|s_i)}{\pi_{\mathrm{sft}}(a_i^0|s_i)}\right)\\
     &= \sigma\left(\inner{\beta(\theta^{\star} - \theta_{\mathrm{sft}})}{\phi(s_i, a_i^1) - \phi(s_i, a_i^0)}\right).
\end{align*}
Thus, with $\bt = \beta(\theta^{\star} - \theta_{\mathrm{sft}})$ and $x_i = \phi(s_i, a_i^1) - \phi(s_i, a_i^0)$, we have that each label $y_i$ follows from logistic regression in~\eqref{eq:logistic}.

We now turn to the suboptimality part. 
\begin{align*}
     \mathrm{SubOpt}(\hat{\pi},\pi^{\star}) &= \mathbb{E}_{s \sim \rho, a\sim \pi^{\star}}[r^{\star}(s,a)] - \mathbb{E}_{s \sim \rho, a\sim \hat{\pi}}[r^{\star}(s,a)]\\
     &\lep{a} \Delta_{\max} \mathbb{E}_{s\sim\rho}\left[\mathrm{TV}(\pi^{\star}(\cdot|s), \hat{\pi}(\cdot |s))\right]\\
     & \lep{b} \Delta_{\max}\mathbb{E}_{s\sim\rho}\left[\sqrt{1/2\cdot \mathrm{KL}(\pi^{\star}(\cdot|s), \hat{\pi}(\cdot |s))}\right]\\
     & \lep{c} \Delta_{\max} \sqrt{1/2\cdot \mathbb{E}_{s\sim \rho} \left[ \mathrm{KL}(\pi^{\star}(\cdot|s), \hat{\pi}(\cdot |s))\right]},
\end{align*}
where in $(a)$ we have $\Delta_{\max} = \max_{s,a} \left(r^{\star}(s,a) - \beta \ln Z_{\beta}(s)\right) \le 2 \beta B$, $(b)$ follows from Pinsker’s inequality, and $(c)$ holds by Jensen’s inequality. 

Then, since both $\pi^{\star}$ and $\hat{\pi}'$ are log-linear policies with parameters $\theta^{\star}$ and $\hat{\theta}'$, respectively,  by a direct calculation and Taylor expansion, we have 
\begin{align*}
     \mathrm{KL}(\pi^{\star}(\cdot|s), \hat{\pi}(\cdot |s)) = \frac{1}{2}(\hat{\theta}' - \theta^{\star})^{\top} A_s(\theta)  (\hat{\theta}' - \theta^{\star}),
\end{align*}
where $A_s(\theta):= \mathbb{E}_{a \sim \pi_{\theta}(\cdot|s)}[\phi(s,a)\phi(s,a)^{\top}] - \mathbb{E}_{a \sim \pi_{\theta}(\cdot|s)}[\phi(s,a)]\mathbb{E}_{a \sim \pi_{\theta}(\cdot|s)}[\phi(s,a)]$ for some $\theta$ between $\theta^{\star}$ and $\hat{\theta}'$. By independently sampling of $a, a'\sim \pi_{\theta}(\cdot |s)$, we have $\mathbb{E}_{s\sim \rho}[A_s(\theta)] = \frac{1}{2} \Sigma_{\pi_{\theta}, \pi_{\theta}}^{\mathrm{diff}}$ (cf.~\eqref{eq:pop}). Combing all of the above with the definition of $\kappa_{\Pi}$ in Definition~\ref{def:kappa}, yields that 
\begin{align*}
     \mathrm{SubOpt}(\hat{\pi},\pi^{\star}) \le \sqrt{\kappa_{\Pi}} \cdot \frac{\Delta_{\max}}{2\sqrt{2}} \cdot \norm{\hat{\theta}' - \theta^{\star}}_{\Sigma_{\pi_{\mathrm{sft}},\pi_{\mathrm{sft}}}^{\mathrm{diff}}}~.
\end{align*}
Note that $\Sigma_{\pi_{\mathrm{sft}},\pi_{\mathrm{sft}}}^{\mathrm{diff}}$ is the corresponding population matrix of $\hat{\Sigma}$. Thus, by Lemma~\ref{lem:key}, for $\lambda \ge \Omega\left(\frac{d}{n} \cdot \ln (n/\delta)\right)$, we have 
\begin{align*}
     \mathrm{SubOpt}(\hat{\pi},\pi^{\star}) \le \sqrt{\kappa_{\Pi}}\cdot \frac{\sqrt{3}\Delta_{\max}}{2\sqrt{2}} \norm{\hat{\theta}' - \theta^{\star}}_{\hat{\Sigma} + \lambda I}~.
\end{align*}
Finally, note that $\hat{\theta}' - \theta^{\star} = (\hat{\theta} - \theta_{\mathrm{true}}) / \beta$. Then, by~\eqref{eq:wconc-dpo} and $\Delta_{\max} \le 2 \beta B$, we have  the final result
\[
     \mathrm{SubOpt}(\hat{\pi},\pi^{\star}) \le \frac{\sqrt{3}}{\sqrt{2}} \cdot \sqrt{\kappa_{\Pi}} \cdot B \cdot \Gamma(n,d,\delta,\lambda)~. \qedhere
\]
\end{proof}

\subsection{Proof of Theorem~\ref{thm:est-semi}}
\label{proof:est-semi}
\begin{proof}
    We divide the proof into CTL, LTC, and CLC cases. Before that, we will present some common properties of our new loss, which will be used in all three cases. 

    Recall that our new loss is given by
    \begin{align*}
   \widetilde{\cL}(\theta) = -\frac{1}{n}\sum_{i=1}^n \widetilde{\ell}_i(\theta) \quad \text{where} \quad \widetilde{\ell}_i(\theta) = \ln(1-\sigma(\theta^{\top}x_i)) + (z_i  + \sigma(\epsilon)-1) c(\epsilon) \cdot \theta^{\top}x_i,
\end{align*}
where $c(\epsilon):= \frac{1}{2\sigma(\epsilon)-1} = \frac{e^{\epsilon}+1}{e^{\epsilon}-1}$. We will need its gradient and Hessian in our proof, given by 
\begin{align}
    \nabla_{\theta} \widetilde{\cL}(\theta) &= -\frac{1}{n} \sum_{i=1}^n \left[ c(\epsilon)(z_i  + \sigma(\epsilon) - 1)  - \sigma({\theta}^{\top}x_i)\right] x_i,\label{eq:gradient-app}\\
    \nabla_{\theta}^2 \widetilde{\cL}(\theta) & = \frac{1}{n} \sum_{i=1}^n \left[\sigma(\theta^{\top} x_i)(1-\sigma(\theta^{\top}x_i))\right]x_i x_i^{\top}\label{eq:hessian-app}, 
\end{align}
where we use the simple fact that $\sigma'(z) = \sigma(z) (1-\sigma(z))$.

Let $\Delta:= \hat{\theta} - \bt$, by the fact that $\hat{\theta}$ minimizes the loss and~\eqref{eq:hessian-app}, we have 
\begin{align}
    \gamma \norm{\Delta}_{\hat{\Sigma}}^2 
    &\le  \widetilde{\cL}(\bt + \Delta) - \widetilde{\cL}(\bt) - \inner{\nabla \widetilde{\cL}(\bt)}{\Delta} \le - \inner{\nabla \widetilde{\cL}(\bt)}{\Delta} \nonumber \\
    &\le \norm{\nabla \widetilde{\cL}(\bt)}_{(\hat{\Sigma} + \lambda I)^{-1}} \norm{\Delta}_{\hat{\Sigma} + \lambda I}, \label{eq:chain}
\end{align}
where $\gamma = 1/(2 + \exp(-B') + \exp(B'))$ by the boundedness condition. Thus, the key is to bound the term $\norms{\nabla \widetilde{\cL}(\bt)}_{(\hat{\Sigma} + \lambda I)^{-1}}$, which will be handled separately for each case later. 

For now, let us suppose we have the following high probability bound:
\begin{align}
\label{eq:temp}
    \norms{\nabla \widetilde{\cL}(\bt)}_{(\hat{\Sigma} + \lambda I)^{-1}} \le f(n,d,\delta, \lambda),
\end{align}
for some function $f$, and proceed to establish the final bound.  In particular, by the boundedness condition for $\Theta_{B'}$ and~\eqref{eq:chain}, we have 
\begin{align*}
    \gamma \norm{\Delta}_{\hat{\Sigma}+ \lambda I}^2 \le \norm{\nabla \widetilde{\cL}(\bt)}_{(\hat{\Sigma} + \lambda I)^{-1}} \norm{\Delta}_{\hat{\Sigma} + \lambda I} + 4\gamma \lambda B'^2,
\end{align*}
which implies that 
\begin{align}
\label{eq:template}
   \norm{\hat{\theta} -\bt}_{\hat{\Sigma}+ \lambda I} = \norm{\Delta}_{\hat{\Sigma}+ \lambda I} \le C \left(\frac{1}{\gamma} \cdot f(n,d,\delta, \lambda) + B' \sqrt{\lambda}\right),
\end{align}
for some universal constant $C$.

Thus, it remains to establish the high probability bound in~\eqref{eq:temp} under the three settings. To this end, we will fully utilize the following claims. See Appendix~\ref{sec:proof-claims} for the proofs.

\begin{claim}
\label{clm:tail-subGaussian}
Let $\eta_i$ be zero-mean i.i.d sub-Gaussian with parameter $\sigma$, condition on $x_i$. Then, for any $\delta \in (0,1)$ and $\lambda >0$, with probability at least $1-\delta$,
\begin{align*}
    \norm{\frac{1}{n}\sum_{i=1}^n \eta_i x_i}_{(\hat{\Sigma} + \lambda I)^{-1}} \le C \cdot \sigma \cdot \sqrt{\frac{d + \ln(1/\delta)}{n}},
\end{align*}
for some universal constant $C$.
\end{claim}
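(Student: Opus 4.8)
The plan is to first condition on the covariates $\{x_i\}_{i=1}^n$, so that $\hat{\Sigma}$ becomes a fixed matrix and the only randomness is the independent, zero-mean, sub-Gaussian noise $\eta=(\eta_1,\dots,\eta_n)$. Writing $g=\frac1n\sum_{i=1}^n\eta_i x_i$, I would reduce the weighted norm to a Euclidean one,
\[
\norm{g}_{(\hat{\Sigma}+\lambda I)^{-1}} = \norm{(\hat{\Sigma}+\lambda I)^{-1/2}g}_2 = \sup_{u\in S^{d-1}}\inner{u}{(\hat{\Sigma}+\lambda I)^{-1/2}g},
\]
turning the task into a uniform control of a linear form in $\eta$ over the unit sphere $S^{d-1}$.

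For a fixed direction $u$, the quantity $\inner{u}{(\hat{\Sigma}+\lambda I)^{-1/2}g}=\frac1n\sum_i\eta_i\inner{(\hat{\Sigma}+\lambda I)^{-1/2}u}{x_i}$ is, conditionally on $\{x_i\}$, a sum of independent centered sub-Gaussians, hence sub-Gaussian with variance proxy $\frac{\sigma^2}{n^2}\sum_i\inner{(\hat{\Sigma}+\lambda I)^{-1/2}u}{x_i}^2$. The key algebraic step is that this sum equals $\frac{\sigma^2}{n}\,u^\top(\hat{\Sigma}+\lambda I)^{-1/2}\hat{\Sigma}(\hat{\Sigma}+\lambda I)^{-1/2}u$ (using $\sum_i x_ix_i^\top=n\hat{\Sigma}$), and since $(\hat{\Sigma}+\lambda I)^{-1/2}\hat{\Sigma}(\hat{\Sigma}+\lambda I)^{-1/2}\preceq I$, it is at most $\frac{\sigma^2}{n}$. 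Thus every directional projection is sub-Gaussian with parameter $\sigma/\sqrt n$, yielding the tail $\exp(-ns^2/(2\sigma^2))$.

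To pass from one direction to the supremum, I would take a $\frac12$-net $\mathcal{N}$ of $S^{d-1}$ with $|\mathcal{N}|\le 5^d$, invoke the standard fact $\norm{(\hat{\Sigma}+\lambda I)^{-1/2}g}_2\le 2\max_{u\in\mathcal{N}}\inner{u}{(\hat{\Sigma}+\lambda I)^{-1/2}g}$, and union bound over $\mathcal{N}$. Setting the total failure probability to $\delta$ gives $s=\sigma\sqrt{2(d\ln 5+\ln(1/\delta))/n}$, and hence the claimed bound $C\sigma\sqrt{(d+\ln(1/\delta))/n}$. A point worth stressing for correctness is that the net lives on the sphere and the whitening uses the fixed matrix $(\hat{\Sigma}+\lambda I)^{-1/2}$, so $\mathcal{N}$ can be chosen independently of $\eta$; this is precisely what makes the conditional union bound legitimate.

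The main obstacle I anticipate is securing the clean $d+\ln(1/\delta)$ dependence with no spurious $\ln n$ or $\ln(1/\lambda)$ factors, which a naive self-normalized (log-determinant) martingale argument would introduce. The covering approach sidesteps this because the operator inequality $(\hat{\Sigma}+\lambda I)^{-1/2}\hat{\Sigma}(\hat{\Sigma}+\lambda I)^{-1/2}\preceq I$ caps the variance proxy uniformly at $\sigma^2/n$ irrespective of $\lambda$. As an alternative route, I could write $\norm{g}_{(\hat{\Sigma}+\lambda I)^{-1}}^2=\frac{1}{n^2}\eta^\top M\eta$ with $M_{ij}=x_i^\top(\hat{\Sigma}+\lambda I)^{-1}x_j$ and apply Hanson--Wright; the bookkeeping then reduces to the effective-dimension estimates $\mathrm{tr}(M)=n\sum_j \hat{\mu}_j/(\hat{\mu}_j+\lambda)\le nd$, $\norm{M}_{\mathrm{op}}\le n$, and $\norm{M}_F\le n\sqrt d$ (with $\hat{\mu}_j$ the eigenvalues of $\hat{\Sigma}$), which deliver the same rate after the AM--GM step $\sqrt{d\ln(1/\delta)}\le\frac12(d+\ln(1/\delta))$.
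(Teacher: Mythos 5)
Your proposal is correct, but it takes a genuinely different route from the paper. The paper's proof is a direct application of a known tail bound for quadratic forms (Hsu--Kakade--Zhang, their Lemma on quadratic forms): it writes $\norm{\frac{1}{n}\sum_i \eta_i x_i}_{(\hat{\Sigma}+\lambda I)^{-1}}^2 = \eta^{\top} M \eta$ with $M = \frac{1}{n^2}X(\hat{\Sigma}+\lambda I)^{-1}X^{\top}$, verifies $\mathrm{trace}(M) \le d/n$, $\mathrm{trace}(M^2)\le d/n^2$, $\norm{M}\le 1/n$, and reads off the bound --- essentially identical to the \emph{alternative} Hanson--Wright route you sketch at the end (your effective-dimension estimates $\mathrm{tr}(M)\le nd$, $\norm{M}_{\mathrm{op}}\le n$, $\norm{M}_F\le n\sqrt{d}$ match the paper's after accounting for the $1/n^2$ normalization). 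Your \emph{primary} argument --- whitening by the fixed matrix $(\hat{\Sigma}+\lambda I)^{-1/2}$, bounding each directional projection via the operator inequality $(\hat{\Sigma}+\lambda I)^{-1/2}\hat{\Sigma}(\hat{\Sigma}+\lambda I)^{-1/2} \preceq I$ so that the conditional variance proxy is capped at $\sigma^2/n$ uniformly in $\lambda$, and then a $\tfrac12$-net union bound over $S^{d-1}$ --- is a valid and more elementary alternative: it needs only scalar sub-Gaussian tails plus standard covering facts, rather than a matrix concentration lemma, and your emphasis that the net can be fixed independently of $\eta$ (conditionally on the covariates) is exactly the point that legitimizes the union bound. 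What the paper's route buys is brevity and slightly cleaner constants (one citation, three trace computations); what your route buys is self-containedness and a transparent explanation of why no $\ln n$ or $\ln(1/\lambda)$ factors appear, namely the uniform variance cap. Both yield the same $C\sigma\sqrt{(d+\ln(1/\delta))/n}$ rate.
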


\begin{claim}
\label{clm:bias}
Let $b = (b_1,\ldots, b_n)$ be a vector that at least $1-\alpha n$ elements are zero, and the rest are bounded by some constant $\zeta >0$, i.e., $|b_i| \le \zeta$. Then, we have 
\begin{align*}
    \norm{\frac{1}{n}\sum_{i=1}^n b_i x_i}_{(\hat{\Sigma} + \lambda I)^{-1}} \le \zeta \sqrt{\alpha}~.
\end{align*}
\end{claim}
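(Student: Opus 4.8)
The plan is to bound the Mahalanobis norm through its variational (dual-norm) characterization rather than by manipulating $(\hat{\Sigma}+\lambda I)^{-1}$ directly, which avoids having to reason about the possible rank-deficiency of the sparse sub-sample. Write $S = \{i : b_i \neq 0\}$, so the sparsity hypothesis reads $|S| \le \alpha n$, and set $u := \frac{1}{n}\sum_{i=1}^n b_i x_i = \frac{1}{n}\sum_{i\in S} b_i x_i$. Since $M := \hat{\Sigma} + \lambda I \succ 0$ whenever $\lambda > 0$, the norms $\norm{\cdot}_M$ and $\norm{\cdot}_{M^{-1}}$ are dual, and I would start from the identity
\begin{align*}
    \norm{u}_{(\hat{\Sigma} + \lambda I)^{-1}} = \sup_{w \neq 0} \frac{\inner{u}{w}}{\norm{w}_{\hat{\Sigma} + \lambda I}}~.
\end{align*}

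I would then fix an arbitrary $w$ and estimate the numerator by Cauchy--Schwarz over the support $S$:
\begin{align*}
    \inner{u}{w} = \frac{1}{n}\sum_{i\in S} b_i \inner{x_i}{w} \le \frac{1}{n}\Big(\sum_{i\in S} b_i^2\Big)^{1/2}\Big(\sum_{i\in S}\inner{x_i}{w}^2\Big)^{1/2}~.
\end{align*}
The first factor is handled by the two hypotheses jointly: each nonzero entry obeys $|b_i|\le \zeta$ and there are at most $\alpha n$ of them, giving $\sum_{i\in S} b_i^2 \le \alpha n \zeta^2$. For the second factor I would enlarge the sum from $S$ to all of $[n]$ (every term is nonnegative) and recognize a quadratic form in the empirical covariance,
\begin{align*}
    \sum_{i\in S}\inner{x_i}{w}^2 \le \sum_{i=1}^n \inner{x_i}{w}^2 = n\, w^\top \hat{\Sigma}\, w \le n\,\norm{w}_{\hat{\Sigma} + \lambda I}^2~,
\end{align*}
where the last inequality drops the positive-semidefinite $\lambda I$ contribution.

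Multiplying the two factors gives $\inner{u}{w}\le \zeta\sqrt{\alpha}\,\norm{w}_{\hat{\Sigma}+\lambda I}$ for every $w$, and substituting back into the dual-norm identity yields the claimed $\norm{u}_{(\hat{\Sigma}+\lambda I)^{-1}}\le \zeta\sqrt{\alpha}$. There is no real analytic difficulty here; the one place that rewards care is the choice made at the outset. A direct attack on $u^\top(\hat{\Sigma}+\lambda I)^{-1}u$ tempts one into a PSD comparison such as $(\hat{\Sigma}+\lambda I)^{-1}\preceq\big(\tfrac{1}{n}\sum_{i\in S}x_i x_i^\top\big)^{-1}$, which is ill-defined precisely in the regime of interest, $|S| < d$, where the sparse empirical covariance is singular. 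The dual-norm route sidesteps this, and along the way the regularizer $\lambda$ plays no role beyond guaranteeing $M \succ 0$ so the pairing is well defined.
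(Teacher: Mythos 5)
Your proof is correct, but it takes a genuinely different route from the paper's. The paper works on the coefficient side: writing $X \in \Real^{n\times d}$ for the matrix whose $i$-th row is $x_i$, it expresses the squared norm as the quadratic form $b^{\top} M b$ with $M = \frac{1}{n^2} X (\hat{\Sigma}+\lambda I)^{-1} X^{\top}$, and then combines the operator-norm bound $\norm{M} \le 1/n$ (already computed in its proof of Claim~\ref{clm:tail-subGaussian}) with $\norm{b}^2 \le \alpha n \zeta^2$ to get $b^{\top} M b \le \alpha \zeta^2$. You instead work on the feature side, via the dual-norm characterization, Cauchy--Schwarz over the support, and the domination $\sum_{i=1}^n x_i x_i^{\top} = n\hat{\Sigma} \preceq n(\hat{\Sigma}+\lambda I)$. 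The two arguments are dual packagings of the same underlying facts --- the $\ell_2$ bound $\norm{b}^2 \le \alpha n\zeta^2$ and the inequality $\hat{\Sigma} \preceq \hat{\Sigma}+\lambda I$; indeed $\norm{M} \le 1/n$ is exactly the statement $X(\hat{\Sigma}+\lambda I)^{-1}X^{\top} \preceq n I$, which is the conjugate of your displayed inequality. What the paper's route buys is brevity by reuse: the same matrix $M$ and its norm bound serve both Claim~\ref{clm:tail-subGaussian} and Claim~\ref{clm:bias}, so the latter becomes a two-line corollary. What your route buys is self-containedness and transparency: no operator-norm computation is needed, and it is explicit where the sparsity and the regularizer each enter. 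One correction to your closing remark, though: a direct attack on $u^{\top}(\hat{\Sigma}+\lambda I)^{-1}u$ does not force the ill-defined comparison you warn against --- the paper's quadratic-form argument is precisely such a direct attack, and it never inverts the rank-deficient sparse sub-covariance; it only uses the spectral bound on $M$, which is well defined for any $\lambda > 0$.
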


With the above claims in hand, we are going to establish~\eqref{eq:temp} for CTL, LTC and CLC, respectively.

\noindent\textbf{CTL case.} In this case, we  rewrite the gradient in~\eqref{eq:gradient-app} as follows
\begin{align*}
    \nabla\widetilde{\cL}(\bt)&= -\frac{1}{n} \sum_{i=1}^n \left[ c(\epsilon)(z_i  + \sigma(\epsilon) - 1)  - \bar{y}_i + \bar{y}_i - y_i +y_i - \sigma({\theta}^{\top}x_i)\right] x_i,
\end{align*}
where recall that under CTL, the true label $y_i$ is first corrupted to $\bar{y}_i$, which will then be privatized to generate $z_i$. Thus, we have
\begin{align}
    &\norms{\nabla \widetilde{\cL}(\bt)}_{(\hat{\Sigma} + \lambda I)^{-1}} \nonumber\\
    \le & \underbrace{\norm{\frac{1}{n} \sum_i  \left[c(\epsilon)(z_i  + \sigma(\epsilon) - 1)  - \bar{y}_i\right] x_i}_{(\hat{\Sigma} + \lambda I)^{-1}}}_{\cT_{\text{privacy}}} + \underbrace{\norm{\frac{1}{n} \sum_i  (\bar{y}_i - y_i)x_i}_{(\hat{\Sigma} + \lambda I)^{-1}}}_{\cT_{\text{corruption}}} \nonumber\\
    &+ \underbrace{\norm{\frac{1}{n} \sum_i  \left[y_i - \sigma(\theta^{\top} x_i)\right]x_i}_{(\hat{\Sigma} + \lambda I)^{-1}}}_{\cT_{\text{standard}}}\label{eq:dec-CTL}.
\end{align}

For $\cT_{\text{privacy}}$ and $\cT_{\text{standard}}$, we can apply Claim~\ref{clm:tail-subGaussian} due to zero-mean and sub-Gaussian with parameters $\mathcal{O}(c(\epsilon))$ and $1$, respectively. Thus, we have with probability at least $1-\delta$,
\begin{align*}
    \cT_{\text{privacy}} + \cT_{\text{standard}} \le C_1 \cdot c(\epsilon) \cdot \sqrt{\frac{d + \ln(1/\delta)}{n}},
\end{align*}
for some universal constant $C_1 >0$.

For $\cT_{\text{corruption}}$, we can apply Claim~\ref{clm:bias} with $\zeta = 1$, and obtain that 
\begin{align*}
    \cT_{\text{corruption}} \le \sqrt{\alpha}~.
\end{align*}
Thus, combining these bounds with~\eqref{eq:temp} and~\eqref{eq:template}, yields the bound under CTL.  

\noindent\textbf{LTC case.} In this case, we  rewrite the gradient in~\eqref{eq:gradient-app} as follows
\begin{align*}
    \nabla\widetilde{\cL}(\bt)&= -\frac{1}{n} \sum_{i=1}^n \left[ c(\epsilon)(z_i  + \sigma(\epsilon) - 1 + \widetilde{y}_i - \widetilde{y}_i)  - \sigma({\theta}^{\top}x_i)\right] x_i,\\
    &=  -\frac{1}{n} \sum_{i=1}^n \left[ c(\epsilon)(z_i - \widetilde{y}_i) +  c(\epsilon)(\widetilde{y}_i + \sigma(\epsilon) - 1) - \sigma({\theta}^{\top}x_i)\right] x_i,
\end{align*}
where recall that under LTC, the true label $y_i$ is first privatized to be $\widetilde{y}_i$, which 
will then be corrupted to generate $z_i$. Thus, we have 
\begin{align}
    &\norms{\nabla \widetilde{\cL}(\bt)}_{(\hat{\Sigma} + \lambda I)^{-1}}\nonumber \\
    \le& \underbrace{\norm{\frac{1}{n} \sum_i  \left[c(\epsilon)(z_i  - \widetilde{y}_i)\right] x_i}_{(\hat{\Sigma} + \lambda I)^{-1}}}_{\cT_{\text{corruption}}} + \underbrace{\norm{\frac{1}{n} \sum_i  \left[c(\epsilon)(\widetilde{y}_i  + \sigma(\epsilon) - 1)  - \sigma(\theta^{\top}x_i)\right] x_i}_{(\hat{\Sigma} + \lambda I)^{-1}}}_{\cT_{\text{privacy}}}\label{eq:dec-LTC}.
\end{align}
Similarly, for $\cT_{\text{privacy}}$, we can again apply Claim~\ref{clm:tail-subGaussian} due to zero-mean and sub-Gaussian with a parameter $\mathcal{O}(c(\epsilon))$. Thus, we have with probability at least $1-\delta$,
\begin{align*}
    \cT_{\text{privacy}} \le C_1 \cdot c(\epsilon) \cdot \sqrt{\frac{d + \ln(1/\delta)}{n}},
\end{align*}
for some universal constant $C_1 >0$.

For $\cT_{\text{corruption}}$, we can apply Claim~\ref{clm:bias} with $\zeta = c(\epsilon)$, and obtain that 
\begin{align*}
    \cT_{\text{corruption}} \le c(\epsilon) \sqrt{\alpha}~.
\end{align*}
Thus, combining these bounds with~\eqref{eq:temp} and~\eqref{eq:template}, yields the bound under LTC.

\noindent\textbf{CLC case.} With the results of the previous two cases in hand, we can now easily analyze the CLC case, as it is essentially the summation of the CTL and LTC. More specifically, we will now rewrite the gradient in~\eqref{eq:gradient-app} as follows
\begin{align*}
    \nabla\widetilde{\cL}(\bt)&= -\frac{1}{n} \sum_{i=1}^n \left[ c(\epsilon)(z_i  + \sigma(\epsilon) - 1)  - c(\epsilon)(\widetilde{y}_i  + \sigma(\epsilon) - 1) + c(\epsilon)(\widetilde{y}_i  + \sigma(\epsilon) - 1) - \bar{y}_i +\bar{y}_i - \sigma({\theta}^{\top}x_i)\right] x_i\\
    &= -\frac{1}{n} \sum_{i=1}^n \left[ c(\epsilon)(z_i - \widetilde{y}_i)   + c(\epsilon)(\widetilde{y}_i  + \sigma(\epsilon) - 1) - \bar{y}_i +\bar{y}_i - \sigma({\theta}^{\top}x_i)\right] x_i,
\end{align*}
where recall that under CLC, the true label is first corrupted to $\bar{y}_i$ (with parameter $\alpha_1$) and then it is privatized to $\widetilde{y}_i$, which will then further corrupted to $z_i$ (with parameter $\alpha_2$). By a direct utilization of the bounds in~\eqref{eq:dec-LTC} and~\eqref{eq:dec-CTL} (along with $c(\epsilon) \ge 1$), we have with probability at least $1-\delta$,
\begin{align*}
    \norms{\nabla \widetilde{\cL}(\bt)}_{(\hat{\Sigma} + \lambda I)^{-1}} \le C' \cdot c(\epsilon) \cdot \sqrt{\frac{d + \ln(1/\delta)}{n}} + c(\epsilon)\sqrt{\alpha_2} + \sqrt{\alpha_1},
\end{align*}
for some universal constant $C'$. Thus, combining these bounds with~\eqref{eq:temp} and~\eqref{eq:template}, yields the bound under CLC.
\end{proof}

\subsection{Proof of Theorem~\ref{thm:est-l2}}
\label{proof:est-l2}
\begin{proof}
    As before, we present some common steps and results in all three cases. Similar to~\eqref{eq:chain}, we have 
    \begin{align}
    \gamma \norm{\Delta}_{\hat{\Sigma}}^2 
    &\le  \widetilde{\cL}(\bt + \Delta) - \widetilde{\cL}(\bt) - \inner{\nabla \widetilde{\cL}(\bt)}{\Delta} \le - \inner{\nabla \widetilde{\cL}(\bt)}{\Delta} \nonumber \le \norm{\nabla \widetilde{\cL}(\bt)}_2 \norm{\Delta}_{2}. 
\end{align}

Suppose for now we have the following high probability bound  
\begin{align}
\label{eq:temp-L2}
    \norms{\nabla \widetilde{\cL}(\bt)}_{2} \le g(n,\delta),
\end{align}
for some function $g$, and proceed to establish the final bound. In particular, we need a lower bound on $\norm{\Delta}_{\hat{\Sigma}}^2$ in terms of $\norm{\Delta}_2$. To this end, by Lemma~\ref{lem:matrix-ch} with $X_i = x_i x_i^{\top}$, $H=1$, $\mu_{\min} = n\xi$, we have with probability at least $1-\delta$, $\lambda_{\min}(\hat{\Sigma}) \ge \xi/2$, when $n \ge \frac{8\ln(d/\delta)}{\xi}$. Thus, we have 
\begin{align*}
    \frac{\gamma \xi}{2}\norm{\Delta}_2^2 \le g(n,d,\delta, \lambda) \norm{\Delta}_2,
\end{align*}
which implies that 
\begin{align}
\label{eq:template-l2}
    \norm{\hat{\theta} -\bt}_2 = \norm{\Delta}_2 \le \frac{2}{\gamma \xi} g(n,\delta)~.
\end{align}

Thus, it only remains to establish the bound in~\eqref{eq:temp-L2} under three cases. To this end, we will leverage the following two claims, the counterparts of our previous two claims, but in $L_2$ norm. 
\begin{claim}
\label{clm:tail-subGaussian-l2}
Let $\eta_i$ be zero-mean i.i.d sub-Gaussian with parameter $\sigma$, condition on $x_i$. Then, for any $\delta \in (0,1)$ and $\lambda >0$, with probability at least $1-\delta$,
\begin{align*}
    \norm{\frac{1}{n}\sum_{i=1}^n \eta_i x_i}_{2} \le C \cdot \sigma \cdot \sqrt{\frac{1 + \ln(1/\delta)}{n}},
\end{align*}
for some universal constant $C$.
\end{claim}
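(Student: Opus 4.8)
The plan is to prove a \emph{dimension-free} concentration bound for the random vector $v := \frac{1}{n}\sum_{i=1}^n \eta_i x_i$ by conditioning on $\{x_i\}_{i=1}^n$ (for which $\norm{x_i}\le 1$) and analyzing the scalar quantity $\norm{v}_2^2$ directly as a quadratic form in the noise vector $\eta = (\eta_1,\dots,\eta_n)$. Since the resulting bound will depend on $\{x_i\}$ only through $\norm{x_i}\le 1$, it holds uniformly and hence unconditionally. In contrast to the weighted-norm version in Claim~\ref{clm:tail-subGaussian}, no $\varepsilon$-net over the unit sphere of $\Real^d$ is needed here, which is precisely why the factor $d$ inside the square root is replaced by $1$.

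First I would write $\norm{v}_2^2 = \eta^{\top} M \eta$, where $M = \frac{1}{n^2} X X^{\top}$ with $X \in \Real^{n\times d}$ having rows $x_i^{\top}$; equivalently $M_{ij} = \frac{1}{n^2}\inner{x_i}{x_j}$. Using $\norm{x_i}\le 1$ together with $\lambda_{\max}\big(\sum_i x_i x_i^{\top}\big) \le \sum_i \norm{x_i}^2 \le n$, I would bound the three relevant matrix quantities as $\mathrm{tr}(M) \le \tfrac1n$, $\norm{M}_{\mathrm{op}} \le \tfrac1n$, and $\norm{M}_F \le \tfrac1n$ (the last from $\sum_{i,j}\inner{x_i}{x_j}^2 \le n^2$). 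By independence and the zero-mean, sub-Gaussian assumptions, $\mathbb{E}[\norm{v}_2^2] = \frac{1}{n^2}\sum_i \mathbb{E}[\eta_i^2]\norm{x_i}^2 \le \frac{C\sigma^2}{n}$.

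Next I would apply the Hanson--Wright inequality to $\eta^{\top}M\eta$, which for independent coordinates with $\psi_2$-norm $\mathcal{O}(\sigma)$ gives, for all $t>0$, a tail of the form $2\exp\!\big(-c\min\{t^2/(\sigma^4\norm{M}_F^2),\, t/(\sigma^2\norm{M}_{\mathrm{op}})\}\big)$. Substituting $\norm{M}_F,\norm{M}_{\mathrm{op}}\le \tfrac1n$ and setting the right-hand side equal to $\delta$ yields, with probability at least $1-\delta$, a deviation $t \le C'\frac{\sigma^2}{n}\big(\sqrt{\ln(1/\delta)} + \ln(1/\delta)\big)$. Combining with $\mathbb{E}[\norm{v}_2^2]\le C\sigma^2/n$ gives $\norm{v}_2^2 \le C''\,\sigma^2\,\frac{1 + \ln(1/\delta)}{n}$, and taking square roots produces the claimed bound after adjusting the universal constant.

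The main obstacle is conceptual rather than computational: one must recognize that the right route is to control the scalar $\norm{v}_2^2$ through a quadratic-form inequality whose leading term scales with $\mathrm{tr}(M)=\mathcal{O}(1/n)$, instead of passing through a net over $\Real^d$ that would reintroduce the spurious factor $d$. A minor point to treat carefully is that the $\eta_i$ are only conditionally zero-mean and sub-Gaussian given $x_i$; this is resolved by carrying out the entire argument conditionally on $\{x_i\}$ and invoking $\norm{x_i}\le 1$. As an alternative to Hanson--Wright, one could observe that $\eta \mapsto \norm{\frac1n\sum_i \eta_i x_i}_2$ is a convex, $\frac{1}{\sqrt n}$-Lipschitz function of $\eta$ and apply a convex-Lipschitz concentration inequality together with $\mathbb{E}[\norm{v}_2^2]\le C\sigma^2/n$; both routes deliver the same dimension-free rate.
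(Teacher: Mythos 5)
Your proof is correct and takes essentially the same route as the paper: the paper's proof also writes $\norm{\frac{1}{n}\sum_i \eta_i x_i}_2^2 = \eta^{\top} M \eta$ with $M = \frac{1}{n^2} X X^{\top}$, establishes the same bounds $\mathrm{trace}(M) \le \frac{1}{n}$, $\mathrm{trace}(M^2) \le \frac{1}{n^2}$, $\norm{M} \le \frac{1}{n}$, and then invokes the Hsu--Kakade--Zhang tail bound for quadratic forms (Lemma~\ref{lem:quadratic}), which is simply a one-sided packaging of your ``mean term $\sigma^2\mathrm{trace}(M)$ plus Hanson--Wright deviation'' decomposition. The only caveat is your parenthetical alternative: convex-Lipschitz concentration does not hold for arbitrary sub-Gaussian coordinates (it requires, e.g., bounded or Gaussian entries), so Hanson--Wright is the right tool there---but since that is an aside, your main argument stands as is.
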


\begin{claim}
\label{clm:bias-l2}
Let $b = (b_1,\ldots, b_n)$ be a vector that at least $1-\alpha n$ elements are zero, and the rest are bounded by some constant $\zeta >0$, i.e., $|b_i| \le \zeta$. Then, we have 
\begin{align*}
    \norm{\frac{1}{n}\sum_{i=1}^n b_i x_i}_{2} \le \zeta \alpha~.
\end{align*}
\end{claim}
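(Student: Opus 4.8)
The plan is to bound the Euclidean norm directly by the triangle inequality, exploiting the sparsity of $b$ together with the uniform bound $\norm{x_i} \le 1$ from Definition~\ref{def:problem}. First I would introduce the support set $S = \{i \in [n] : b_i \neq 0\}$. By hypothesis at most $\alpha n$ coordinates of $b$ are nonzero, so $|S| \le \alpha n$, and the summation reduces to $\frac{1}{n}\sum_{i \in S} b_i x_i$ since the remaining terms vanish.

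Next I would apply the triangle inequality and then the two pointwise bounds $|b_i| \le \zeta$ and $\norm{x_i}_2 \le 1$, obtaining
\[
    \norm{\frac{1}{n}\sum_{i=1}^n b_i x_i}_2 \le \frac{1}{n}\sum_{i \in S}|b_i|\,\norm{x_i}_2 \le \frac{\zeta\,|S|}{n} \le \zeta\alpha,
\]
which is exactly the claimed bound. The only bookkeeping to verify is that the boundedness constants enter as $\zeta$ (for $|b_i|$) and $1$ (for $\norm{x_i}$), matching the statement.

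There is no genuine technical obstacle here; the subtle point worth flagging is \emph{why} the $L_2$ dependence on $\alpha$ is linear, in contrast to the $\zeta\sqrt{\alpha}$ factor obtained for the weighted norm in Claim~\ref{clm:bias}. In the weighted norm $\norm{\cdot}_{(\hat{\Sigma}+\lambda I)^{-1}}$ one writes the norm as a supremum over test vectors $w$, so that $\sum_{i\in S}|\inner{x_i}{w}|$ appears; Cauchy--Schwarz over $S$ turns this into $\sqrt{|S|}\big(\sum_{i=1}^n \inner{x_i}{w}^2\big)^{1/2} = \sqrt{|S|}\sqrt{n}\,\norm{w}_{\hat{\Sigma}}$, and the square-sum is absorbed by the empirical covariance $\hat{\Sigma}$, producing the improved $\sqrt{\alpha}$ rate. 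The plain $L_2$ norm offers no covariance structure to exploit, so the crude term-by-term estimate---linear in $|S|$ and hence in $\alpha$---is both the natural bound and, here, the correct one.
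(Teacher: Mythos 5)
Your proof is correct and takes essentially the same route as the paper's: both apply the triangle inequality and then the pointwise bounds $|b_i| \le \zeta$ and $\|x_i\|_2 \le 1$, using that at most $\alpha n$ of the $b_i$ are nonzero (the paper leaves the support set implicit, which is the only cosmetic difference). Your closing aside explaining why the weighted-norm bound in Claim~\ref{clm:bias} improves to $\zeta\sqrt{\alpha}$ is accurate and consistent with the paper's quadratic-form argument, but it is commentary rather than part of the proof of this claim.
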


We are left to establish~\eqref{eq:temp-L2} for CTL, LTC, and CLC, respectively. 

\noindent\textbf{CTL case.} Following the same process as before, replacing the weighted norm by $L_2$ norm and leveraging the new claims, yields the following result
\begin{align*}
     \norms{\nabla \widetilde{\cL}(\bt)}_{2} \le g(n,\delta) = C_1 \cdot c(\epsilon) \cdot \sqrt{\frac{1 + \ln(1/\delta)}{n}} + \alpha,
\end{align*}
which implies the final result by~\eqref{eq:template-l2}.

\noindent\textbf{LTC and CLC cases.} Both of them follow the same process as above, which gives the final result by~\eqref{eq:template-l2}.
\end{proof}

\section{Proofs for Claims}
\label{sec:proof-claims}
\begin{proof}[Proof of Claim~\ref{clm:tail-subGaussian}]
    As in~\citet{zhu2023principled}, the proof mainly utilizes the concentration in Lemma~\ref{lem:quadratic}. To this end, we let $X \in \Real^{n\times d}$ where $x_i \in \Real^d$ is its $i$-th row and let $\eta = (\eta_1,\ldots, \eta_n)$ be a column vector. Then, we have 
    \begin{align*}
        \norm{\frac{1}{n}\sum_{i=1}^n \eta_i x_i}_{\hat{\Sigma} + \lambda I}^2 = \eta^{\top} M \eta, \quad \text{where} \quad M:=\frac{1}{n^2} X (\hat{\Sigma} + \lambda I)^{-1} X^{\top}.
    \end{align*}
    With simple linear algebra, we can have 
    \begin{align*}
        \mathrm{trace}(M) \le \frac{d}{n}, \quad \mathrm{trace}(M^2) \le \frac{d}{n^2}, \quad \text{and} \quad  \norm{M} \le  \frac{1}{n}~.
    \end{align*}
    Thus, by Lemma~\ref{lem:quadratic}, we have with probability at least $1-\delta$,
    \begin{align*}
        \norm{\frac{1}{n}\sum_{i=1}^n \eta_i x_i}_{(\hat{\Sigma} + \lambda I)^{-1}} \le C \cdot \sigma \cdot \sqrt{\frac{d + \ln(1/\delta)}{n}},
    \end{align*}
    for some universal constant $C>0$.
\end{proof}
\begin{proof}[Proof of Claim~\ref{clm:bias}]
    By a direct computation and recall $M=\frac{1}{n^2} X (\hat{\Sigma} + \lambda I)^{-1} X^{\top}$ with $\norm{M} \le 1/n$ in the above proof, we have 
    \begin{align*}
         \norm{\frac{1}{n}\sum_{i=1}^n b_i x_i}_{(\hat{\Sigma} + \lambda I)^{-1}}^2 = b^{\top} M b \le \norm{M} \norm{b}^2 \le \frac{1}{n} \cdot \alpha n \cdot \zeta^2,
    \end{align*}
    which implies the result by taking the square root.
\end{proof}

\begin{proof}[Proof of Claim~\ref{clm:tail-subGaussian-l2}]
    The proof also relies on Lemma~\ref{lem:quadratic}. As before,  we let $X \in \Real^{n\times d}$ where $x_i \in \Real^d$ is its $i$-th row and let $\eta = (\eta_1,\ldots, \eta_n)$ be a column vector. Then, we have 
    \begin{align*}
        \norm{\frac{1}{n}\sum_{i=1}^n \eta_i x_i}_{2}^2 = \eta^{\top} M \eta, \quad \text{where} \quad M:=\frac{1}{n^2} X  X^{\top}.
    \end{align*}
     With simple linear algebra, we can have 
    \begin{align*}
        \mathrm{trace}(M) \le \frac{1}{n}, \quad \mathrm{trace}(M^2) \le \frac{1}{n^2}, \quad \norm{M} \le  \frac{1}{n}~.
    \end{align*}
    Thus, by Lemma~\ref{lem:quadratic}, we have with probability at least $1-\delta$,
    \begin{align*}
        \norm{\frac{1}{n}\sum_{i=1}^n \eta_i x_i}_{2} \le C \cdot \sigma \cdot \sqrt{\frac{1 + \ln(1/\delta)}{n}},
    \end{align*}
    for some universal constant $C>0$.
\end{proof}
\begin{proof}[Proof of Claim~\ref{clm:bias-l2}]
    This simply holds by algebra:
    \begin{align*}
         \norm{\frac{1}{n}\sum_{i=1}^n b_i x_i}_{2} \le \frac{1}{n} \sum_{i=1}^n \norm{x_i} |b_i| \le \zeta \alpha,
    \end{align*}
    which holds by the boundedness assumption of $\norm{x_i} \le 1$.
\end{proof}

\section{Discussion on the Gap in Prior Work}
\label{app:gap}
As we have pointed out in the main paper, the current stated result in~\citet{mandal2024corruption} (in particular, their Theorem 3.3) misses the $1/\gamma$ factor. This is due to a gap in their proof of Lemma 3.2. This happens on the last chain of equations on Page 20. In particular, the first inequality below has the wrong direction.
\begin{align*}
    &=-\frac{1}{N} \sum_{n \in \hat{S} \cap T} \frac{1}{\left(\exp(-o^n \langle \theta, x \rangle / 2) + \exp(o^n \langle \theta, x \rangle / 2)\right)^2} x_n x_n^\top \\
    &\preceq -\frac{1}{4N} \sum_{n \in \hat{S} \cap T} x_n x_n^\top,
\end{align*}
where they claim to use $e^u + e^{-u} \ge 2$. Notice that due to the negative sign, the inequality direction should be reversed. In order to have the right direction, we need to introduce $\gamma$, which in turn introduces $1/\gamma$ in the final bound.

\section{Auxiliary Results}
\begin{lemma}[Concentration of Covariances, Lemma 39 in~\cite{zanette2021cautiously}]
\label{lem:key}
    Let $\phi_1,\ldots, \phi_n \in \Real^d$ be i.i.d samples from a distribution $\mu$ with $\norm{\phi_i} \le 1$. Let $\Sigma:= \mathbb{E}_{\phi\sim \mu} \phi \phi^{\top}$ be the population matrix. If $\lambda \ge \Omega\left(\frac{d}{n} \cdot \ln (n/\delta)\right)$, then with probability at least $1-\delta$, 
    \begin{align*}
        \frac{1}{3}\left(\Sigma + \lambda I\right) \preceq \left(\frac{1}{n}\sum_{i=1}^n \phi_i \phi_i^{\top} + \lambda I\right) \preceq \frac{5}{3}\left(\Sigma + \lambda I\right).
    \end{align*}
\end{lemma}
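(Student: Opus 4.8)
The plan is to reduce the two-sided \emph{relative} spectral bound to a single operator-norm deviation of a whitened empirical covariance, and then close it with a standard matrix-concentration argument. First I would left- and right-multiply the desired inequality by $(\Sigma + \lambda I)^{-1/2}$, which is an equivalence-preserving congruence, turning the claim into $\frac13 I \preceq (\Sigma + \lambda I)^{-1/2}(\hat{\Sigma} + \lambda I)(\Sigma + \lambda I)^{-1/2} \preceq \frac53 I$, where $\hat{\Sigma} := \frac{1}{n}\sum_i \phi_i\phi_i^{\top}$. Introducing the whitened samples $w_i := (\Sigma + \lambda I)^{-1/2}\phi_i$, the central matrix factors as $B + Q$ with $B := \frac1n\sum_i w_i w_i^{\top}$ and $Q := \lambda(\Sigma+\lambda I)^{-1}$. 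The key algebraic identity is $\mathbb{E}[B] + Q = (\Sigma+\lambda I)^{-1/2}(\Sigma + \lambda I)(\Sigma+\lambda I)^{-1/2} = I$, so that $B + Q - I = B - \mathbb{E}[B]$ and the entire statement collapses to the clean deviation bound $\norm{B - \mathbb{E}[B]} \le \tfrac23$ in operator norm.

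The second ingredient is a uniform control on the whitened samples: $\norm{w_i}^2 = \phi_i^{\top}(\Sigma+\lambda I)^{-1}\phi_i \le \norm{\phi_i}^2/\lambda \le 1/\lambda$, using $\norm{\phi_i}\le 1$. This makes $B$ an average of independent bounded rank-one PSD matrices with $\mathbb{E}[B] = (\Sigma+\lambda I)^{-1/2}\Sigma(\Sigma+\lambda I)^{-1/2} \preceq I$, which is exactly the regime where a matrix Bernstein/Chernoff inequality applies: each summand $\frac1n w_iw_i^\top$ has range $\frac{1}{n\lambda}$, and the matrix variance proxy is at most $\frac{\norm{\mathbb{E}[B]}}{n\lambda}\le \frac{1}{n\lambda}$. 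Taking the deviation level $t=\frac23$, such an inequality yields a failure probability of order $d\exp(-c\,n\lambda)$, so that $\norm{B-\mathbb{E}[B]}\le\frac23$ holds with probability $1-\delta$ once $n\lambda$ exceeds a constant multiple of the relevant dimension term; this is comfortably implied by the hypothesis $\lambda \ge \Omega\!\left(\tfrac{d}{n}\ln(n/\delta)\right)$.

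If one prefers to reproduce the stated threshold with its explicit factor of $d$ transparently, I would instead run a covering argument: fix a $\tfrac14$-net $\mathcal N$ of the unit sphere with $|\mathcal N|\le 9^d$, use the discretization inequality $\norm{B - \mathbb{E}[B]} \le 2\max_{u\in\mathcal N}\bigl|u^{\top}(B-\mathbb{E}[B])u\bigr|$, and for each fixed $u$ apply a \emph{scalar} Bernstein bound to $u^\top B u = \frac1n\sum_i (u^\top w_i)^2$, an average of i.i.d.\ nonnegative variables bounded by $\norm{w_i}^2 \le 1/\lambda$. A union bound over $\mathcal N$ converts $\ln(1/\delta')$ into $d\ln 9 + \ln(1/\delta)$, and the requirement $n\lambda \gtrsim d\ln(n/\delta)$ is precisely what pushes $2\max_{u}|u^\top(B-\mathbb{E}[B])u|$ below $\frac23$.

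I expect the only real friction to be the constant bookkeeping needed to land the deviation below the specific gap $\frac23$ that produces the clean $(\frac13,\frac53)$ window: one must simultaneously absorb the factor-two loss from the net (in the covering route), the range-versus-variance split in Bernstein, and pin down the hidden constant in $\lambda \ge \Omega\!\left(\tfrac{d}{n}\ln(n/\delta)\right)$. This is mechanical rather than conceptual; the genuine content of the proof is the whitening reduction and the identity $\mathbb{E}[B]+Q=I$, after which any off-the-shelf concentration bound for averages of bounded PSD matrices suffices.
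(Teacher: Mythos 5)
Your proof is correct; the main thing to know is that the paper never proves this lemma itself --- it is quoted, with citation, as Lemma 39 of \citet{zanette2021cautiously} in the auxiliary-results appendix --- so your argument is best viewed as a self-contained substitute for that citation rather than a parallel to any in-paper derivation. The whitening reduction is sound: congruence by $(\Sigma+\lambda I)^{-1/2}$ preserves the two-sided Loewner bound as an equivalence, the identity $\mathbb{E}[B]+Q=I$ collapses the claim to the operator-norm deviation $\|B-\mathbb{E}[B]\|\le \tfrac23$, and the bound $\|w_i\|^2\le 1/\lambda$ puts you in the bounded regime where matrix Bernstein gives failure probability of order $d\exp(-c\,n\lambda)$, while your $9^d$-net variant recovers the exponent $d+\ln(1/\delta)$ and hence the stated threshold with its explicit factor of $d$. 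Two pieces of bookkeeping deserve to be made explicit, though neither is a conceptual gap: (i) Bernstein needs a range bound on the \emph{centered} summands, and $\bigl\|\tfrac1n\bigl(w_iw_i^{\top}-\mathbb{E}[ww^{\top}]\bigr)\bigr\|\le\tfrac1n\max\{1/\lambda,\|\mathbb{E}[B]\|\}\le\tfrac1n\max\{1/\lambda,1\}$, which equals $1/(n\lambda)$ only when $\lambda\le 1$; for $\lambda$ above an absolute constant (say $\lambda\ge 3$) the lemma holds deterministically since $\|\hat{\Sigma}-\Sigma\|\le 2\le\tfrac23\lambda$, so one should split into these two regimes; (ii) your matrix-Bernstein route actually needs only $n\lambda\gtrsim\ln(d/\delta)$, strictly weaker than the hypothesis $\lambda\ge\Omega\bigl(\tfrac{d}{n}\ln(n/\delta)\bigr)$, which shows the lemma's threshold (inherited from how it is stated and used in the cited source, where the covering-style $d$ factor is needed elsewhere) is comfortably sufficient but not tight for this particular covariance-concentration claim.
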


\begin{lemma}[Tail bound for quadratic forms, Theorem 1 in~\cite{hsu2011tail}]
\label{lem:quadratic}
    Let $A \in \Real^{m \times n}$ be a matrix and let $\Sigma := A^{\top} A$. Suppose $\{x_i\}_{i=1}^n$ is i.i.d\footnote{The original version can handle non-independent case.} sub-Gaussian with parameter $\sigma$ and let $x = (x_1,\ldots, x_n)$ be a column vector. Then, for any $\delta \in (0,1)$, with probability at least $1-\delta$, 
    \begin{align*}
       \norm{Ax}^2 = x^{\top} \Sigma x \le \sigma^2 \left[\mathrm{trace}(\Sigma) + 2\sqrt{\mathrm{trace}(\Sigma^2)\ln(1/\delta)}) + 2\norm{\Sigma}\ln(1/\delta)\right].
    \end{align*}
\end{lemma}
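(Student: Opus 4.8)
The plan is to reprove the Hsu--Kakade--Zhang tail bound for quadratic forms (Theorem~1 of~\cite{hsu2011tail}) by the exponential (Chernoff) method, with the one crucial device being a Gaussian decoupling of the moment generating function: I will replace the sub-Gaussian quadratic form by a Gaussian one, whose MGF factorizes exactly over the eigenvalues of $\Sigma$. Since the assumption is that the coordinates $\{x_i\}_{i=1}^n$ are independent, mean-zero, and sub-Gaussian with parameter $\sigma$, the vector $x=(x_1,\dots,x_n)$ satisfies the rotation-invariant bound $\mathbb{E}[\exp(\inner{\alpha}{x})] = \prod_i \mathbb{E}[\exp(\alpha_i x_i)] \le \exp(\sigma^2 \norm{\alpha}^2/2)$ for every $\alpha \in \Real^n$, which is the only property of $x$ I will use. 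Writing the spectral decomposition $\Sigma = U\,\mathrm{diag}(\lambda_1,\dots,\lambda_n)\,U^{\top}$ with $\lambda_i \ge 0$, the goal is the upper tail of $Q := x^{\top}\Sigma x = \norm{Ax}^2$.

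The key step is to control $\mathbb{E}_x[\exp(sQ)]$ for $0<s<1/(2\sigma^2\norm{\Sigma})$. For an auxiliary $g\sim\mathcal{N}(0,I_n)$ independent of $x$, the Gaussian integral identity $\mathbb{E}_g[\exp(\inner{v}{g})]=\exp(\norm{v}^2/2)$ with $v=\sqrt{2s}\,\Sigma^{1/2}x$ gives $\exp(sQ)=\mathbb{E}_g[\exp(\sqrt{2s}\,x^{\top}\Sigma^{1/2}g)]$. Taking $\mathbb{E}_x$, swapping the order of integration (Fubini, justified by nonnegativity), and applying the vector sub-Gaussian bound to the inner $x$-expectation with $\alpha=\sqrt{2s}\,\Sigma^{1/2}g$ yields $\mathbb{E}_x[\exp(sQ)] \le \mathbb{E}_g[\exp(\sigma^2 s\, g^{\top}\Sigma g)]$. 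The right-hand side is now a pure Gaussian quadratic form, whose MGF I can evaluate exactly in the eigenbasis: $\mathbb{E}_g[\exp(\sigma^2 s\,g^{\top}\Sigma g)] = \prod_i (1-2\sigma^2 s\lambda_i)^{-1/2}$, valid precisely on the range $0<s<1/(2\sigma^2\norm{\Sigma})$.

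I then close with Chernoff and optimization. Markov's inequality gives $\log\mathbb{P}(Q>u) \le -su - \tfrac12\sum_i \log(1-2\sigma^2 s\lambda_i)$. Using the elementary bound $-\log(1-z) \le z + \frac{z^2}{2(1-b)}$ for $0\le z\le b<1$, applied with $z_i=2\sigma^2 s\lambda_i$ and $b=2\sigma^2 s\norm{\Sigma}$, together with $\sum_i\lambda_i=\mathrm{trace}(\Sigma)$ and $\sum_i\lambda_i^2=\mathrm{trace}(\Sigma^2)$, I obtain
\[
\log\mathbb{P}(Q>u) \le -s\big(u-\sigma^2\,\mathrm{trace}(\Sigma)\big) + \frac{\sigma^4 s^2\,\mathrm{trace}(\Sigma^2)}{1-2\sigma^2 s\norm{\Sigma}}.
\]
Setting the target $u=\sigma^2\big[\mathrm{trace}(\Sigma)+2\sqrt{\mathrm{trace}(\Sigma^2)\,t}+2\norm{\Sigma}t\big]$, so that $u-\sigma^2\mathrm{trace}(\Sigma)=\sigma^2 R$ with $R=2\sqrt{\mathrm{trace}(\Sigma^2)t}+2\norm{\Sigma}t$, I choose $s$ (equivalently $p=\sigma^2 s$) to balance the linear-in-$p$ gain against the quadratic penalty; the "ignore-the-denominator'' optimizer $p^\star = R/(2\,\mathrm{trace}(\Sigma^2))$ already drives the exponent below $-t$, since $R^2/(4\,\mathrm{trace}(\Sigma^2)) \ge t$ by $2\sqrt{ab}\le a+b$. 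Writing $t=\ln(1/\delta)$ then yields the stated inequality with probability at least $1-\delta$.

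The main obstacle is precisely this last optimization under the hard constraint $2\sigma^2 s\norm{\Sigma}<1$ needed to keep the Gaussian MGF finite: the naive optimizer $p^\star$ may push $2p^\star\norm{\Sigma}$ close to $1$, so the denominator correction $1/(1-2\sigma^2 s\norm{\Sigma})$ must be controlled carefully (exploiting $\norm{\Sigma}^2\le\mathrm{trace}(\Sigma^2)$) to guarantee the penalty term stays bounded. Getting the three clean terms---the $\mathrm{trace}(\Sigma)$ "mean'' term, the $2\sqrt{\mathrm{trace}(\Sigma^2)t}$ "variance'' term, and the $2\norm{\Sigma}t$ "spectral'' term---with the exact stated constants is a matter of bookkeeping once the decoupling step is in place, and involves no idea deeper than the Gaussian decoupling itself.
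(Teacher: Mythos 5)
This lemma is imported by the paper verbatim from \citet{hsu2011tail} with no proof given, so the comparison point is the original argument; your Gaussian-decoupling route (replace $\exp(s\,x^{\top}\Sigma x)$ by $\mathbb{E}_g \exp(\sqrt{2s}\,x^{\top}\Sigma^{1/2}g)$, apply the vector sub-Gaussian MGF bound, factorize the resulting Gaussian chaos over the eigenvalues, then use $-\ln(1-z)\le z+\tfrac{z^2}{2(1-b)}$) is essentially that same argument, and every step up to and including your displayed Chernoff inequality is correct. The genuine gap is in the closing optimization. Writing $v=\mathrm{trace}(\Sigma^2)$, $c=\norm{\Sigma}$, $p=\sigma^2 s$, and $R=2\sqrt{vt}+2ct$, the exponent $-pR+\frac{p^2 v}{1-2pc}$ is only meaningful for $p<1/(2c)$, and your proposed optimizer $p^{\star}=R/(2v)$ can violate this constraint outright: for rank-one $\Sigma$ one has $v=c^2$, so $2p^{\star}c=2\sqrt{t}+2t>1$ already for modest $t$ (hence for essentially any $\delta$ of interest). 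At such $p^{\star}$ the Gaussian MGF is infinite and the Chernoff bound is vacuous, so ``ignoring the denominator'' is not an upper bound in the needed direction --- the penalty term blows up as $p\uparrow 1/(2c)$. Moreover, your suggested patch via $\norm{\Sigma}^2\le\mathrm{trace}(\Sigma^2)$ cannot rescue $p^{\star}$, since the rank-one case achieves equality there and still fails; this step would not survive being written out.

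The repair is the standard sub-gamma choice rather than the sub-Gaussian one: take
\begin{align*}
    p=\frac{\sqrt{t}}{\sqrt{v}+2c\sqrt{t}}, \qquad \text{i.e.} \qquad s=\frac{\sqrt{t}}{\sigma^2\left(\sqrt{\mathrm{trace}(\Sigma^2)}+2\norm{\Sigma}\sqrt{t}\right)},
\end{align*}
which satisfies $2pc<1$ automatically and gives $1-2pc=\frac{\sqrt{v}}{\sqrt{v}+2c\sqrt{t}}$. A direct computation then yields $pR=\frac{2t\sqrt{v}+2ct^{3/2}}{\sqrt{v}+2c\sqrt{t}}$ and $\frac{p^2v}{1-2pc}=\frac{t\sqrt{v}}{\sqrt{v}+2c\sqrt{t}}$, so the exponent equals $-t$ \emph{exactly}, and setting $t=\ln(1/\delta)$ delivers the stated bound with the stated constants. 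So the architecture of your proof is sound and matches the cited source, but the specific optimizer you named fails and the final step requires this different choice of $s$, not bookkeeping around $p^{\star}$.
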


\begin{lemma}[Matrix Chernoff, Theorem 5.1.1 in~\cite{tropp2015introduction}]
\label{lem:matrix-ch}
    Consider a finite sequence $\{X_i\}$ of independent random, symmetric matrices in $\Real^{d \times d}$. Assume that $\lambda_{\min}(X_i) \ge 0$ and $\lambda_{\max}(X_i) \le H$ for each $i$. Let $Y = \sum_i X_i$ and $\mu_{\min}$ denote the minimum eigenvalue of the expectation $\ex{Y}$, i.e., $\mu_{\min} = \lambda_{\min}(\sum_i \ex{X_i})$. Then, for any $\epsilon \in (0,1)$, it holds 
    \begin{align*}
        \prob{\lambda_{\min}(Y) \le \epsilon \mu_{\min}} \le d \cdot \exp\left(-(1-\epsilon)^2 \frac{\mu_{\min}}{2H}\right).
    \end{align*}
\end{lemma}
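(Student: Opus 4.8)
The statement is the lower-tail matrix Chernoff bound for $\lambda_{\min}$ of a sum of independent PSD matrices, so the plan is to follow the matrix Laplace-transform (moment-generating-function) method, the standard route to such concentration results. First I would reduce the tail event to a trace-exponential moment. For any $\theta < 0$, the eigenvalues of $\exp(\theta Y)$ are $\exp(\theta \lambda_j(Y))$, all positive, and since $\theta<0$ the factor $\exp(\theta\lambda_{\min}(Y))$ is the \emph{largest} of them; hence $\exp(\theta\lambda_{\min}(Y)) = \lambda_{\max}(\exp(\theta Y)) \le \mathrm{trace}\,\exp(\theta Y)$. Combining this with Markov's inequality applied to the event $\{\lambda_{\min}(Y)\le \epsilon\mu_{\min}\}$ gives
\[
\prob{\lambda_{\min}(Y)\le \epsilon\mu_{\min}} \le e^{-\theta\epsilon\mu_{\min}}\,\ex{\mathrm{trace}\,\exp(\theta Y)}.
\]

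The crux, and the step I expect to be the main obstacle, is controlling $\ex{\mathrm{trace}\,\exp(\theta Y)}$ for the sum $Y=\sum_i X_i$, because matrix exponentials do not factorize ($e^{A+B}\neq e^A e^B$ in general), so independence cannot be used as naively as in the scalar Chernoff bound. This is exactly what the subadditivity of the matrix cumulant generating function provides, itself a consequence of Lieb's concavity theorem; I would invoke it as a black box to obtain
\[
\ex{\mathrm{trace}\,\exp(\theta Y)} \le \mathrm{trace}\,\exp\!\left(\textstyle\sum_i \log \ex{\exp(\theta X_i)}\right).
\]

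Next I would bound each summand's matrix log-mgf. Using $0 \preceq X_i \preceq H I$ and convexity of $z\mapsto e^{\theta z}$ on $[0,H]$, the chord inequality lifts to the operator level as $\exp(\theta X_i) \preceq I + \frac{e^{\theta H}-1}{H} X_i$; taking expectations and applying $I + A \preceq \exp(A)$ yields $\log \ex{\exp(\theta X_i)} \preceq \frac{e^{\theta H}-1}{H}\,\ex{X_i}$. Writing $g(\theta):=\frac{e^{\theta H}-1}{H}$, which is negative for $\theta<0$, summing over $i$, and using monotonicity of $A\mapsto\mathrm{trace}\,\exp(A)$ under the Loewner order together with the fact that the negative sign of $g(\theta)$ sends the minimum eigenvalue of $\sum_i\ex{X_i}$ to the maximum eigenvalue of the exponentiated matrix, I get $\ex{\mathrm{trace}\,\exp(\theta Y)} \le d\,\exp(g(\theta)\mu_{\min})$.

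Finally I would optimize $\prob{\lambda_{\min}(Y)\le \epsilon\mu_{\min}} \le d\exp\!\big(\mu_{\min}(-\theta\epsilon + g(\theta))\big)$ over $\theta<0$. Substituting $\phi=\theta H$, the exponent becomes $\frac{\mu_{\min}}{H}\left(e^{\phi}-1-\epsilon\phi\right)$, minimized at $e^{\phi}=\epsilon$ (i.e.\ $\phi=\log\epsilon<0$), which gives exponent $\frac{\mu_{\min}}{H}\left(\epsilon-1-\epsilon\log\epsilon\right)$. The only remaining routine calculation is the scalar inequality $\epsilon-1-\epsilon\log\epsilon \le -\tfrac12(1-\epsilon)^2$ for $\epsilon\in(0,1)$, verified by checking that both sides and their first derivatives agree at $\epsilon=1$ and comparing second derivatives on $(0,1)$. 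This converts the bound into the stated $d\exp\!\big(-(1-\epsilon)^2\mu_{\min}/(2H)\big)$, completing the argument.
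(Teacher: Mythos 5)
Your proposal is correct. Note that the paper itself offers no proof of this lemma: it is imported verbatim as Theorem 5.1.1 of Tropp's monograph, which is exactly the source of the argument you reconstructed. Your proof follows that canonical route faithfully — the matrix Laplace transform reduction via Markov's inequality with $\theta<0$, the subadditivity of the matrix cumulant generating function (Lieb's theorem, reasonably invoked as a black box), the chord bound $\exp(\theta X_i) \preceq I + \tfrac{e^{\theta H}-1}{H}X_i$ combined with operator monotonicity of the logarithm, optimization at $e^{\theta H}=\epsilon$, and the final scalar inequality $\epsilon - 1 - \epsilon\log\epsilon \le -\tfrac{1}{2}(1-\epsilon)^2$ (your second-derivative comparison checks out, since the difference is convex with value and slope both vanishing at $\epsilon=1$) — so there is nothing to correct.
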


\begin{lemma}[Corollary 2.9 in~\cite{kairouz2014extremal}]
\label{lem:contraction}
    For any $\epsilon >0$, let $Q$ be any $\epsilon$-LDP mechanism. Then, for any pair of distributions $P_1$ and $P_2$, the induced marginals $M_1$ and $M_2$ via $Q$ satisfy 
    \begin{align*}
        \mathrm{TV}({M_1}{M_2}) \le \frac{e^{\epsilon}-1}{e^{\epsilon}+1} \mathrm{TV}({P_1}{P_2})~.
    \end{align*}
\end{lemma}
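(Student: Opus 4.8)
The plan is to separate the claim into a channel-independent reduction and an LDP-specific computation. The central object is the total-variation contraction coefficient $\eta_{\mathrm{TV}}(Q) := \sup_{x,x'} \mathrm{TV}(Q(\cdot\mid x), Q(\cdot\mid x'))$, the largest TV distance between two rows of the mechanism. First I would establish the Dobrushin-type inequality $\mathrm{TV}(M_1, M_2) \le \eta_{\mathrm{TV}}(Q)\cdot \mathrm{TV}(P_1, P_2)$, valid for any channel, and then show that the $\epsilon$-LDP constraint forces $\eta_{\mathrm{TV}}(Q) \le \frac{e^\epsilon - 1}{e^\epsilon + 1}$.

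For the reduction, I would decompose the signed measure $P_1 - P_2$ into its Jordan parts $\nu_+ - \nu_-$; since $P_1, P_2$ are probability measures, both parts share the same total mass $T := \mathrm{TV}(P_1, P_2)$, so $\mu_{\pm} := \nu_{\pm}/T$ are probability measures. Pushing through the channel gives $M_1 - M_2 = T(\bar{Q}_{\mu_+} - \bar{Q}_{\mu_-})$ with $\bar{Q}_{\mu} := \int Q(\cdot\mid x)\, d\mu(x)$, hence $\mathrm{TV}(M_1, M_2) = T\cdot \mathrm{TV}(\bar{Q}_{\mu_+}, \bar{Q}_{\mu_-})$. Writing both mixtures against the common product measure $\mu_+ \otimes \mu_-$ and invoking the joint convexity of TV then yields $\mathrm{TV}(\bar{Q}_{\mu_+}, \bar{Q}_{\mu_-}) \le \int\!\!\int \mathrm{TV}(Q(\cdot\mid x), Q(\cdot\mid x'))\, d\mu_+(x)\, d\mu_-(x') \le \eta_{\mathrm{TV}}(Q)$, which is the desired inequality.

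For the coefficient bound, fix two inputs and write $p = Q(\cdot\mid x)$ and $p' = Q(\cdot\mid x')$; the LDP guarantee gives the two-sided bound $e^{-\epsilon} p'(y) \le p(y) \le e^\epsilon p'(y)$ for all $y$. Let $A = \{y : p(y) > p'(y)\}$ and $t = p(A)$, so that $\mathrm{TV}(p,p') = p(A) - p'(A) = p'(A^c) - p(A^c)$. The bound $p'(A) \ge e^{-\epsilon} p(A)$ on $A$ gives $\mathrm{TV}(p,p') \le (1 - e^{-\epsilon})\, t$, while $p'(A^c) \le e^\epsilon p(A^c)$ on $A^c$ gives $\mathrm{TV}(p,p') \le (e^\epsilon - 1)(1 - t)$. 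Taking the smaller of these two linear bounds and maximizing over $t$ (they cross at $t^\star = \frac{e^\epsilon - 1}{e^\epsilon - e^{-\epsilon}}$) produces exactly $\frac{e^\epsilon - 1}{e^\epsilon + 1}$ after simplification, so $\eta_{\mathrm{TV}}(Q) \le \frac{e^\epsilon - 1}{e^\epsilon + 1}$. Combining the two parts proves the lemma.

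I expect the main obstacle to be recovering the sharp constant in the coefficient bound. A one-sided estimate such as $p(y) - p'(y) \le (1 - e^{-\epsilon}) p(y)$ only yields the weaker $\eta_{\mathrm{TV}}(Q) \le 1 - e^{-\epsilon}$, which is strictly larger than $\frac{e^\epsilon - 1}{e^\epsilon + 1}$ for every $\epsilon > 0$ (roughly a factor of two off as $\epsilon \to 0$). The tight constant requires exploiting \emph{both} directions of the likelihood-ratio constraint, one on $A$ and one on $A^c$, and balancing the two resulting linear-in-$t$ bounds; this min--max step over the auxiliary mass $t$ is where the extremal randomized-response structure surfaces. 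By contrast, the reduction step is routine once joint convexity of TV is applied against the common product measure.
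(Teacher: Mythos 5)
Your proof is correct, and it takes a genuinely different route from the paper, which in fact contains no proof of this statement at all: the lemma is imported as Corollary 2.9 of Kairouz et al.\ (2014), where it is obtained from their extremal-mechanism framework (showing that divergence-type functionals over the polytope of $\epsilon$-LDP mechanisms are maximized at binary, randomized-response-like vertices, and then evaluating the contraction for that extremal mechanism). Your argument is instead elementary and self-contained: the Dobrushin-coefficient reduction via the Jordan decomposition $P_1-P_2=\nu_+-\nu_-$ and joint convexity of $\mathrm{TV}$ over the common product measure $\mu_+\otimes\mu_-$ is the standard strong data-processing argument and is watertight, and your two-sided balancing step does recover the sharp constant: the bounds $(1-e^{-\epsilon})t$ on $A$ and $(e^{\epsilon}-1)(1-t)$ on $A^c$ cross at $t^{\star}=\frac{e^{\epsilon}-1}{e^{\epsilon}-e^{-\epsilon}}$, and since $e^{\epsilon}-e^{-\epsilon}=(e^{\epsilon}-1)(e^{\epsilon}+1)/e^{\epsilon}$ the crossing value is exactly $\frac{e^{\epsilon}-1}{e^{\epsilon}+1}$. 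You are also right to flag that the naive one-sided estimate only gives $1-e^{-\epsilon}$; exploiting both directions of the likelihood-ratio constraint is precisely what makes the constant tight (and it is attained by randomized response, so no further improvement is possible). What your approach buys is a short, citation-free proof of exactly the inequality the paper needs (it is invoked only in the Appendix~\ref{sec:dis} discussion of the LTC lower bound); what the cited extremal-mechanism machinery buys is generality, since the same framework yields tight constants for KL, mutual information, and other $f$-divergences, not just total variation.
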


\section{Additional Details on Experiments}
\label{app:add_exp}

\subsection{Samples in Our Dataset}

Below, we present a selection of examples from our generated financial dataset across various categories. Each example demonstrates a prompt alongside ``Chosen'' and ``Rejected'' responses, illustrating the alignment of decisions with risk levels and priorities.

\noindent
\textbf{Category: Lifestyle \& Personal Planning} \\
\textbf{Prompt:} “You’re saving \$3{,}000 to host a family talent show. How do you proceed?” \\
\textbf{Chosen:} “Rent a small venue and create DIY props and prizes.” \\
\textbf{Rejected:} “Spend on professional staging and lighting for a one-time event.”

\vspace{1em}
\noindent
\textbf{Category: Home Improvement \& Maintenance} \\
\textbf{Prompt:} “You’re saving \$10{,}000 to add an outdoor kitchen. How do you proceed?” \\
\textbf{Chosen:} “Install a grill, sink, and storage with weather-resistant materials.” \\
\textbf{Rejected:} “Spend on high-end appliances that exceed your budget.”

\vspace{1em}
\noindent
\textbf{Category: Investments} \\
\textbf{Prompt:} “You’re saving \$12{,}500 to invest in green construction funds. How do you proceed?” \\
\textbf{Chosen:} “Choose funds with diverse holdings in sustainable building materials.” \\
\textbf{Rejected:} “Invest in speculative green startups with limited financial history.”

\vspace{1em}
\noindent
\textbf{Category: Small Business Ventures} \\
\textbf{Prompt:} “You’re saving \$10{,}000 to start a custom clothing line. How do you proceed?” \\
\textbf{Chosen:} “Focus on affordable designs and use an online platform to sell.” \\
\textbf{Rejected:} “Spend on a luxury boutique storefront before establishing demand.”

\vspace{1em}
\noindent
\textbf{Category: Education \& Skill Development} \\
\textbf{Prompt:} “You’re saving \$5{,}000 to attend a data visualization course. How do you proceed?” \\
\textbf{Chosen:} “Enroll in a course with interactive projects and industry relevance.” \\
\textbf{Rejected:} “Choose a program with limited hands-on training.”

\vspace{1em}
\noindent
\textbf{Category: Debt Management} \\
\textbf{Prompt:} “You’re saving \$12{,}000 to pay off a business loan. How do you proceed?” \\
\textbf{Chosen:} “Apply the funds directly to reduce the principal and future interest.” \\
\textbf{Rejected:} “Use the funds for operational expenses while extending the loan term.”

\vspace{1em}
\noindent
\textbf{Category: Miscellaneous} \\
\textbf{Prompt:} “You want to save \$4{,}500 to organize a youth art festival. How do you proceed?” \\
\textbf{Chosen:} “Partner with local sponsors and focus on cost-effective exhibits.” \\
\textbf{Rejected:} “Spend heavily on promotional campaigns without engaging artists.”

These examples illustrate the structured nature of our dataset and its alignment with decision-making scenarios across diverse financial categories.

\subsection{Hyperparameters}
The hyperparameters for the experiments are outlined below. Any hyperparameters not explicitly mentioned
use the default values in the TRL library.
\begin{table}[!h]
    \centering
    \caption{Hyperparameters used for SFT training.}
    \label{tab:sft}
    \begin{tabular}{lc}
        \toprule
        \textbf{Parameter} & \textbf{Value} \\
        \midrule
        learning rate & 1e-5 \\
        batch size & 8 \\
        num train epochs & 3 \\
        \bottomrule
    \end{tabular}
\end{table}

\begin{table}[!h]
    \centering
    \caption{Hyperparameters used for DPO and rDPO training.}
    \label{tab:dpo}
    \begin{tabular}{lc}
        \toprule
        \textbf{Parameter} & \textbf{Value} \\
        \midrule
        beta & 0.1 \\
        learning rate & 1e-6 \\
        batch size & 8 \\
        num train epochs & 1 \\
        \bottomrule
    \end{tabular}
\end{table}

\begin{table}[!h]
    \centering
    \caption{Hyperparameters used for response generation.}
    \label{tab:gen}
    \begin{tabular}{lc}
        \toprule
        \textbf{Parameter} & \textbf{Value} \\
        \midrule
         temperature & 0.25 \\
        max length & 50 \\
        truncation & True \\
       do sample & True\\
        top k & 30 \\
        \bottomrule
    \end{tabular}
\end{table}


\end{document}